\documentclass{article} 
\usepackage{iclr2026_conference,times}

\usepackage{amsmath,amsfonts,bm}

\def\eqref#1{equation~\ref{#1}}

\def\1{\bm{1}}

\DeclareMathAlphabet{\mathsfit}{\encodingdefault}{\sfdefault}{m}{sl}
\SetMathAlphabet{\mathsfit}{bold}{\encodingdefault}{\sfdefault}{bx}{n}

\usepackage{hyperref}
\usepackage{url}
\usepackage{graphicx} 
\usepackage{subcaption}
\usepackage{booktabs}       
\usepackage{multirow}
\usepackage{wrapfig}
\usepackage{makecell}
\usepackage{amsthm}
\newtheorem{lemma}{\textbf{Lemma}}

\title{Towards Multimodal Time Series Anomaly Detection with Semantic Alignment and Condensed Interaction}

\author{Shiyan Hu\textsuperscript{\rm 1}, 
Jianxin Jin\textsuperscript{\rm 1}, 
Yang Shu\textsuperscript{\rm 1}, 
Peng Chen\textsuperscript{\rm 1}, 
Bin Yang\textsuperscript{\rm 1}, 
Chenjuan Guo\textsuperscript{\rm 1}\thanks{Corresponding authors}
 \\
\textsuperscript{\rm 1}East China Normal University\\
    \texttt{\{syhu,jxjin,pchen\}@stu.ecnu.edu.cn}\\
    \texttt{\{yshu,byang,cjguo\}@dase.ecnu.edu.cn}
}

\iclrfinalcopy 
\begin{document}

\maketitle

\begin{abstract}
Time series anomaly detection plays a critical role in many dynamic systems. Despite its importance, previous approaches have primarily relied on unimodal numerical data, overlooking the importance of complementary information from other modalities. In this paper, we propose a novel multimodal time series anomaly detection model (MindTS) that focuses on addressing two key challenges: (1) how to achieve semantically consistent alignment across heterogeneous multimodal data, and (2) how to filter out redundant modality information to enhance cross-modal interaction effectively. To address the first challenge, we propose Fine-grained Time-text Semantic Alignment. It integrates exogenous and endogenous text information through cross-view text fusion and a multimodal alignment mechanism, achieving semantically consistent alignment between time and text modalities. For the second challenge, we introduce Content Condenser Reconstruction, which filters redundant information within the aligned text modality and performs cross-modal reconstruction to enable interaction. Extensive experiments on six real-world multimodal datasets demonstrate that the proposed MindTS achieves competitive or superior results compared to existing methods. The code is available at: https://github.com/decisionintelligence/MindTS.

\end{abstract}

\section{Introduction}
Time series anomaly detection identifies anomalous events that significantly deviate from the majority within time series data. It has been widely applied in various domains, including healthcare monitoring, financial fraud detection, and network intrusion detection ~\citep{yang2023sgdp,boniol2022theseus,boniol2024adecimo, wu2024catch, qiu2025tab}.

In various real-world scenarios, data often exists in a multimodal form, such as time series~\citep{dai2024sarad,qiu2025easytime}, text~\citep{enevoldsen2024scandinavian,chen2024proximal}, images~\citep{costanzino2024multimodal,long2025follow, ma2025followyourclick}, and videos~\citep{li2024toward,ma2025followfaster, ma2024followpose}, which collectively serve as complementary heterogeneous information sources. Among these, the text modality, which contains contextual descriptions and provides rich background information for time series, is easy to obtain due to its wide availability. For instance, financial experts combine transaction data on stocks with reports and policies to detect market anomalies. Despite this, time series most existing anomaly detection models remain confined to unimodal numerical frameworks~\citep{yang2023dcdetector,shentu2024towards,wu2024catch}, overlooking the potential of utilizing multimodal data. Therefore, building multimodal time series anomaly detection models becomes a natural and necessary step forward. In this work, we focus on the time series and text modalities, rather than aiming to build a universal multimodal framework that also handles image or video modalities. This focus raises a key research question: \textbf{how can we effectively integrate text information and time series?}

Since different modalities reside in distinct semantic spaces, achieving precise alignment between text and time series is crucial for leveraging textual information effectively. A straightforward approach~\citep{jin2023time,zhou2023one,gruver2023large,cao2023tempo,kowsher2024llm} is to generate \textit{endogenous text} from the time series itself using large language models (LLMs), which naturally ensures modality alignment (Figure~\ref{fig:1(a)}). However, such text typically offers limited semantic richness, as it can only capture intrinsic patterns within the time series. To address this limitation, recent studies~\citep{liu2024time,li2025language} have explored incorporating \textit{exogenous text}, such as news reports or documents, as external contextual information. However, the effectiveness of such methods heavily depends on the quality of exogenous text. The external information sources are often scattered, making semantic alignment with the time series inherently difficult (Figure~\ref{fig:1b}). Therefore, \textbf{a key challenge} is how to incorporate informative exogenous text while ensuring semantic consistency and alignment with the time series.

Moreover, while text provides complementary information to time series, it may also introduce redundant content that hinders anomaly detection. Existing multimodal time series methods perform direct fusion strategies~\citep{liu2024time,jin2023time}, assuming that all text information is equally useful. This overlooks the lengthy details or irrelevant descriptions within the text modality, which may dilute the contribution of genuinely informative content. In natural language processing, recent approaches~\citep{cha2023learning,liang2023open} filter text representations using techniques such as random masking or by applying random semantic parsing functions, such as paraphrasing, summarization, or translation, to perturb the text for filtering~\citep{ji2024defending}. However, when applied to multimodal time series, such strategies fail to consider the relevance of text content to the time series, which may result in high-value text information being randomly masked while low-value text information is retained. Therefore, \textbf{another key challenge} is how to mitigate the impact of redundant content on cross-modal interaction through an effective filtering mechanism.
\begin{figure*}[t]
\centering  
\includegraphics[width=\textwidth]{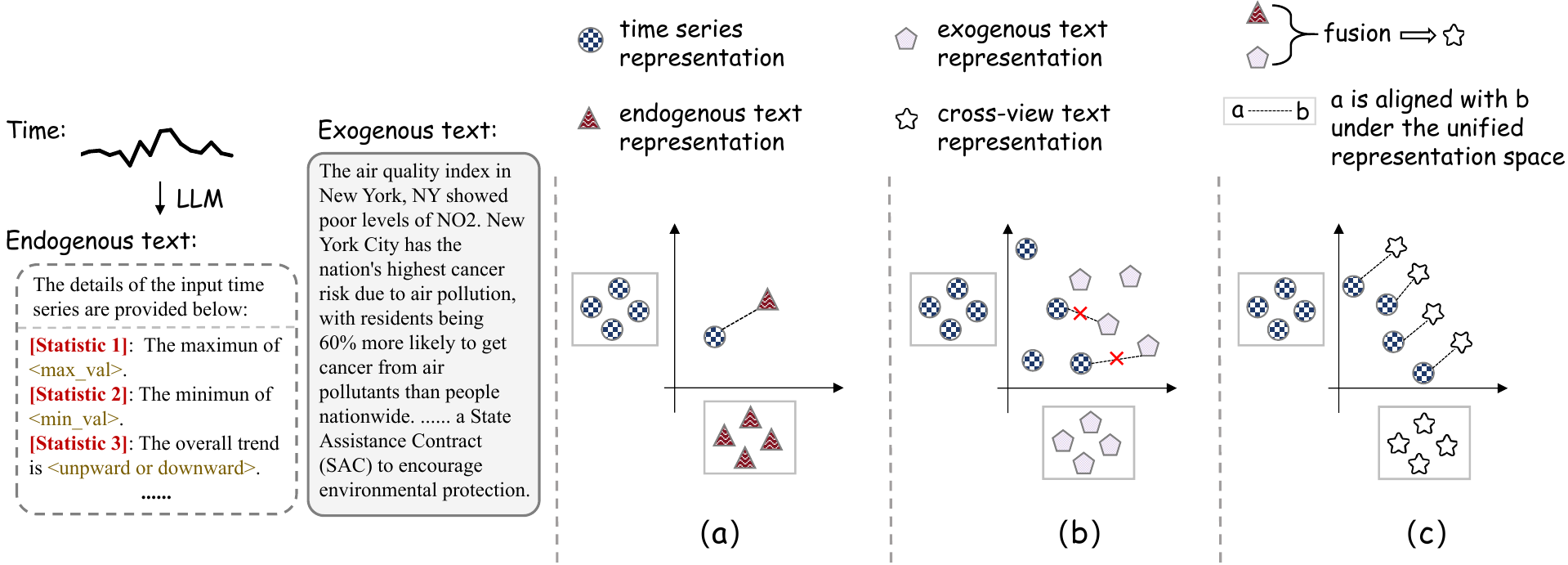}

\begin{minipage}[t]{0.24\linewidth}
\phantomsubcaption\label{fig:1(a)}
\end{minipage}
\begin{minipage}[t]{0.24\linewidth}
\phantomsubcaption\label{fig:1b}
\end{minipage}
\begin{minipage}[t]{0.24\linewidth}
\phantomsubcaption\label{fig:1c}
\end{minipage}
\begin{minipage}[t]{0.24\linewidth}
\phantomsubcaption\label{fig:1d}
\end{minipage}

\vspace{-1em}
\caption{(a) LLM-based methods generate endogenous text from time series without incorporating exogenous information. 
(b) Exogenous-based methods incorporate text information by retrieving background knowledge from the web. The absence of connecting lines indicates that the two modalities are not aligned. 
(c) MindTS employs cross-view fusion to ensure semantic consistency between the exogenous text and the time series, enabling more precise alignment across modalities.}
\label{fig:fig1}
\vspace{-1.5em}
\end{figure*}

To address these challenges, we propose \textbf{MindTS}, a Multimodal Time Series Anomaly Detection with Semantic Alignment and Condensed Interaction. Specifically, we propose a fine-grained time-text semantic alignment module that divides the text into two complementary views: \textit{exogenous text and endogenous text}. The exogenous text contains background information from external sources, suitable for sharing across different time steps. In contrast, the endogenous text is derived directly from the time series, exhibiting time-specific characteristics correlated with temporal patterns. To achieve semantic consistency alignment between time-text pairs, we apply cross-view fusion to integrate the complementary strengths of the two text views. The resulting fused text is further aligned with the time series (Figure~\ref{fig:1c}). Furthermore, we propose a content condenser reconstruction mechanism to filter redundant text information and enhance the effectiveness of cross-modal interaction. Specifically, given aligned text representations as input, the content condenser filters out redundant information from the text by minimizing mutual information, resulting in condensed text representations. The condensed text representations are then used to reconstruct the masked time series, which strengthens cross-modal interaction. The contributions are summarized as follows:

\begin{itemize}
\item We propose a novel fine-grained time–text semantic alignment method that jointly exploits both exogenous and endogenous text representations of time patches. The exogenous text introduces external background knowledge, while the endogenous text captures specific characteristics directly derived from time series. By integrating these two complementary text views, our approach ensures more precise semantic alignment with text and time series.

\item We propose a novel method, content condenser reconstruction, to filter redundant textual information. By performing cross-modal reconstruction of time series from condensed text, the content condenser reconstruction enhances interaction between modalities.

\item Our proposed multimodal anomaly detection model, MindTS, has been extensively evaluated on multimodal datasets. Compared with existing unimodal baselines and multimodal time series frameworks, MindTS achieves competitive or superior performance.
\end{itemize}

\vspace{-0.5em}
\section{Related work}
\label{Related work}
\vspace{-0.5em}
\vspace{-0.5em}
\subsection{Time series anomaly detection}
\vspace{-0.5em}
Time series anomaly detection has been extensively studied, and existing methods can be broadly categorized into non-learning~\citep{breunig2000lof,goldstein2012histogram,yeh2016matrix}, classical machine learning~\citep{liu2008isolation,ramaswamy2000efficient,shyu2003novel,yairi2001fault}, and deep learning~\citep{xu2021anomaly,deng2021graph,yang2023dcdetector,shentu2024towards, hu2024multirc, qiu2025land}. Deep learning methods can be further divided into reconstruction-based, prediction-based, and contrastive learning-based. DADA~\citep{shentu2024towards} adopts a dual-adversarial decoder framework to reconstruct both normal and abnormal series, where abnormal samples are expected to yield high reconstruction errors. GDN~\citep{deng2021graph} couples structure learning with graph neural networks by using attention over neighboring sensors to forecast values, and derives anomaly scores from prediction errors. DCdetector~\citep{yang2023dcdetector} is the first to introduce contrastive learning into time series anomaly detection. It maps samples into a shared embedding space, where normal points exhibit strong correlation with others, while anomalous points show weak correlations~\citep{yang2023lightpath}. Although these methods have achieved impressive performance in unimodal time series anomaly detection, they often overlook the rich semantic information available in other modalities, which limits their robustness in complex real-world scenarios.

\vspace{-0.5em}
\subsection{Multimodal time series analysis}  
\vspace{-0.5em}
Multimodal approaches mainly exploit time series and textual information to enhance the robustness and effectiveness of time series analysis. Unlike traditional unimodal time series methods, multimodal time series analysis (MMTSA) presents greater challenges due to the complexity of cross-modal interaction and heterogeneous data integration. With the recent advances in LLMs, mainstream research in MMTSA~\citep{liu2024time,pan2024s,liu2024unitime,kowsher2024llm,wang2025chattime} has focused on transforming time series data into text or image formats and feeding them into LLMs or vision models, respectively. These approaches typically employ a multimodal fusion network to integrate information across modalities and boost overall model performance. For instance, LLM-Mixer~\citep{kowsher2024llm} decomposes time series into seasonal and trend components, and feeds them along with textual prompts into a frozen pre-trained LLM. The LLM then generates predictions by leveraging both semantic knowledge and temporal structure. Time-MMD~\citep{liu2024time} attempts to incorporate exogenous text to improve time series analysis tasks. However, exogenous textual sources are often scattered and weakly correlated with the semantics of specific time segments. Relying solely on hard alignment through temporal step synchronization overlooks deeper semantic associations between time series and text. Furthermore, text data often contains much redundant content. Without appropriate selection mechanisms, cross-modal interaction may introduce semantic redundancy, which hinders the identification of anomalous patterns. To address these issues, our proposed method achieves precise alignment between semantically consistent time-text representations by integrating exogenous and endogenous text information. Moreover, we introduce a mutual information minimization mechanism and a cross-modal reconstruction strategy to achieve text compression and modality-level time series reconstruction. These strategies improve the model's ability to identify anomalous patterns.

\vspace{-0.5em}
\section{Methodology}  
\label{Methodology}
\vspace{-0.5em}

Given input time series of length $T$ as $\textbf{X} = (x_1, \dots, x_T) \in \mathbb{R}^{T \times D}$, where $D$ is the number of features. Traditional unimodal time series anomaly detection outputs $ \textbf{Y} = (y_1, \dots, y_T) \in \{0,1\}^T $, where $ y_t = 1, t \in \{1,2,...,T\}, $ indicates that timestamp $ t $ is identified as an anomaly. In the task of multimodal time series anomaly detection, we consider the time series data with an additional text modalities. Here we specifically focus on fusing a time series modality $ \textbf{X}$ with a text modality, where the exogenous text modality is represented as a sequence of length $S$, given by $ \textbf{C} = (c_1, \dots, c_S)\in \mathbb{R}^{S}$. Similar to the unimodal time series anomaly detection, the problem of multimodal time series anomaly detection also determines whether $y_t$ is an anomaly or not.

\begin{figure*}[t]
\centering  
\includegraphics[width=\textwidth]{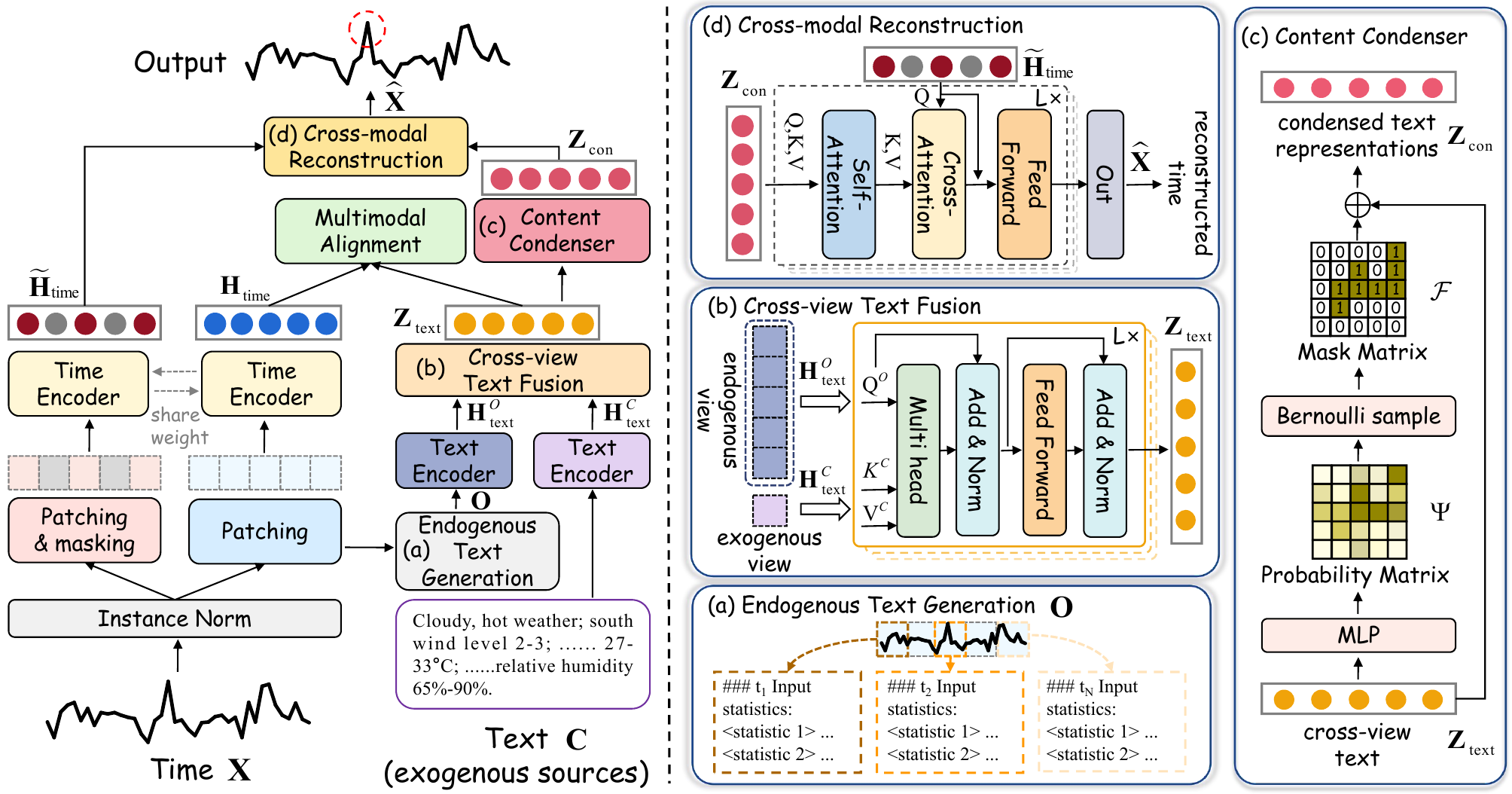}
\vspace{-2mm}
\caption{MindTS overview. Given an input time series $\textbf{X}$, we first apply instance normalization and patching, then encode the patches using a time encoder. (a) Each patch generates its corresponding endogenous text $\textbf{O}$. Along with the input exogenous text $\textbf{C}$, both views are encoded and (b) fused via cross-view fusion to obtain fused text representations $\textbf{Z}_\text{text}$. Time and text representations are then semantically aligned via a multimodal alignment layer. (c) To mitigate textual redundancy, the aligned text is compressed using a content condenser. Finally, (d) the condensed text $\textbf{Z}_\text{con}$ is used to reconstruct the masked time series, enhancing cross-modal interaction.}
\label{figs:2}
\vspace{-4mm}
\end{figure*}

\vspace{-0.5em}
\subsection{Overall framework}
\vspace{-0.5em}
In order to resolve the problem of multimodal time series anomaly detection, we propose the model MindTS, as illustrated in Figure \ref{figs:2}. This model provides an anomaly identification mechanism from a cross-modal perspective based on the input time series and text.

The time series is first input to {\it instance norm layer} to perform instance normalization and channel-independent processing \citep{kim2021reversible, qiu2025comprehensive}, then the result is output to {\it patching \& time encoder layer} for the fine-grained modeling of patches. By the widely used time transformer \citep{cirstea2022triformer, nie2022time}, the following results are derived through the time encoder:
\begin{equation}
\mathbf{P} = \left\{ \mathbf{P}_{\text{time}}^1, \mathbf{P}_{\text{time}}^2, \dots, \mathbf{P}_{\text{time}}^N \right\} =\text{Patching}(\textbf{X}), \quad \mathbf{H}_{\text{time}} = \text{TimeEncoder}(\mathbf{P}),
\end{equation}
where $\mathbf{P}_{\text{time}}^i\in \mathbb{R}^{p \times D}$ denotes the $i$-th patch of $\textbf{X}$, $\textbf{H}_{\text{time}}\in \mathbb{R}^{N \times d_{model}}$, $ N = [T-p]/l + 1$ is the total patch number, $p$ is the patch size and $l$ is the horizontal sliding stride. Furthermore, endogenous text $\textbf{O}=\left\{ {o^1,o^2,...,o^N} \right\}$ are generated for each patch, where $o^i$ denotes the text generated by LLMs using specifically designed prompts based on the $i$-th patch. In addition, $\textbf{X}$ is processed through {\it patching \& masking} and a shared-weight time encoder to obtain the masked time representation, denoted as $\tilde{\mathbf{H}}_{\text{time}}$.

Next, endogenous text $\textbf{O}$ and exogenous text $\textbf{C}$ are fed into {\it text encoder} to model their text representations $\mathbf{H}^{O}_{\text{text}}$ and $\mathbf{H}^{C}_{\text{text}}$, respectively. Based on them, the {\it cross-view text fusion layer} is employed to capture semantic dependencies, resulting in the text representation as $\mathbf{Z}_{\text{text}}$, which is semantically correlated with the corresponding patch representation $\mathbf{H}_{\text{time}}$. Subsequently, a {\it multimodal alignment layer} is employed to align semantically consistent time-text pairs.

Finally, based on the aligned text representations $\mathbf{Z}_{\text{text}}$, the {\it content condenser} filters out redundant text, giving rise to condensed text $\mathbf{Z}_{\text{con}}$. It is then utilized together with the masked time representation $\tilde{\mathbf{H}}_{\text{time}}$ by {\it cross-modal reconstruction}, which produces the final output $\hat{\mathbf{X}}$ and facilitates modality interaction. Reconstruction error is the anomaly score.

\vspace{-0.5em}
\subsection{Fine-grained time-text semantic alignment}
\label{3.3}
\vspace{-0.5em}
The fine-grained time-text semantic alignment consists of three components: 1) the {\it endogenous text generation}, 2) the {\it cross-view text fusion} and 3) the {\it multimodal alignment} strategy. Specifically, {\it cross-view text fusion} is designed to integrate text from different views (endogenous and exogenous text), helping enhance semantic consistency between text and time series. The {\it multimodal alignment} aims to guide the alignment between time representations $\mathbf{H}_{\text{time}}$ and text representations $\mathbf{Z}_{\text{text}}$ from {\it cross-view text fusion} within a unified space.

\textbf{Endogenous text generation.} To mitigate semantic drift and output uncertainty in directly converting time series into natural language \citep{kowsher2024llm,jin2023time}, we design unified endogenous text prompt templates (e.g., mean, extrema, trend) and apply them to each patch to generate corresponding endogenous texts. In this case, the limitations caused by generating a single global prompt can be avoided, and the dynamic property of the time series is matched.

The text encoder leverages open-source LLMs \citep{liu2024deepseek, radford2019language} to encode the endogenous text, resulting in the time-specific text representation $\mathbf{H}^O_{\text{text}} \in \mathbb{R}^{N \times d_{model}}$ as follows:
\begin{equation}
\mathbf{H}^O_{\text{text}} = \text{TextEncoder}(\left\{ {o^1,o^2,...,o^N} \right\}),
\end{equation}
where $o^i\in \mathbb{R}^{e \times d_{model}}$, $e$ is the LLM’s vocabulary size. This prompt leverages the semantic knowledge of LLM to enhance semantic consistency with the time series. On the other hand, to fully exploit the exogenous text $\textbf{C}$ information, we treat it as a shared text across all patches. This approach ensures that the model does not lose background context due to the limited scope of individual patches. Similarly, the exogenous text encoded by a text encoder, resulting in $\mathbf{H}^C_{\text{text}} \in \mathbb{R}^{1 \times d_{model}}$.

\textbf{Cross-view text fusion.} To leverage the rich background knowledge in exogenous text and the strong semantic relevance of endogenous text to time series, MindTS integrates text information from endogenous and exogenous text views, introducing background knowledge while enabling precise mapping to specific patches. Specifically, we adopt a cross-view attention mechanism that can selectively extract complementary information from two text views. To enhance semantic consistency with the time series and extract the most relevant background information, we use the endogenous text $\mathbf{H}^O_{\text{text}}$ as the query and the exogenous text $\mathbf{H}^C_{\text{text}}$ as the key and value to obtain the fused text representation $\mathbf{Z}_{\text{text}}$. This process is expressed as: 
\begin{equation}
\mathbf{Z}_{\text{text}} = \text{LayerNorm} \left( \hat{\mathbf{Z}}_{\text{text}} + \text{FeedForward} \left( \hat{\mathbf{Z}}_{\text{text}} \right) \right),
\end{equation}
\begin{equation}
\hat{\mathbf{Z}}_{\text{text}} = \text{LayerNorm} \left( \mathbf{H}^O_{\text{text}} + \text{CrossAttn}(\mathbf{H}^O_{\text{text}}, \mathbf{H}^C_{\text{text}}, \mathbf{H}^C_{\text{text}}) \right),
\end{equation}
where $\hat{\mathbf{Z}}_{\text{text}}$ is intermediate variable. $\text{LayerNorm(·)}$ denotes layer normalization as widely adopted in~\citep{vaswani2017attention,qiu2024duet,chen2024pathformer, qiu2025duet, wu2025srsnet}, $\text{FeedForward(·)}$ denotes a multi-layer feedforward network, and $\text{CrossAttn(Q, K, V)}$ represents the cross-attention layer. 

\textbf{Multimodal alignment strategy.} Time series manifest as continuous signals with strong temporal dependencies, while text is discrete, making semantic alignment between the two modalities difficult. Traditional methods (e.g., add or concatenation) fail to capture the semantic alignment. To address this, we employ contrastive learning to explicitly align the two modalities, enhancing semantically consistent alignment by pulling positive pairs (aligned time-text) closer and pushing negative (unrelated ones) farther. Specifically, the similarity matrix between the two representations, $\mathbf{H}_{\text{time}}$ and $\mathbf{Z}_{\text{text}}$, is indicated as follows:
\begin{equation}
\mathbf{K}_{{\text{TT}}} =
\begin{bmatrix}
k(\mathbf{h}^{1}_{\text{time}}, \mathbf{z}^{1}_{\text{text}}) & \cdots & k(\mathbf{h}^{1}_{\text{time}}, \mathbf{z}^{N}_{\text{text}}) \\
\vdots & k(\mathbf{h}^{j}_{\text{time}}, \mathbf{z}^{g}_{\text{text}}) & \vdots \\
k(\mathbf{h}^{N}_{\text{time}}, \mathbf{z}^{1}_{\text{text}}) & \cdots & k(\mathbf{h}^{N}_{\text{time}}, \mathbf{z}^{N}_{\text{text}})
\end{bmatrix}
\in \mathbb{R}^{N \times N},
\end{equation}
where $k(\cdot, \cdot)$ denotes the similarity between time and text representations. If $j=g$, the $k(\mathbf{h}^{j}_{\text{time}}, \mathbf{z}^{g}_{\text{text}})$ is identified as a positive pair. Therefore, multimodal alignment loss $\mathcal{L}_{MA}$ is defined as:
\begin{equation}
\mathcal{L}_{MA} = -\frac{1}{2N} \left[
\sum_{j=1}^{N} \log \frac{\exp \left( k(\mathbf{h}_{\text{time}}^{j}, \mathbf{z}_{\text{text}}^{j}) / \tau \right)}
{\sum_{g=1}^{N} \exp \left( k(\mathbf{h}_{\text{time}}^{j}, \mathbf{z}_{\text{text}}^{g}) / \tau \right)}
+
\sum_{g=1}^{N} \log \frac{\exp \left( k(\mathbf{h}_{\text{time}}^{g}, \mathbf{z}_{\text{text}}^{g}) / \tau \right)}
{\sum_{j=1}^{N} \exp \left( k(\mathbf{h}_{\text{time}}^{j}, \mathbf{z}_{\text{text}}^{g}) / \tau \right)}
\right],
\end{equation}
where $\tau$ denotes the temperature.

\vspace{-0.5em}
\subsection{Content condenser reconstruction}
\vspace{-0.5em}
As shown in Figure~\ref{figs:2}, the content condenser reconstruction includes: 1) the {\it content condenser}, and 2) the {\it cross-modal reconstruction}. Specifically, after aligning fine-grained representations that are semantically consistent across modalities, the {\it content condenser} filters redundant text information via masking. It utilizes the condensed text representation to reconstruct the time series.

\textbf{Content condenser.} Inspired by the Information Bottleneck (IB) principle~\citep{tishby2000information,tishby2015deep}, we propose the {\it content condenser} to filter redundant representations based on the aligned text representation $\mathbf{Z_{\text{text}}}$. This process produces a condensed text representation $\mathbf{Z_{\text{con}}}$ while preserving essential information for time series. Formally, the objective of finding the optimal condensed representation $\mathbf{Z_{\text{con}}}$ is defined as:
\begin{equation}
\mathbf{Z}_{\text{con}}^* = \arg\min_{\mathbb{P}(\mathbf{Z}_{\text{con}} \mid \mathbf{Z}_{\text{text}})} 
I(\mathbf{Z}_{\text{text}}; \mathbf{Z}_{\text{con}}) + R(\hat{\mathbf{X}}, \mathbf{Z}_{\text{con}}),
\end{equation}
where $I(\cdot;\cdot)$ denotes the mutual information between aligned and condensed text representations. Minimizing it encourages the model to learn more compact representations. $R(\cdot,\cdot)$ denotes the reconstruction objective. Based on this, we introduce cross-modal reconstruction to ensure the condensed text retains sufficient information to recover the time series $\hat{\mathbf{X}}$.

Specifically, given the aligned text representations $\mathbf{Z}_{\text{text}} $, we use an MLP to compute a probability matrix $\Psi = [\psi_i]_{i = 1}^{N}$. A binary mask $\mathcal{F} \sim \text{Bernoulli}(\Psi) \in \{0,1\}^N$ is then sampled, where the higher the value of $\psi_{i}$, the more likely it is to sample $\mathcal{F}_i = 1$. The condensed representation is obtained as $\mathbf{Z}_{\text{con}} = \mathbf{Z}_{\text{text}} \odot \mathcal{F}$, where $\odot$ is the element-wise multiplication. To enable gradient propagation during sampling, we adopt the straight-through estimator trick~\citep{jang2016categorical}.

In order to control the marginal distribution of condensed text, thus regulating the condensing level, from the idea of latent distribution with variational auto-encoders, we introduce the distribution $\mathbb{G}(\mathbf{Z_\mathrm{con}}) \sim \prod_{i = 1}^N \mathrm{Bernoulli}(r)$ subject to a hyperparameter $\mu \in (0,1)$. By adjusting the value of $\mu$, we can restrain the condensing degree of the proposed model. To quantify the mutual information, the following lemma is proposed to get the upper bound before building the loss function.

\begin{lemma} \label{lemma1}
    For the mutual information $I(\mathbf{Z}_\mathrm{text}; \mathbf{Z_\mathrm{con}})$, there exists the following tight upper bound that can approximate its value:
    \begin{equation} \label{UB}
        I(\mathbf{Z}_\mathrm{text}; \mathbf{Z_\mathrm{con}}) \leq \mathbb{E}_{\mathbf{Z}_\mathrm{text}} [\mathrm{KL}(\mathbb{P}(\mathbf{Z_\mathrm{con}}|\mathbf{Z}_\mathrm{text})||\mathbb{G}(\mathbf{Z_\mathrm{con}}))],
    \end{equation}
    where $\mathrm{KL}(\cdot)$ denotes the Kullback–Leibler (KL) divergence, defined as $\mathrm{KL}(\mathbb{P}(x)||\mathbb{G}(\mathrm{x}))= \sum_{\mathrm{x}}\mathbb{P}(\mathrm{x})\log \frac{\mathbb{P}(\mathrm{x})}{\mathbb{G}(\mathrm{x})}$, $\mathbb{P}(\cdot)$ is the probability distribution and $\mathbb{G}(\cdot)$ is a variational approximation. The proof is given in Appendix~\ref{B}.
\end{lemma}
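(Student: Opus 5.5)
This is the standard variational upper bound on mutual information used in the variational information bottleneck. The plan is to rewrite the mutual information through the true marginal $\mathbb{P}(\mathbf{Z}_\mathrm{con}) = \mathbb{E}_{\mathbf{Z}_\mathrm{text}}[\mathbb{P}(\mathbf{Z}_\mathrm{con}\mid\mathbf{Z}_\mathrm{text})]$. We then insert the variational distribution $\mathbb{G}$ and use the nonnegativity of a KL divergence to discard a term.

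\textbf{Step 1.} Start from the definition
\begin{equation*}
I(\mathbf{Z}_\mathrm{text};\mathbf{Z}_\mathrm{con}) = \mathbb{E}_{\mathbf{Z}_\mathrm{text}}\,\mathbb{E}_{\mathbf{Z}_\mathrm{con}\sim\mathbb{P}(\cdot\mid\mathbf{Z}_\mathrm{text})}\left[\log\frac{\mathbb{P}(\mathbf{Z}_\mathrm{con}\mid\mathbf{Z}_\mathrm{text})}{\mathbb{P}(\mathbf{Z}_\mathrm{con})}\right].
\end{equation*}
Here $\mathbf{Z}_\mathrm{con}$ is determined by the mask $\mathcal{F}$, which is discrete. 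All sums are therefore finite, and $\mathbb{G}$ has full support on $\{0,1\}^N$ because $r\in(0,1)$. Every logarithm is thus well defined.

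\textbf{Step 2.} Multiply and divide inside the logarithm by $\mathbb{G}(\mathbf{Z}_\mathrm{con})$ and split the logarithm. This gives
\begin{equation*}
I = \mathbb{E}_{\mathbf{Z}_\mathrm{text}}[\mathrm{KL}(\mathbb{P}(\mathbf{Z}_\mathrm{con}\mid\mathbf{Z}_\mathrm{text})\,\|\,\mathbb{G}(\mathbf{Z}_\mathrm{con}))] - \mathbb{E}_{\mathbf{Z}_\mathrm{con}\sim\mathbb{P}}\left[\log\frac{\mathbb{P}(\mathbf{Z}_\mathrm{con})}{\mathbb{G}(\mathbf{Z}_\mathrm{con})}\right].
\end{equation*}
In the second term the inner expectation over the conditional, averaged over $\mathbf{Z}_\mathrm{text}$, collapses to an expectation over the marginal. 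That term is therefore exactly $\mathrm{KL}(\mathbb{P}(\mathbf{Z}_\mathrm{con})\,\|\,\mathbb{G}(\mathbf{Z}_\mathrm{con}))$.

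\textbf{Step 3.} Gibbs' inequality (Jensen applied to $-\log$) gives $\mathrm{KL}(\mathbb{P}(\mathbf{Z}_\mathrm{con})\,\|\,\mathbb{G}(\mathbf{Z}_\mathrm{con}))\ge 0$. Dropping this term yields the bound in Lemma~\ref{lemma1}.

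\textbf{Tightness.} This is the only point needing care, since the other steps are routine. The gap is exactly $\mathrm{KL}(\mathbb{P}(\mathbf{Z}_\mathrm{con})\,\|\,\mathbb{G})$, so equality holds if and only if $\mathbb{G}$ equals the true marginal of $\mathbf{Z}_\mathrm{con}$. I would phrase tightness in that sense: the bound is attained when the variational prior matches the marginal. For instance, this happens when the learned masking probabilities average to a product of $\mathrm{Bernoulli}(r)$ laws, with $r$ tied to the hyperparameter $\mu$. Consequently, minimizing the right-hand side over both the encoder and $\mathbb{G}$ recovers $I$ exactly.

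Two technical points need care. First, $\mathbf{Z}_\mathrm{con}=\mathbf{Z}_\mathrm{text}\odot\mathcal{F}$ is a deterministic function of $(\mathbf{Z}_\mathrm{text},\mathcal{F})$. I would therefore state the KL in terms of the mask distribution $\prod_i \mathrm{Bernoulli}(\psi_i)$ versus $\prod_i\mathrm{Bernoulli}(r)$, which is what $\mathbb{G}$ is defined on. Second, I would note the data-processing inequality $I(\mathbf{Z}_\mathrm{text};\mathbf{Z}_\mathrm{con})\le I(\mathbf{Z}_\mathrm{text};\mathcal{F})$ if one works at the mask level. In either formulation, the argument in Steps 1--3 goes through unchanged.
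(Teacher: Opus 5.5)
Your Steps 1--3 and the tightness discussion are the proof in Appendix~\ref{B}. You rewrite $I(\mathbf{Z}_{\mathrm{text}};\mathbf{Z}_{\mathrm{con}})$ via $\mathbb{P}(\mathbf{Z}_{\mathrm{con}}\mid\mathbf{Z}_{\mathrm{text}})/\mathbb{P}(\mathbf{Z}_{\mathrm{con}})$, insert $\mathbb{G}$, identify the leftover term as $-\mathrm{KL}(\mathbb{P}(\mathbf{Z}_{\mathrm{con}})\,\|\,\mathbb{G}(\mathbf{Z}_{\mathrm{con}}))\le 0$ and drop it, with equality exactly when $\mathbb{G}$ is the marginal of $\mathbf{Z}_{\mathrm{con}}$. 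Your collapse of the joint average to the marginal is a cleaner rendering of the paper's middle step, and for the lemma as stated, with $\mathbb{G}$ a fixed law on the space of $\mathbf{Z}_{\mathrm{con}}$, the argument is complete.

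Your closing data-processing remark is false and should be dropped. Because $\mathbf{Z}_{\mathrm{con}}=\mathbf{Z}_{\mathrm{text}}\odot\mathcal{F}$ copies the unmasked rows of $\mathbf{Z}_{\mathrm{text}}$, $\mathbf{Z}_{\mathrm{text}}\to\mathcal{F}\to\mathbf{Z}_{\mathrm{con}}$ is not a Markov chain. For example, if the $\psi_i$ are constant, $\mathcal{F}$ is independent of $\mathbf{Z}_{\mathrm{text}}$ and $I(\mathbf{Z}_{\mathrm{text}};\mathcal{F})=0$. Yet $\mathbf{Z}_{\mathrm{con}}$ still reveals the unmasked rows, so $I(\mathbf{Z}_{\mathrm{text}};\mathbf{Z}_{\mathrm{con}})>0$ for non-degenerate $\mathbf{Z}_{\mathrm{text}}$.

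When the rows of $\mathbf{Z}_{\mathrm{text}}$ are almost surely nonzero, $\mathcal{F}$ is a function of $\mathbf{Z}_{\mathrm{con}}$. The inequality therefore runs the other way: $I(\mathbf{Z}_{\mathrm{text}};\mathcal{F})\le I(\mathbf{Z}_{\mathrm{text}};\mathbf{Z}_{\mathrm{con}})$. So running Steps 1--3 at the mask level bounds $I(\mathbf{Z}_{\mathrm{text}};\mathcal{F})$, not the quantity in the lemma.

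Pushing $\prod_i\mathrm{Bernoulli}(r)$ forward through $\mathcal{F}\mapsto\mathbf{Z}_{\mathrm{text}}\odot\mathcal{F}$ does not rescue this. The resulting reference law depends on $\mathbf{Z}_{\mathrm{text}}$, and Step 2 needs $\mathbb{G}$ independent of $\mathbf{Z}_{\mathrm{text}}$; otherwise the choice $\mathbb{G}=\mathbb{P}(\cdot\mid\mathbf{Z}_{\mathrm{text}})$ would give $I\le 0$.

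The same conflation of $\mathbf{Z}_{\mathrm{con}}$ with $\mathcal{F}$ appears in your Step 1 remarks about finite sums and full support on $\{0,1\}^N$. At the $\mathbf{Z}_{\mathrm{con}}$ level you need no support assumption beyond noting that the bound is trivial when its right-hand side is infinite. Identifying that right-hand side with the Bernoulli loss $\mathcal{L}_{CC}$ is a separate modelling step that the lemma does not cover.
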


Utilizing the upper bound in Lemma \ref{lemma1}, we can compute the KL divergence to obtain the loss function $\mathcal{L}_{\mathrm{CC}}$ as follows: 
\begin{equation}
    \mathcal{L}_{CC} = \sum_{i = 1}^N \psi_i \log \frac{\psi_i}{\mu} + (1 - \psi_i) \log \frac{1 - \psi_i}{1 - \mu}.
\end{equation}

Another issue is that the condensed text might possess a large difference between the $i$-th patch and the $(i+1)$-th patch, which results in the discontinuity and instability of the condenser reconstruction. To avoid this problem, we introduce $\phi_i = \sqrt{(\psi_{i+1} - \psi_i)^2}$ to compute the mask score difference of two Bernoulli samplings. Then, $\mathcal{L}_{SM} = \frac{1}{N} \sum_{i = 1}^{N-1} \phi_i$ is proposed to guarantee the smoothness of condensed text representation, ensuring stability in the learned features. In summary, the loss of the content condenser module is defined as $\mathcal{L}_{CL} = \mathcal{L}_{CC} + \mathcal{L}_{SM}$.

\textbf{Cross-modal reconstruction.} To enhance cross-modal interaction, a straightforward approach would be to perform time series reconstruction directly from the entire time series and the condensed text. However, as the time series itself contains abundant information for reconstruction, this process cannot fully encourage the model to capture deeper cross-modal dependencies. To address this, we design a more challenging objective: time series reconstruction with the random masked time series $\mathbf{\tilde{X}}$ and condensed texts $\mathbf{Z}_{\text{con}}$. This design strengthens cross-modal dependency and encourages the condensed representation to preserve richer time series–related information. As shown in Figure~\ref{figs:2} (d), $\mathbf{X}$ is processed by {\it patching \& masking} to obtain $\mathbf{\tilde{X}}$, which is then encoded by a time encoder to produce $\tilde{\mathbf{H}}_{\text{time}}$. The encoder shares weights with another time encoder. Given the inputs $\tilde{\mathbf{H}}_{\text{time}}$ and $\mathbf{Z}_{\text{con}}$, the reconstructed output $\hat{\mathbf{X}}\in \mathbb{R}^{T \times D}$ is obtained $\hat{\mathbf{X}} = \text{Projection} \left( \mathbf{U}_{\text{TT}} \right)$, and $\mathbf{U}_{\text{TT}}$ is denoted as:
\begin{equation}
    \mathbf{U}_{\text{TT}} = \text{FeedForward} \left(\tilde{\mathbf{H}}_{\text{time}}+ \text{CrossAttn} \left( \tilde{\mathbf{H}}_{\text{time}}, \textbf{Z}_\text{con}',\textbf{Z}_\text{con}') \right)\right),
\end{equation}
where $\textbf{Z}_\text{con}'=\text{MSA}(\mathbf{Z}_{\text{con}}, \mathbf{Z}_{\text{con}}, \mathbf{Z}_{\text{con}})$ denotes the {\it self-attention layer}. The cross-modal reconstruction function is formalized as:
\begin{equation}
\mathcal{L}_{\text{Rec}} = \left\| \mathbf{X} - \hat{\mathbf{X}} \right\|_F^2.
\end{equation}

\vspace{-0.5em}
\subsection{Joint optimization and inference}
\vspace{-0.5em}
Our total loss $\mathcal{L}$ primarily consists of three components: the multimodal alignment loss $\mathcal{L}_{MA}$, the condenser loss based on condensed text $\mathcal{L}_{CL}$, and the cross-modal reconstruction loss $\mathcal{L}_{Rec}$. Therefore, the proposed total loss function is written as:
\begin{equation}
\mathcal{L} = \mathcal{L}_{MA} + \mathcal{L}_{CL} +  \mathcal{L}_{\text{Rec}},
\end{equation}
During the inference stage, the anomaly score at the current timestamp is computed based on the mean squared error between the time input $\mathbf{X}$ and its reconstructed output.

\vspace{-0.5em}
\section{Experiments}  
\label{Experiments}

\subsection{Experimental settings}
\textbf{Datasets.} We conduct experiments using 6 real-world datasets (Weather, Energy, Environment, KR, EWJ, and MDT) to assess the performance of MindTS. Each dataset contains both numerical time series and corresponding exogenous text. More details of the datasets are included in Appendix~\ref{Datasets}.

\textbf{Baselines.} We extensively compare MindTS against 17 baselines, including (1) LLM-based methods: LLMMixer (LMixer)~\citep{kowsher2024llm}, UniTime (UTime)~\citep{liu2024unitime}, GPT4TS (G4TS)~\citep{zhou2023one}; (2) Pre-trained methods: DADA~\citep{shentu2024towards}, Timer~\citep{liu2024timer}, UniTS~\citep{gao2024units}; (3) Deep learning-based methods: ModernTCN (Modern)~\citep{luo2024moderntcn}, TimesNet (TsNet)~\citep{wu2022timesnet}, DCdetector (DC)~\citep{yang2023dcdetector}, Anomaly Transformer (A.T.)~\citep{xu2021anomaly}, PatchTST (Patch)~\citep{nie2022time}, TranAD~\citep{tuli2022tranad}, iTransformer (iTrans)~\citep{liu2023itransformer}; (4) Non-learning methods: PCA~\citep{shyu2003novel}, IForest (IF)~\citep{liu2008isolation}, LODA~\citep{pevny2016loda}, HBOS~\citep{goldstein2012histogram}. Further details concerning baselines are available in Appendix~\ref{Baselines}.

\textbf{Metrics.} We adopt Label-based metric: Affiliated-F1-score (Aff-F)~\citep{huet2022local} and Score-based metric: VUS-PR (V-PR)~\citep{paparrizos2022volume}, VUS-ROC (V-ROC)~\citep{paparrizos2022volume} as evaluation metrics. We report the algorithm performance under a total of 16 evaluation metrics in the Appendix~\ref{Metrics}. More implementation details are presented in the Appendix~\ref{Implementation}.

\vspace{-0.5em}
\subsection{Main results}
\vspace{-0.5em}
We evaluate MindTS with 17 competitive baselines on 6 real-world datasets, as shown in Table~\ref{unimodal}. MindTS achieves state-of-the-art (SOTA) performance across all datasets under the Aff-F, V-PR, and V-ROC metrics, which demonstrates that MindTS effectively combines multimodal data to detect anomalies. We further incorporate the 11 recent methods that perform well, as shown in Table~\ref{unimodal}, into the multimodal time series framework MM-TSFLib~\citep{liu2024time}, as reported in Table~\ref{MM-TSFLib}. MM-TSFLib integrates textual information by performing linear interpolation between the output of time series models and bag-of-words-based text embeddings. Although this framework provides a simple yet effective way to incorporate text, MindTS achieves the best or most competitive results on all datasets, demonstrating the superior ability of MindTS to capture and integrate multimodal semantics. More baselines and metrics evaluation results can be found in Appendix~\ref{additional}. Additional forecasting extension results are provided in Appendix~\ref{H:Forecasting}.

\begin{table*}[t]
\centering
\small
\caption{Results of MindTS compared with unimodal and LLM-based methods on six real-world datasets. These methods only use the time series in the dataset. MindTS leverages both endogenous and exogenous text together with time series. The best results are highlighted in bold, and the second-best results are underlined.}
\label{unimodal}
\resizebox{\textwidth}{!}{
\begin{tabular}{c|c|cccccccccccccccccc}
\toprule
\textbf{Datasets} & \textbf{Metric} & \textbf{MindTS} & \textbf{DADA} & \textbf{LMixer} & \textbf{UTime} & \textbf{Timer} & \textbf{UniTS} & \textbf{G4TS} & \textbf{Modern} & \textbf{TsNet} & \textbf{DC} & \textbf{A.T.} & \textbf{Patch} & \textbf{TranAD} & \textbf{iTrans} & \textbf{PCA} & \textbf{IF} & \textbf{LODA} & \textbf{HBOS} \\

\midrule
\multirow{3}{*}{Weather} & Aff-F  & \textbf{82.66} & 69.01 & 73.68 & 76.46 & 75.46 & 76.17 & 72.56 & \underline{81.06} & 80.58 & 42.80 & 49.22 & 77.17 & 77.81 & 75.37 & 64.91 & 54.06 & 52.55 & 47.70 \\
                         & V-PR  & \textbf{57.48} & 30.00 & 43.47 & 51.90 & 43.21 & 44.35 & 41.30 & 52.13 & 50.09 & 18.33 & 19.17 & 50.03 & 52.08 & 42.56 & 47.13 & 49.66 & \underline{55.03} & 46.58 \\
                         & V-ROC & \textbf{82.64} & 61.03 & 71.71 & 78.45 & 73.22 & 75.08 & 70.03 & 81.14 & \underline{81.91} & 45.56 & 43.32 & 79.97 & 78.75 & 73.37 & 57.38 & 56.45 & 57.00 & 54.16 \\
\midrule                        
\multirow{3}{*}{Energy} & Aff-F  & \textbf{74.37} & 64.38 & 65.85 & 61.98 & 60.20 & 63.84 & 66.37 & 70.76 & 66.00 & 47.07 & 43.39 & 66.85 & 49.69 & \underline{70.81} & 57.65 & 62.03 & 63.45 & 55.85 \\
                        & V-PR  & \textbf{50.36} & 34.18 & 30.35 & 32.88 & 29.46 & 31.04 & 31.68 & 36.60 & 38.61 & 22.57 & 19.69 & 34.41 & 33.80 & 35.82 & 44.30 & 46.03 & \underline{48.63} & 42.57 \\
                        & V-ROC & \textbf{74.44} & 54.37 & 53.04 & 49.97 & 46.03 & 51.15& 53.10 & \underline{65.05} & 59.47 & 45.93 & 31.56 & 58.31 & 56.37 & 63.06 & 53.07 & 53.61 & 55.90 & 51.50 \\
\midrule
                         
\multirow{3}{*}{Environment} & Aff-F  & \textbf{85.29} & 84.11 & \underline{84.36} & 81.71 & 84.19 & 83.06 & 72.26 & 81.07 & 80.41 & 62.24 & 59.75 & 81.17 & 61.41 & 74.43 & 55.63 & 46.50 & 46.45 & 22.25 \\
                             & V-PR  & \textbf{56.79} & \underline{54.20} & 52.94 & 48.87 & 51.42 & 50.24 & 23.94 & 42.26 & 50.64 & 7.69 & 18.14 & 45.78 & 4.91 & 24.87 & 17.87 & 8.94 & 18.66 & 52.15 \\
                             & V-ROC & \textbf{93.78} & 87.69 & 89.75 & 91.77 & \underline{92.10} & 92.03 & 66.79 & 89.78 & 87.97 & 41.28 & 51.98 & 90.86 & 14.20 & 73.81 & 37.08 & 46.20 & 50.69 & 51.03 \\
\midrule
                         
\multirow{3}{*}{KR} & Aff-F  & \textbf{90.28} & 84.22 & 71.80 & 88.58 & \underline{89.55} & 82.24 & 79.56 & 84.42 & 85.47 & 61.94 & 70.99 & 79.52 & 73.26 & 79.49 & 58.11 & 69.38 & 60.96 & 64.78 \\
                    & V-PR  & \textbf{53.15} & 45.90 & 15.13 & 46.87 & 51.41 & 43.32 & 38.23 & 39.95 & 51.60 & 8.48 & 7.94 & 36.18 & 28.42 & 27.37 & 24.19 & 43.31 & 51.82 & \underline{52.06} \\
                    & V-ROC & \textbf{89.86} & 70.82 & 52.79 & 73.55 & 75.99 & 73.93 & 67.81 & \underline{88.87} & 79.00 & 43.04 & 41.97 & 74.65 & 41.05 & 76.12 & 47.51 & 60.70 & 59.99 & 61.41 \\
\midrule
                         
\multirow{3}{*}{EWJ} & Aff-F  & \textbf{83.89} & 81.26 & 66.86 & 78.22 & 78.06 & 77.61 & 76.65 & 81.57 & \underline{81.82} & 48.10 & 59.03 & 75.82 & 69.22 & 78.27 & 51.06 & 67.55 & 72.06 & 71.03 \\
                     & V-PR  & \textbf{50.42} & 43.36 & 15.21 & 32.39 & 33.17 & 39.32 & 35.63 & \underline{44.75} & 43.15 & 15.37 & 10.85 & 36.08 & 17.80 & 28.98 & 19.38 & 37.81 & 40.08 & 41.19 \\
                     & V-ROC & \textbf{84.12} & 71.79 & 46.80 & 64.49 & 67.72 & 73.91 & 67.95 & \underline{83.88} & 75.76 & 47.10 & 31.75 & 71.56 & 49.60 & 72.16 & 45.26 & 59.24 & 61.65 & 62.07 \\
\midrule
                         
\multirow{3}{*}{MDT} & Aff-F  & \textbf{89.19} & 77.99 & 67.65 & 76.28 & 78.51 & 75.57 & \underline{80.81} & \underline{80.81} & 80.08 & 47.33 & 66.12 & 79.47 & 63.93 & 78.66 & 54.66 & 53.74 & 55.06 & 52.33 \\
                     & V-PR  & \textbf{65.44} & 46.81 & 19.10 & 38.94 & 38.38 & 37.61 & 44.81 & \underline{52.18} & 50.53 & 15.72 & 15.93 & 41.67 & 14.34 & 36.36 & 22.93 & 35.32 & 44.63 & 44.77 \\
                     & V-ROC & \textbf{83.02} & 66.76 & 47.06 & 61.00 & 60.28 & 58.67 & 62.30 & \underline{82.30} & 79.56 & 45.02 & 44.53 & 77.69 & 28.55 & 71.87 & 44.09 & 54.02 & 55.98 & 55.30 \\
\bottomrule
\end{tabular}}
\end{table*}

\begin{table*}[t]
\centering
\small
\vspace{-2mm}
\caption{The notation with $*$ indicates the results of extending the baselines to their multimodal versions using the recent time series multimodal framework MM-TSFLib, where both time series and text data from datasets are utilized.}
\label{MM-TSFLib}
\resizebox{\textwidth}{!}{
\begin{tabular}{c|c|ccccccccccccc}
\toprule
\textbf{Datasets} & \textbf{Metric} & \textbf{MindTS} & \textbf{DADA*} & \textbf{LMixer*} & \textbf{UTime*} & \textbf{Timer*} & \textbf{UniTS*} & \textbf{G4TS*} & \textbf{Modern*} & \textbf{TsNet*} & \textbf{Patch*} & \textbf{TranAD*} & \textbf{iTrans*} \\

\midrule
\multirow{3}{*}{Weather} & Aff-F  & \textbf{82.66} & 69.73 & 76.30 & 73.93 & 75.37 & 76.38 & 76.82 & \underline{81.50} & 80.09 & 77.05 & 77.73 & 75.36 \\
                         & V-PR  & \textbf{57.48} & 30.42 & 45.94 & 43.19 & 43.36 & 44.58 & 45.83 & \underline{53.42} & 50.53 & 50.17 & 52.09 & 40.30 \\
                         & V-ROC & \textbf{82.64} & 61.51 & 75.29 & 73.71 & 73.26 & 75.21 & 74.61 & 81.67 & \underline{82.06} & 80.08 & 78.72 & 70.11 \\
\midrule                        
\multirow{3}{*}{Energy} & Aff-F  & \textbf{74.37} & 64.80 & 61.46 & 65.38 & 60.36 & 65.42 & 67.38 & 72.13 & 66.71 & 66.28 & 50.53 & \underline{72.49} \\
                        & V-PR  & \textbf{50.36} & 34.38 & 30.91 & 30.44 & 29.57 & 31.34 & 31.83 & 37.44 & \underline{38.88} & 34.66 & 33.74 & 36.21 \\
                        & V-ROC & \textbf{74.44} & 55.63 & 49.06 & 49.85 & 46.39 & 51.89 & 53.52 & \underline{66.37} & 59.80 & 58.47 & 56.38 & 65.62 \\
\midrule
                         
\multirow{3}{*}{Environment} & Aff-F  & \textbf{85.29} & 83.84 & 83.76 & 76.73 & \underline{84.52} & 83.43 & 84.44 & 81.36 & 80.21 & 81.71 & 61.38 & 76.02 \\
                             & V-PR  & \textbf{56.79} & 54.20 & 51.22 & 35.64 & 51.20 & 50.06 & \underline{56.65} & 41.36 & 50.39 & 45.52 & 4.93 & 25.85 \\
                             & V-ROC & \textbf{93.78} & 88.02 & 91.74 & 84.20 & \underline{92.02} & 91.98 &90.22 & 89.14 & 88.14 & 90.87 & 14.29 & 73.66 \\
\midrule
                         
\multirow{3}{*}{KR} & Aff-F  & \textbf{90.28} & 84.22 & \underline{90.03} & 77.38 & 89.61 & 83.06 & 88.29 & 84.87 & 85.84 & 79.52 & 72.50 & 78.39 \\
                    & V-PR  & \underline{53.15} & 45.68 & 52.98 & 37.12 & 51.56 & 44.27 & \textbf{57.93} & 40.86 & 51.73 & 36.37 & 28.47 & 28.12 \\
                    & V-ROC & \textbf{89.86} & 72.08 & 75.21 & 66.96 & 75.92 & 74.25 & 80.43 & \underline{89.22} & 78.94 & 74.80 & 41.24 & 77.17 \\
\midrule
                         
\multirow{3}{*}{EWJ} & Aff-F  & \textbf{83.89} & 81.41 & 78.23 & 73.20 & 79.05 & 78.04 & 81.37 & 81.88 & \underline{81.92} & 76.49 & 69.03 & 78.23 \\
                     & V-PR  & \textbf{50.42} & 43.18 & 34.06 & 25.71 & 33.36 & 39.99 & 43.93 & \underline{45.41} & 43.28 & 36.22 & 17.87 & 29.79 \\
                     & V-ROC & \textbf{84.12} & 71.06 & 69.37 & 67.57 & 68.19 & 74.39 &76.93 & \underline{83.98} & 75.91 & 72.21 & 49.85 & 74.11 \\
\midrule
                         
\multirow{3}{*}{MDT} & Aff-F  & \textbf{89.19} & 77.89 & 78.31 & 72.33 & 77.93 & 76.70 & \underline{81.86} & 81.68 & 80.62 & 78.87 & 63.60 & 77.47 \\
                     & V-PR  & \textbf{65.44} & 47.22 & 42.61 & 25.31 & 38.23 & 37.78 &\underline{52.65} & 45.88 & 52.30 & 41.85 & 14.55 & 33.88 \\
                     & V-ROC & \textbf{83.02} & 68.06 & 61.72 & 49.16 & 60.21 & 58.82 & 73.39 & \underline{82.66} & 73.57 & 77.72 & 28.88 & 69.95 \\
\bottomrule
\end{tabular}}
\vspace{-4.5mm}
\end{table*}

\vspace{-0.5em}
\subsection{Model analysis}
\vspace{-0.5em}
We analyze the effectiveness of fine-grained time-text semantic alignment and content condenser reconstruction, and visualize the anomaly scores. We conducted additional analytical experiments, which are presented in Appendix~\ref{Additional_model_analysis}, ~\ref{G:Prompt}, ~\ref{I_Endogenous_Text_Generation}, ~\ref{J_standalone_objective}, ~\ref{H_Scalability_Studies}.

\textbf{Ablation study.} To ascertain the impact of different modules within MindTS, we conduct ablation studies on: (a) remove the exogenous text representation; (b) remove the endogenous text representation; (c) delete the time-text semantic alignment; (d) remove the content condenser mechanism; (e) remove the cross-modal reconstruction module; (f) reverse the order of the alignment and content condenser mechanism. Figure~\ref{ablation} illustrates the distinct contribution of each component.

We make the following observations: Ours denotes the complete model. (a) and (b) removing either of the text representations from the two views leads to a notable performance decline. This indicates that integrating the complementary information from endogenous and exogenous texts helps improve the model performance; (c) removing the time-text semantic alignment module leads to a drop, indicating that effective modality alignment is essential for ensuring reliable anomaly detection; (d) removing the content condenser leads to significant performance degradation, likely due to redundant information from text negatively impacting the model; (e) removing cross-modal reconstruction also leads to performance degradation, suggesting that it enhances cross-modal interaction and helps extract time-relevant discriminative features from the text; (f) when the alignment and content condenser order is reversed, the model performance degrades. This may be because filtering is applied before alignment, causing potentially useful time-relevant information to be discarded prematurely.

\begin{wrapfigure}[18]{r}{0.45\textwidth}
  \centering
  \raisebox{0pt}[\height][\depth]{\includegraphics[width=0.45\textwidth]{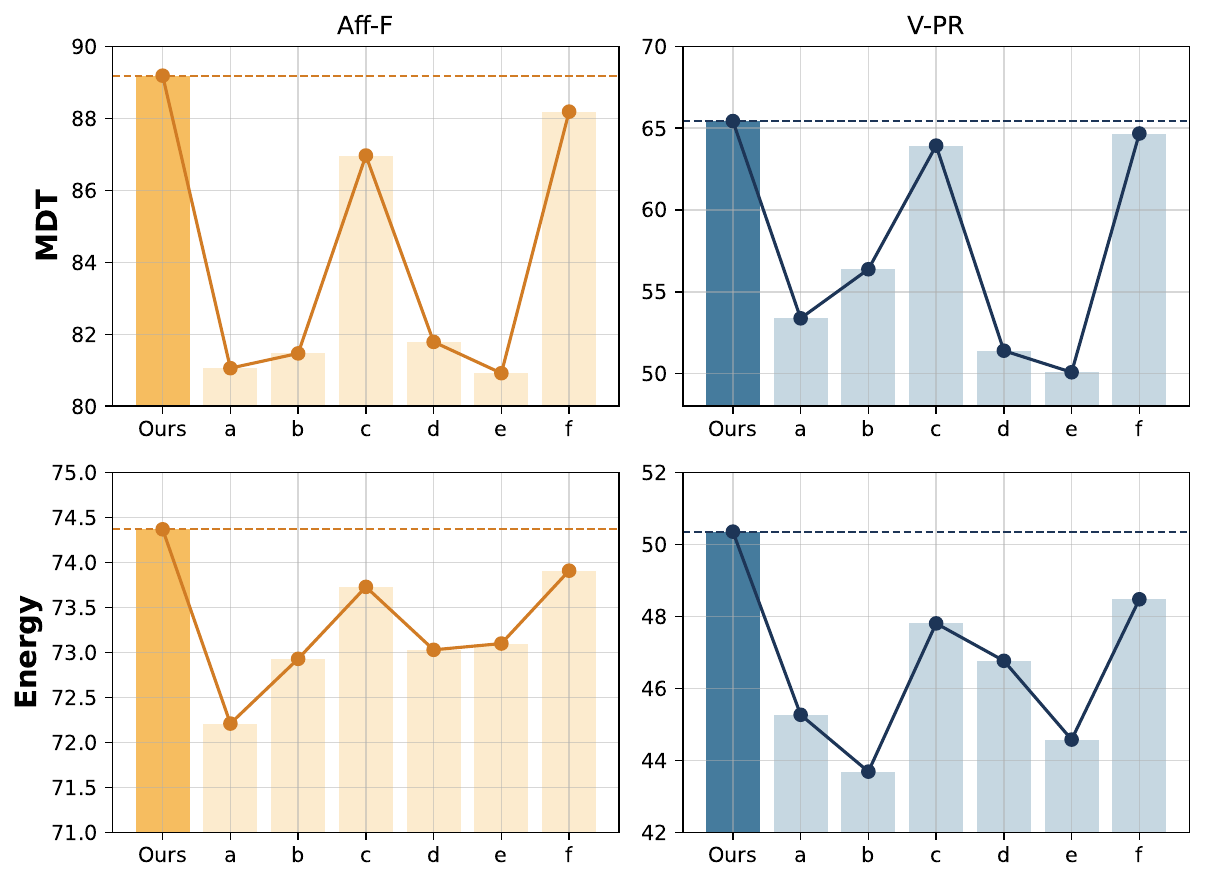}}
  \caption{Ablation studies for MindTS, with the highest metrics highlighted in dark-colored bars.}
  \label{ablation}
\end{wrapfigure}

\textbf{Parameter sensitivity.} We also study the parameter sensitivity of the MindTS. Figure~\ref{Parameter} shows the performance under different patch sizes $p$, time series mask ratios $m$, and compression strengths $\mu$ in the Energy and MDT. As the experimental results show, model performance initially improves and then declines as the patch size increases. Note that a small patch size indicates a larger memory cost. In our experiments, the patch size is usually set to 6. Furthermore, we find that maintaining a mask ratio near 50\% generally leads to better performance. As the mask ratio increases, reconstructing the original time series becomes more challenging, leading to poorer model performance. Besides, we further investigate the impact of the compression strength $\mu$. As shown in the results, the model maintains high performance across a broad range of $\mu$ values (0.1 to 0.9), suggesting that the content condenser is robust to varying compression levels. This stability indicates that MindTS effectively balances semantic preservation and redundancy reduction across different compression strengths.

\begin{figure}[t]
  \centering
  \includegraphics[width=\linewidth]{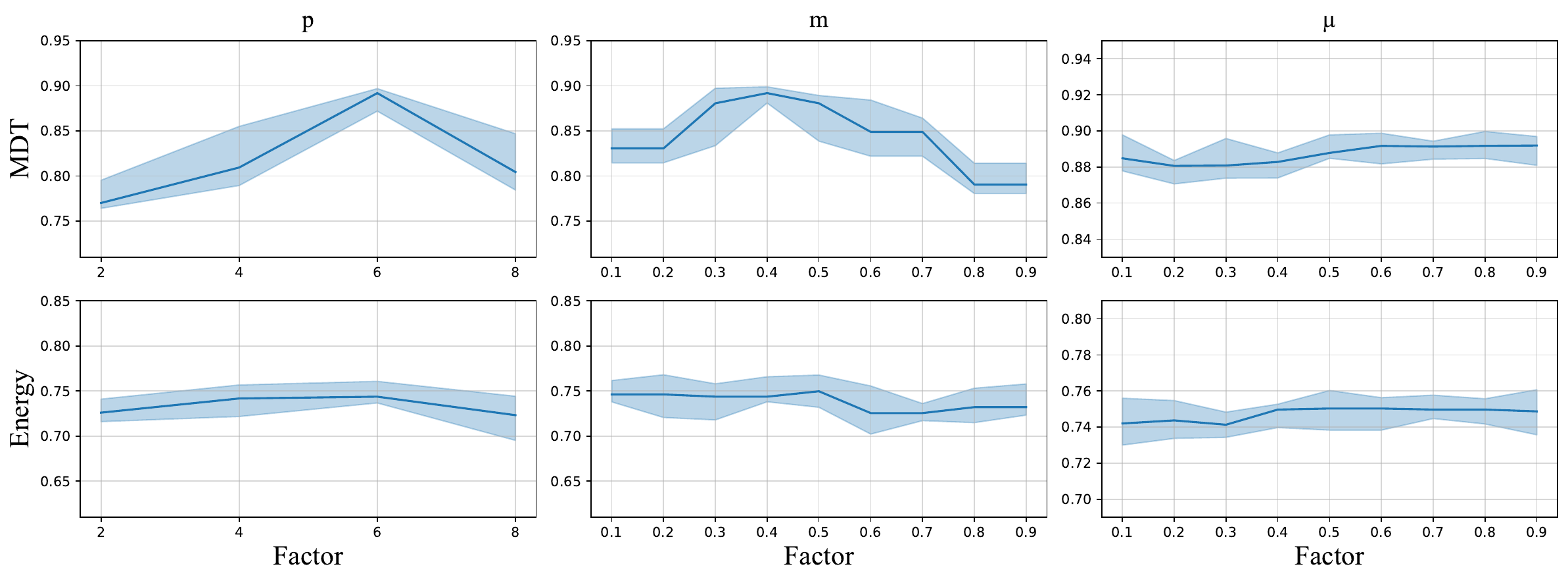}
  \caption{Results of the sensitivity analysis. The vertical coordinate shows the Aff-F score, with higher scores representing better performance. The dark line represents the mean of 5 experiments, and the light area represents the range.}
  \label{Parameter}
  \vspace{-4mm}
\end{figure}

\textbf{Visual analysis.} Figure~\ref{Visualization_MindTS} shows how MindTS works by visualizing different datasets. The first row
shows the original data distribution along with the ground-truth anomaly positions, and the MindTS
anomaly scores in the third row. It can be seen that MindTS can robustly detect anomalies. More detailed visualization cases can be found in Appendixes~\ref{Visualization case}, ~\ref{F:Alignment}.

\begin{figure}[htbp]
  \centering
  \includegraphics[width=\linewidth]{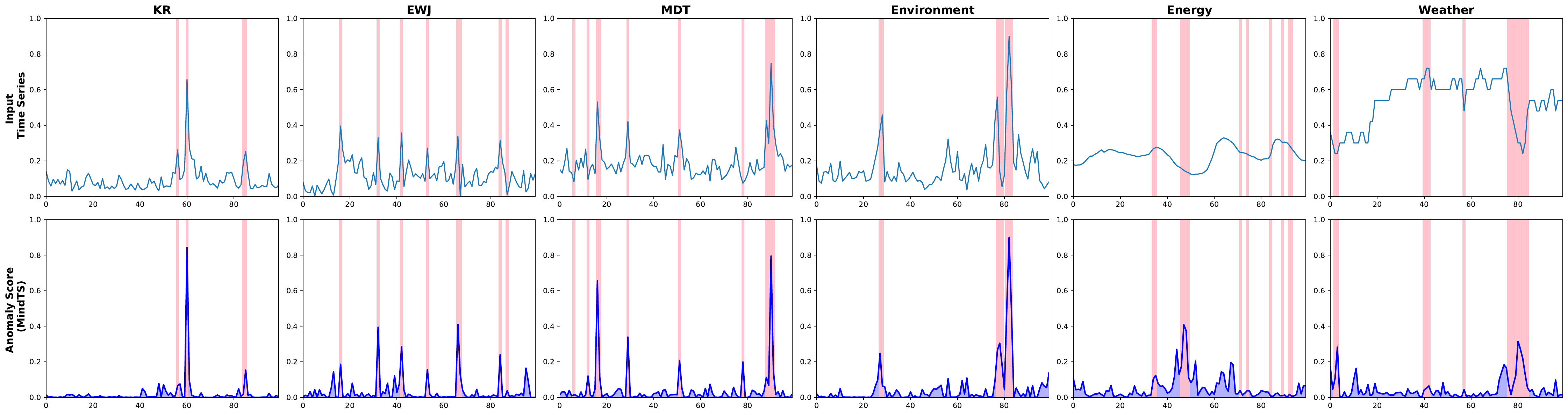}
  \caption{Visualization comparisons of anomaly scores from MindTS for all datasets.}
  \label{Visualization_MindTS}
  \vspace{-0.5em}
\end{figure}

\textbf{Comparison of different LLMs.} To more convincingly demonstrate that the performance improvements of our work stem primarily from architectural design rather than reliance on specific LLMs, we conducted experiments with different LLMs. DeepSeek is employed as the default LLM in our model throughout previous experiments. As shown in Table~\ref{llm_ablation}, our findings indicate that MindTS maintains stable performance across different LLMs and even achieves competitive results when using BERT. This suggests that the choice of LLMs does not exhibit a significant correlation with MindTS performance, and different LLMs can be flexibly adopted within MindTS.

\begin{table*}[ht!]
\centering
\small
\caption{Comparison of different LLMs. Metrics include Aff-F, V-PR, and V-ROC for each dataset.}
\label{llm_ablation}
\resizebox{\textwidth}{!}{
\begin{tabular}{c|ccc|ccc|ccc|ccc|ccc|ccc}
\toprule
\textbf{Method} & \multicolumn{3}{c|}{\textbf{Weather}} & \multicolumn{3}{c|}{\textbf{Energy}} & \multicolumn{3}{c|}{\textbf{Environment}} & \multicolumn{3}{c|}{\textbf{KR}} & \multicolumn{3}{c|}{\textbf{EWJ}} & \multicolumn{3}{c}{\textbf{MDT}} \\
\midrule
\textbf{Metric} & Aff-F & V-PR & V-ROC & Aff-F & V-PR & V-ROC & Aff-F & V-PR & V-ROC & Aff-F & V-PR & V-ROC & Aff-F & V-PR & V-ROC & Aff-F & V-PR & V-ROC \\
\midrule
GPT2      & \textbf{82.71} & 57.81 & \textbf{82.97} & 73.38 & 43.78 & 68.40 & 84.95 & 52.83 & \textbf{93.96} & \textbf{91.21} & \textbf{55.88} & 90.15 & 81.39 & 48.97 & 80.96 & 85.56 & 60.24 & 80.81 \\
BERT      & 80.48 & \textbf{57.84} & 82.67 & 72.99 & 43.07 & 66.65 & 84.75 & 54.25 & 92.14 & 87.05 & 53.75 & \textbf{91.61} & \textbf{84.63} & 43.95 & 78.85 & 82.30 & 53.68 & 76.37 \\
LLAMA     & 81.34 & 56.93 & 82.39 & \textbf{75.64} & 49.21 & \textbf{74.58} & 84.83 & 52.99 & 92.86 & 90.81 & 53.24 & 85.17 & 80.17 & 45.76 & 82.33 & 85.48 & 59.24 & 81.90 \\
DeepSeek  & 82.66 & 57.48 & 82.64 & 74.37 & \textbf{50.36} & 74.44 & \textbf{85.29} & \textbf{56.79} & 93.78 & 90.28 & 53.15 & 89.86 & 83.89 & \textbf{50.42} & \textbf{84.12} & \textbf{89.19} & \textbf{65.44} & \textbf{83.02} \\
\bottomrule
\end{tabular}}
\end{table*}

\textbf{Variant of content condenser.} To provide a more comprehensive analysis, we examine a content condenser variant that explicitly conditions its token retention probabilities on both text and unmasked time patches.

As shown in Table~\ref{condenser_variants}, our original design still outperforms the variant in most cases. This is because allowing the content condenser to access unmasked time patches introduces a potential shortcut. While the intention is to provide additional guidance, it makes the model focus on the time series modality. This variant tends to identify text that appears superficially aligned with known temporal patterns, rather than selecting text based on its actual semantic contribution to the reconstruction task, which is achieve by the cross-modal semantic complementarities. As a result, the ability of the condenser to filter redundant content may be reduced.

To assess the contribution of the smoothness term $\mathcal{L}_{SM}$, we conduct an ablation study. As shown in Table~\ref{lsm_ablation}, removing $\mathcal{L}_{SM}$ leads to a performance drop. This suggests that the absence of the smoothness constraint enforced by $\mathcal{L}_{SM}$ may lead the model to generate incoherent and unstable compressed outputs.

\begin{table*}[ht!]
\centering
\small
\caption{Performance comparison of content condenser variants conditioned with and without unmasked time patches. The best results are highlighted in bold.}
\label{condenser_variants}
\resizebox{\textwidth}{!}{
\begin{tabular}{c|ccc|ccc|ccc|ccc|ccc|ccc}
\toprule
\textbf{Method} & \multicolumn{3}{c|}{ \textbf{Weather}} & \multicolumn{3}{c|}{ \textbf{Energy}} & \multicolumn{3}{c|}{\textbf{Environment}} & \multicolumn{3}{c|}{\textbf{KR}} & \multicolumn{3}{c|}{\textbf{EWJ}} & \multicolumn{3}{c}{\textbf{MDT}} \\
\midrule
\textbf{Metric} & Aff-F &  V-PR &  V-ROC & Aff-F &  V-PR &  V-ROC & Aff-F&  V-PR & V-ROC & Aff-F&  V-PR &  V-ROC & Aff-F &  V-PR & V-ROC & Aff-F &  V-PR &  V-ROC \\
\midrule
Condenser Variant & 82.02&  56.74 & 82.36 & 73.29&  49.01 &  73.20 & 82.46 &  56.77&  \textbf{93.89} & 89.93 & 51.16 &  \textbf{90.55} & 81.89 &  47.05 &  83.77 & 85.59 &  62.72 &  80.67 \\
MindTS & \textbf{82.66} &  \textbf{57.48} &  \textbf{82.64} & \textbf{74.37} &  \textbf{50.36} &  \textbf{74.44} & \textbf{85.29} &  \textbf{56.79}&  93.78 & \textbf{90.28} &  \textbf{53.15} &  89.86 & \textbf{83.89} &  \textbf{50.42} &  \textbf{84.12} & \textbf{89.19} &  \textbf{65.44} &  \textbf{83.02} \\
\bottomrule
\end{tabular}}
\end{table*}

\begin{table*}[ht!]
\centering
\small
\caption{Ablation study on $\mathcal{L}_{SM}$ (all results in \%, best results are highlighted in bold).}
\label{lsm_ablation}
\resizebox{\textwidth}{!}{
\begin{tabular}{c|ccc|ccc|ccc|ccc|ccc|ccc}
\toprule
\textbf{Method} & \multicolumn{3}{c|}{ \textbf{Weather}} & \multicolumn{3}{c|}{ \textbf{Energy}} & \multicolumn{3}{c|}{\textbf{Environment}} & \multicolumn{3}{c|}{\textbf{KR}} & \multicolumn{3}{c|}{\textbf{EWJ}} & \multicolumn{3}{c}{\textbf{MDT}} \\
\midrule
\textbf{Metric} & Aff-F &  V-PR &  V-ROC & Aff-F &  V-PR &  V-ROC & Aff-F&  V-PR & V-ROC & Aff-F&  V-PR &  V-ROC & Aff-F &  V-PR & V-ROC & Aff-F &  V-PR &  V-ROC \\
\midrule
MindTS (w/o $\mathcal{L}_{SM}$) & 81.59 & 55.88 & 81.48 & 73.26 & 48.53 & 73.18 & 83.40 & 54.19 & 91.58 & 88.49 & 49.20 & 88.31 & 82.57 & 49.25 & 83.64 & 86.29 & 63.23 & 81.11 \\
MindTS & \textbf{82.66} &  \textbf{57.48} &  \textbf{82.64} & \textbf{74.37} &  \textbf{50.36} &  \textbf{74.44} & \textbf{85.29} &  \textbf{56.79}&  \textbf{93.78} & \textbf{90.28} &  \textbf{53.15} &  \textbf{89.86} & \textbf{83.89} &  \textbf{50.42} &  \textbf{84.12} & \textbf{89.19} &  \textbf{65.44} &  \textbf{83.02} \\
\bottomrule
\end{tabular}}
\end{table*}

\section{Conclusion}
In this work, we propose a highly capable multimodal time series anomaly detection model MindTS. The MindTS model is designed to address the limitations of existing unimodal approaches by effectively leveraging both time series data and textual information. Overall, it integrates text representations from both endogenous and exogenous views, enabling a fine-grained understanding of text semantics for precise time-text alignment. In addition, the content condenser filters out redundant information. The condensed text is further utilized for cross-modal reconstruction of the time series, optimizing cross-modal interaction. These components collectively empower MindTS with strong anomaly detection capabilities. Comprehensive experiments demonstrate that MindTS achieves competitive or superior performance.

\newpage
\section*{Ethics statement}
Our work exclusively uses publicly available benchmark datasets that contain no personally identifiable information. No human subjects are involved in this research.

\section*{Reproducibility statement}
The performance of MindTS and the datasets used in our work are real, and all experimental results can be reproduced.

\section*{Acknowledgement}
This work was partially supported by National Natural Science Foundation of China (62372179).

\bibliography{iclr2026_conference}
\bibliographystyle{iclr2026_conference}

\newpage
\section*{The use of large language models (LLMS)}
In this work, we only adopt large language models in our methodology and endogenous text generation. Specifically, we employ large language models as the text encoder of MindTS to extract textual features. To generate endogenous text for the multimodal time series corpus, we provide raw time series to the models, encouraging them to produce descriptions of data characteristics. Note that we do not use large language models in writing.

\appendix
\section{Experimental details}
\subsection{Datasets}
\label{Datasets}
The Table~\ref{datasets} provides a summary of the statistics for the publicly available real-world datasets~\citep{liu2024time, dong2024fnspid}. These datasets focus on time series and text data, excluding images or videos~\citep{ma2024followyouremoji,ma2025controllable, ma2025followcreation, ma2025followyourmotion}. To ensure broad coverage and initial relevance of exogenous text to time series, text sources are collected through web search and targeted crawling, combining widely sourced online content with domain-related reports. To ensure semantic relevance between exogenous text and time-series data, 2–3 domain-specific keywords are defined for each dataset and used for web search. For each keyword, the top-ranked search results are collected, and structured information, including timestamp, source, title, and content, is extracted. For report-based sources (e.g., governmental or institutional reports), all available documents are parsed, and only the sections whose topics match the corresponding domain characteristics are retained. To prevent future ground-truth leakage, two safeguards are applied. First, all collected texts contain explicit timestamps, ensuring that future documents are not matched to past observations. Second, each text is further separated into historical factual statements and predictive descriptions, and only factual content is retained. This prevents any predicted future outcomes or implicitly revealed future values from leaking into the model. To comprehensively evaluate the performance of MindTS, we evaluate 6 real-world datasets that cover 4 domains. The anomaly ratio varies from 5.81\% to 17.23\%, the range of feature dimensions varies from 1 to 9, and the sequence length varies from 1622 to 15981. Exogenous texts are often collected from diverse sources such as official reports and news articles, which tend to focus on background context or general conditions. We refer to such text as background text, and the majority of the datasets we used also fall into this category. Therefore, our work focuses on this type of text. Temporal alignment is achieved through binary timestamps that mark the start and end dates of each text~\citep{liu2024time}. This provides a feasible and realistic alignment strategy. For example, in the energy dataset, exogenous texts are energy reports from the U.S. Energy Information Administration, which describe contextual factors such as market demand and economic cycles. Similar background-oriented alignment is common in industrial monitoring and other domains. Importantly, our model does not heavily rely on strict temporal alignment. MindTS can operate under window-level matching, making it applicable to practical scenarios where text is loosely or sparsely aligned with time series.

To investigate whether MindTS can be extended to broader multimodal time series tasks, we also evaluate its forecasting performance on three widely used benchmark datasets covering agriculture, climate, and social good~\cite{liu2024time}. Detailed statistics are provided in Table~\ref{forecasting_datasets}.


\begin{table*}[h]
\centering
\small
\setlength{\tabcolsep}{4pt}
\renewcommand{\arraystretch}{1.1}
\caption{Statistics and descriptions of datasets used for multimodal time series anomaly detection. AR (\%) denotes anomaly ratio.}
\label{datasets}
\resizebox{\textwidth}{!}{
\begin{tabular}{lccccl}
\toprule
\textbf{Dataset} & \textbf{Dim}  & \textbf{AR (\%)} & \makecell{\textbf{Avg Total} \\ \textbf{Length}} & \makecell{\textbf{Timespan} \\ \textbf{(start - end)}}  & \textbf{Description} \\

\midrule
Weather      & 4  & 17.10 & 12339 & 2012.7.17 - 2023.10.20 & Temperature and humidity statistics and reports collected from government websites. \\
Energy      & 9  & 17.23 & 1622 & 1993.4.5 - 2024.4.29 & Gasoline price statistics and energy reports are collected from the U.S. Energy Information Administration. \\
Environment  & 1 & 5.81 & 15981 & 1980.1.1 - 2023.9.30 & Air Quality Index data and related reports collected from the U.S. Environmental Protection Agency and NBC. \\
KR     & 1 & 6.21 & 2655 & 2009.9.15 - 2020.4.1 & \multirow{3}{8cm}{Financial datasets include numerical stock data from Yahoo Finance and news information collected from various financial news websites such as NASDAQ, Bloomberg, and others.} \\
EWJ     & 1 & 9.96 & 2658 & 2009.11.16 - 2020.6.9 &  \\
MDT     & 1 & 11.17 & 2732 & 2009.8.6 - 2020.6.12 &  \\
\bottomrule
\end{tabular}}
\end{table*}

\begin{table*}[h]
\centering
\small
\setlength{\tabcolsep}{3pt}
\caption{The statistics of evaluation datasets for the forecasting task.}
\label{forecasting_datasets}
\begin{tabular}{lccccl}
\toprule
\textbf{Dataset} & \textbf{Dim} & \makecell{\textbf{Prediction} \\ \textbf{Length}}  & \makecell{\textbf{Avg Total} \\ \textbf{Length}} & \makecell{\textbf{Timespan} \\ \textbf{(start - end)}}  & \textbf{Description} \\

\midrule
 Agriculture    & 1 & \{6, 8, 10, 12\}  &  496 &  1983-2024 & Retail Broiler Composite. \\
 Climate      & 5  & \{6, 8, 10, 12\} & 496 & 1983-2024 & Drought Level. \\
 SocialGood  & 1 & \{6, 8, 10, 12\} & 900 & 1950-2024 & Unemployment Rate. \\

\bottomrule
\end{tabular}
\end{table*}

\subsection{Baselines}
\label{Baselines} 
We extensively compare MindTS against 19 baselines, including (1) LLM-based methods: LLMMixer (LMixer)~\citep{kowsher2024llm}, UniTime (UTime)~\citep{liu2024unitime}, GPT4TS (G4TS)~\citep{zhou2023one}, CALF~\citep{liu2025calf}; (2) Pre-trained methods: DADA~\citep{shentu2024towards}, Timer~\citep{liu2024timer}, UniTS~\citep{gao2024units}; (3) Deep learning-based methods: ModernTCN (Modern)~\citep{luo2024moderntcn}, TimesNet (TsNet)~\citep{wu2022timesnet}, DCdetector (DC)~\citep{yang2023dcdetector}, Anomaly Transformer (A.T.)~\citep{xu2021anomaly}, PatchTST (Patch)~\citep{nie2022time}, TranAD~\citep{tuli2022tranad}, DualTF~\citep{nam2024breaking}, iTransformer (iTrans)~\citep{liu2023itransformer}; (4) Non-learning methods: PCA~\citep{shyu2003novel}, IForest (IF)~\citep{liu2008isolation}, LODA~\citep{pevny2016loda}, HBOS~\citep{goldstein2012histogram}.

\begin{itemize}
\item LLMMixer (LMixer)~\citep{kowsher2024llm}: Incorporates multi-scale decomposition of time series data and leverages pre-trained LLMs to process both multi-scale signals and textual prompts, effectively utilizing the semantic knowledge of LLMs for comprehensive temporal analysis.

\item UniTime (UTime)~\citep{liu2024unitime}: Focuses on prompt engineering by introducing learnable prompts, prompt pools, and domain-specific instructions to elicit domain-relevant temporal knowledge from large language models.

\item GPT4TS (G4TS)~\citep{zhou2023one}: Adopts selective fine-tuning of key LLM components such as positional encodings and layer normalization, enabling efficient adaptation to time series data while retaining most of the model’s pre-trained capabilities.

\item CALF~\citep{liu2025calf}: Proposes a cross-modal fine-tuning framework that mitigates distributional discrepancies between the temporal prediction and the aligned textual source branches, enhancing alignment across modalities.

\item DADA~\citep{shentu2024towards}: Develops a general-purpose anomaly detection model for time series, pre-trained on a wide range of domains and readily adaptable to various downstream tasks.

\item Timer~\citep{liu2024timer}: Unifies heterogeneous time series into a single sequence and performs predictive anomaly detection using a sequence modeling approach.

\item UniTS~\citep{gao2024units}: Transforms multiple tasks into a unified token-based representation using a prompt-based framework. For anomaly detection, it generates masked tokens and utilizes denoised outputs to identify anomalies.

\item ModernTCN (Modern)~\citep{luo2024moderntcn}: Adopts a purely convolutional architecture to decouple and model temporal, channel-wise, and variable-wise relationships in multivariate time series.

\item TimesNet (TsNet)~\citep{wu2022timesnet}: Employs a modular structure to decompose complex temporal patterns into different frequency components and maps one-dimensional time series into a two-dimensional space to jointly model intra- and inter-period dynamics.

\item DCdetector (DC)~\citep{yang2023dcdetector}: Uses contrastive learning from both patch-wise and point-wise perspectives to discriminate between normal and anomalous patterns.

\item Anomaly Transformer (A.T.)~\citep{xu2021anomaly}: Based on the hypothesis that anomalies exhibit stronger associations with nearby time points, it uses a minimax strategy to amplify association differences and enhance anomaly discrimination.

\item PatchTST (Patch)~\citep{nie2022time}: Applies channel-independent patching to multivariate time series, improving the model’s ability to capture localized temporal features.

\item TranAD~\citep{tuli2022tranad}: A Transformer-based Anomaly Detection Model that amplifies reconstruction error through adversarial training.

\item DualTF~\citep{nam2024breaking}: Employs a dual-domain architecture with nested sliding windows, where outer and inner windows handle time and frequency domains, respectively, aligning their anomaly scores to enhance detection.

\item iTransformer (iTrans)~\citep{liu2023itransformer}: Embeds time information into variable tokens and applies attention mechanisms to model multivariate correlations.

\item PCA~\citep{shyu2003novel}: Detects anomalies by measuring the deviation of samples in the principal component space, assuming anomalies lie far from the normal data distribution.

\item IForest (IF)~\citep{liu2008isolation}: Detects anomalies by explicitly isolating them through recursive partitioning rather than modeling normal behavior.

\item LODA~\citep{pevny2016loda}: Approximates joint distributions using multiple one-dimensional histograms to identify outliers.

\item HBOS~\citep{goldstein2012histogram}: An unsupervised histogram-based anomaly detection method.

\end{itemize}

\subsection{Metrics}
\label{Metrics}
This subsection introduces the metrics used in this study, which are mainly categorized into two types. The first is label-based metrics, including Affiliated Precision (Aff-P), Affiliated Recall (Aff-R), and Affiliated F1-score (Aff-F)~\citep{huet2022local}, Accuracy (Acc), Precision (P), Recall (R), F1-score (F1), Range Precision (R-P), Range Recall (R-R), and Range F1-score (R-F)~\citep{tatbul2018precision}. The second is score-based metrics, including the Area Under the Precision-Recall Curve (A-P)~\citep{davis2006relationship}, the Area Under the Receiver Operating Characteristic Curve (A-R)~\citep{fawcett2006introduction}, the Range Area Under the Precision-Recall Curve (R-A-P), the Range Area Under the Receiver Operating Characteristic Curve (R-A-R)~\citep{paparrizos2022volume}, the Volume Under the Precision-Recall Surface (V-PR), and the Volume Under the Receiver Operating Characteristic Surface (V-ROC)~\citep{paparrizos2022volume}. MindTS evaluates all metrics to assess each method's performance.

\subsection{Implementation details}
\label{Implementation}
We adhere to the evaluation protocol proposed in TFB~\citep{qiu2024tfb, qiu2025DBLoss} during testing by disabling the ``drop last'' operation, ensuring a fair comparison across all models. We conduct experiments using Pytorch with NVIDIA Tesla-A800-80GB GPUs. We employ the Adam optimizer~\citep{kingma2014adam} during training. All baselines are based on our runs, using the identical hardware. We employ official or open-source implementations published on GitHub and follow the configurations recommended in their papers. The initial batch size is 64, which can be halved (down to a minimum of 8) if an Out-Of-Memory (OOM) error occurs. We assign equal weights of 1 to optimization objectives. Experiments show that this sample configuration delivers stable and competitive performance, indicating that additional tuning is unnecessary.

\newpage
\section{Lemma proofs}
\label{B}
In this section, we explain the proof of Equation \eqref{UB} in our paper.

\begin{proof}
    The definition of mutual information is described as
\begin{equation}
    \begin{aligned}
        I(\mathbf{Z}_\text{text}; \mathbf{Z_\text{con}}) &= \sum_{\mathbf{Z}_\text{text}} \sum_{\mathbf{Z_\text{con}}} \mathbb{P}(\mathbf{Z}_\text{text},\mathbf{Z_\text{con}}) \log \frac{\mathbb{P}(\mathbf{Z}_\text{text},\mathbf{Z_\text{con}})}{\mathbb{P}(\mathbf{Z}_\text{text}) \mathbb{P}(\mathbf{Z_\text{con}})} \\
        &= \mathbb{E}_{\mathbf{Z}_\text{text},\mathbf{Z_\text{con}}} \left[ \log \frac{\mathbb{P}(\mathbf{Z}_\text{text},\mathbf{Z_\text{con}})}{\mathbb{P}(\mathbf{Z}_\text{text}) \mathbb{P}(\mathbf{Z_\text{con}})} \right]. 
    \end{aligned}
\end{equation}
Then, by introducing a variational approximation $\mathbb{G}(\mathbf{Z}_\text{con})$, we can further derive that 
\begin{equation}
    \begin{aligned}
        &\ \ \ \ \ \mathbb{E}_{\mathbf{Z}_\text{text},\mathbf{Z_\text{con}}} \left[ \log \frac{\mathbb{P}(\mathbf{Z}_\text{text},\mathbf{Z_\text{con}})}{\mathbb{P}(\mathbf{Z}_\text{text}) \mathbb{P}(\mathbf{Z_\text{con}})} \right] \\
        &= \mathbb{E}_{\mathbf{Z}_\text{text},\mathbf{Z_\text{con}}} \left[ \log \frac{\mathbb{P}(\mathbf{Z_\text{con}}|\mathbf{Z}_\text{text})}{\mathbb{P}(\mathbf{Z}_\text{con})} \right] \\
        &= \mathbb{E}_{\mathbf{Z}_\text{text},\mathbf{Z_\text{con}}} \left[ \log \frac{\mathbb{P}(\mathbf{Z}_\text{con}|\mathbf{Z_\text{text}})}{\mathbb{G}(\mathbf{Z}_\text{con})} + \log \frac{\mathbb{G}(\mathbf{Z}_\text{con})}{\mathbb{P}(\mathbf{Z}_\text{con})}\right].
    \end{aligned}
\end{equation}

Based on the KL-divergence, the second term of the above equation can be rewritten as 
\begin{equation}
\begin{aligned}
    \mathbb{E}_{\mathbf{Z}_\text{text},\mathbf{Z_\text{con}}} \left[ \log \frac{\mathbb{G}(\mathbf{Z}_\text{con})}{\mathbb{P}(\mathbf{Z}_\text{con})} \right] &= \mathbb{E}_{\mathbf{Z}_\text{text}|\mathbf{Z_\text{con}}} \left[ \mathbb{P}(\mathbf{Z}_\text{con}) \log \frac{\mathbb{G}(\mathbf{Z}_\text{con})}{\mathbb{P}(\mathbf{Z}_\text{con})} \right] \\
    &= - \mathbb{E}_{\mathbf{Z}_\text{text}|\mathbf{Z_\text{con}}} \left[ \mathbb{P}(\mathbf{Z}_\text{con}) \log \frac{\mathbb{P}(\mathbf{Z}_\text{con})}{\mathbb{G}(\mathbf{Z}_\text{con})} \right] \\
    &= -\mathbb{E}_{\mathbf{Z}_\text{text}|\mathbf{Z_\text{con}}} \left[ \mathrm{KL}(\mathbb{P}(\mathbf{Z}_\text{con})||\mathbb{G}(\mathbf{Z}_\text{con})) \right].
\end{aligned}
\end{equation}

Such that 
\begin{equation}
\begin{aligned}
    I(\mathbf{Z}_\text{text}; \mathbf{Z_\text{con}}) &= \mathbb{E}_{\mathbf{Z}_\text{text}} \left[ \mathrm{KL}(\mathbb{P}(\mathbf{Z_\text{con}}|\mathbf{Z}_\text{text})||\mathbb{G}(\mathbf{Z}_\text{con}))\right] - \mathbb{E}_{\mathbf{Z}_\text{text}|\mathbf{Z_\text{con}}} \left[ \mathrm{KL}(\mathbb{P}(\mathbf{Z}_\text{con})||\mathbb{G}(\mathbf{Z}_\text{con})) \right] \\
    &\leq \mathbb{E}_{\mathbf{Z}_\text{text}} \left[ \mathrm{KL}(\mathbb{P}(\mathbf{Z_\text{con}}|\mathbf{Z}_\text{text})||\mathbb{G}(\mathbf{Z}_\text{con}))\right] 
\end{aligned}
\end{equation}

It should be noted that $\mathbb{G}(\mathbf{Z}_\text{con})$ is a variational approximation, such that the distribution $\mathbb{G}(\mathbf{Z}_\text{con})$ can approximate $\mathbb{P}(\mathbf{Z}_\text{con})$ in the process of optimization, that is $\mathbb{G}(\mathbf{Z}_\text{con}) = \mathbb{P}(\mathbf{Z}_\text{con})$. In this case, $I(\mathbf{Z}_\text{text}; \mathbf{Z_\text{con}}) = \mathbb{E}_{\mathbf{Z}_\text{text}} \left[ \mathrm{KL}(\mathbb{P}(\mathbf{Z}_\text{con}|\mathbf{Z_\text{text}})||\mathbb{G}(\mathbf{Z}_\text{con}))\right]$, and the upper bound presented in Lemma \ref{lemma1} is tight. The proof is completed.
\end{proof}

\newpage
\section{Additional experimental results}
\label{additional}
In this section, we present the performance of MindTS and 19 numerical-only unimodal methods on additional evaluation metrics. Specifically, Tables~\ref{A-R}--\ref{Aff-P} present the comparative evaluation results across the following metrics: (AUC-ROC, R-AUC-ROC, VUS-ROC), (Accuracy), (AUC-PR, R-AUC-PR, VUS-PR), (Precision, Recall, F1-score), (Range-Recall, Range-Precision, Range-F1-score), and (Affiliated-Precision, Affiliated-Recall, Affiliated-F1-score), respectively.

Tables~\ref{A-R(MMD)}, ~\ref{ACC(MMD)}, ~\ref{A-P(MMD)}, ~\ref{P(MMD)}, ~\ref{R-R(MMD)} and ~\ref{Aff-P(MMD)} present the extension of 19 numerical-only unimodal methods into multimodal forms using the MM-TSFLib framework~\citep{liu2024time}, and compare them with MindTS across a comprehensive set of 16 evaluation metrics.

\begin{table*}[ht!]
\centering
\small
\caption{Average A-R (AUC-ROC), R-A-R (R-AUC-ROC) and V-ROC (VUS-ROC) accuracy for MindTS and all numerical-only unimodal methods. The best results are highlighted in bold, and the second-best results are underlined.}
\label{A-R}
\resizebox{\textwidth}{!}{
\begin{tabular}{c|ccc|ccc|ccc|ccc|ccc|ccc}
\toprule
\textbf{Datasets} & \multicolumn{3}{c|}{\textbf{Weather}} &  \multicolumn{3}{c|}{\textbf{Energy}} &  \multicolumn{3}{c|}{\textbf{Environment}} &  \multicolumn{3}{c|}{\textbf{KR}} &  \multicolumn{3}{c|}{\textbf{EWJ}} &  \multicolumn{3}{c}{\textbf{MDT}} \\
\midrule

\textbf{Metric} & A-R & R-A-R & V-ROC & A-R & R-A-R & V-ROC & A-R & R-A-R & V-ROC & A-R & R-A-R & V-ROC & A-R & R-A-R & V-ROC & A-R & R-A-R & V-ROC\\
\midrule

HBOS & 64.47 & 54.12 &54.16 & 60.80 & 51.06 &51.50 & 56.42 & 50.79 &51.03 & 75.16 & 61.80 &61.41 & 71.82 & 61.02 &62.07 & 60.26 & 54.86 &55.30\\

LODA & 69.67 & 56.88 &57.00 & 59.54 & 56.22 &55.90 & 58.44 & 50.54 &50.69 & 73.74 & 60.27 &59.99 & 71.40 & 60.60 &61.65 & 66.69 & 55.56 &55.98 \\

IF & 67.81 & 56.69 &56.45 & 60.32 & 52.64 &53.61 & 52.37 & 45.96 &46.20 & 74.45 & 61.10 &60.70 & 69.20 & 57.94 &59.24 & 63.92 & 53.52 &54.02 \\

PCA  & 67.17 & 57.80 &57.38 & 61.14 & 52.64 &53.07 & 48.60 & 35.71 &37.08  & 63.58 & 51.01 &47.51 & 54.35 & 43.78 &45.26 & 54.51 & 41.90 &44.09 \\

iTrans & 41.22 & 74.45 &73.37 & 65.60 & 64.25 &63.06 & 82.35 & 70.98 &73.81 & 83.22 & 74.41 &76.12 & 76.37 & 68.29 &72.16 & 79.32 & 67.65 &71.87\\

DulTF  & 64.49 & 28.95 &57.84 & 49.90 & 38.00 &38.36 & 46.64 & 31.84 &6.30 & 68.77 & 55.45 &54.51 & 74.12 & 60.49 &64.31 & 75.13 & 60.50 &63.38 \\

TranAD  & \textbf{85.51} & 79.38 &78.75 & 67.01 & 56.65 &56.37 & 26.32 &9.85 &14.20 & 60.64 & 37.02 &41.05 & 60.35 & 44.11 &49.60 & 44.10 & 24.47 &28.55 \\

Patch  & 82.02 & 80.47 &79.97 & 66.70 & 61.39 &58.31 & 94.17 & 91.12 &90.86 & 82.15 & 72.72 &74.65 & 78.53 & 69.26 &71.56 & 84.55 & 75.61 &77.69 \\

A.T.  & 47.11 & 43.11 &45.02 & 38.68 & 31.52 &31.56 & 61.88 & 51.42 &51.98 & 51.25 & 40.18 &41.97 & 43.81 & 27.50 &31.75  & 56.44 & 41.41 &44.53 \\

DC & 47.90 & 45.41 &45.56  & 48.75 & 45.39 &45.93 & 54.98 & 39.34 &41.28 & 52.97 & 41.75 &43.04 & 53.40 & 45.69 &47.10 & 53.82 & 43.65 &45.02 \\

TsNet & 81.10 & \underline{83.11} &\underline{82.30} & \underline{71.36} & 61.56 &59.47 & 91.84 & 87.56 &87.97 & 85.88 & 78.29 &79.00 & 82.39 & 74.22 &75.76 & 86.67 & 77.01 &79.56\\

Modern & 80.66 & 82.36 &81.14 & 70.80 & \underline{65.34} &\underline{65.05} & 93.53 & 90.25 &89.78 & \underline{93.39} & \textbf{89.77} &\underline{89.78} & \underline{87.82} & \textbf{83.72} &\underline{83.88} & \underline{88.77} & \underline{81.59} &\underline{82.30}\\

G4TS &74.47 
&71.43 
&70.03
&66.54 
&53.54 
&53.10
&75.79 
&63.10 
&66.79
&78.30 
&65.15 
&67.81
&75.58 
&64.86 
&67.95
&74.79 
&59.00 
&62.30
\\

CALF  &70.48 
&64.67 
&63.63
&61.56 
&59.26 
&57.35
&59.54 
&44.91 
&57.35
&65.09 
&48.82 
&53.22
&67.70 
&55.57 
&59.03
&53.13 
&35.41 &39.81\\

UniTS  &81.22 
&75.55 
&75.08
&63.38 
&52.12 
&51.15
&95.19 
&92.55 
&92.03
&80.95 
&71.29 
&73.93
&79.87 
&71.32 
&73.91
&73.19 
&56.72  &58.67\\ 

Timer  &80.86 
&73.73 
&73.22
&60.54 
&46.82 
&46.03
&\underline{95.36} 
&\underline{92.37} 
&\underline{92.10}
&66.72 
&74.61 
&75.99
&76.15 
&64.71 
&75.99
&75.65 
&58.78 &60.28\\ 

UTime  &81.09 
&79.32 
&78.45
&64.17 
&51.01 
&49.97
&95.13 
&92.16 
&91.77
&82.36 
&71.49 
&73.55 
&77.71 
&67.15 
&64.49
&75.59 
&58.98 &61.00\\

LMixer &79.60 
&72.54 
&71.71
&61.31 
&55.25 
&53.04
&92.99 
&87.83 
&89.75
&65.77 
&49.01 
&52.79
&57.69 
&41.75 
&46.80 
&60.30 
&42.44 &47.06\\

DADA  &66.37 
&61.95 
&61.03 
&62.33 
&55.78 
&54.37 
&70.33 
&87.27 
&54.37 
&79.53 
&69.91 
&70.82 
&79.11 
&68.44 
&71.79
&79.04 
&63.94 &66.76\\

\midrule
MindTS  &\underline{84.06} 
&\textbf{83.80} 
&\textbf{82.64}
&\textbf{81.26} 
&\textbf{75.51} 
&\textbf{74.44} 
&\textbf{96.33} 
&\textbf{94.04} 
&\textbf{93.78} 
&\textbf{93.51} 
&\underline{89.60} 
&\textbf{89.86} 
&\textbf{87.95} 
&\underline{83.19} 
&\textbf{84.12} 
&\textbf{90.46} 
&\textbf{83.15}  &\textbf{83.02} \\ 
\bottomrule
\end{tabular}}
\end{table*}

\begin{table*}[ht!]
\centering
\small
\caption{Average ACC (Accuracy) measures for MindTS and all numerical-only unimodal methods. The best results are highlighted in bold, and the second-best results are underlined.}
\label{ACC}
\resizebox{0.65\columnwidth}{!}{
\begin{tabular}{c|c|c|c|c|c|c}
\toprule
\textbf{Datasets} & \textbf{Weather} & \textbf{Energy} & \textbf{Environment} & \textbf{KR} & \textbf{EWJ} & \textbf{MDT} \\
\midrule

\textbf{Metric} & ACC & ACC & ACC & ACC & ACC & ACC \\
\midrule

HBOS  &84.60 &59.69 &\textbf{94.87} &\textbf{95.86} &87.03 &\underline{90.48} \\                   
                        
LODA &\textbf{87.60}
&52.92
&92.03
&\underline{95.85}
&86.28
&90.48 \\ 

IF &\underline{85.62}
&55.38
&84.87
&94.54
&86.84
&88.10 \\ 

PCA   &62.72
&59.08
&49.27
&82.11
&72.18
&65.02 \\ 

iTrans &67.02
&53.85
&81.46
&77.97
&68.70
&77.84 \\

DulTF  &53.65
&23.08
&42.89
&86.44
&85.90
&63.92 \\ 

TranAD  &64.67
&67.08
&29.20
&90.21
&64.66
&41.58 \\ 

Patch  &77.76
&49.85
&90.06
&84.37
&85.53
&83.88  \\ 

A.T.  &67.10
&57.54
&90.37
&78.53
&50.56
&57.33 \\ 

DC  &72.85
&\underline{71.08}
&54.45
&72.32
&82.71
&74.36 \\ 

TsNet &81.36
&68.00
&87.21
&87.95
&80.26
&86.45 \\

Modern &81.28
&51.38
&90.72
&89.08
&\underline{88.16}
&84.80 \\

G4TS &65.19
&39.38
&82.37
&74.01
&79.51
&83.70\\

CALF  &63.01 &68.31 &66.18 &87.19 &76.70 &71.25 \\
UniTS  &71.80 &45.54 &90.97 &87.38 &79.14 &85.16 \\ 
Timer  &66.73 &44.92 &90.03 &93.79 &81.77 &82.42 \\ 
UTime  &81.12 &61.23 &90.06 &91.90 &77.26 &85.71 \\
LMixer &65.76 &45.23 &89.97 &76.27 &71.43 &65.57 \\
DADA  &58.39 &59.38 &\underline{94.19} &92.47 &85.71 & 87.18  \\ 
\midrule
MindTS  &84.76  &\textbf{80.00}  &90.09  &91.15 &\textbf{88.91} &\textbf{93.96}  \\ 

\bottomrule
\end{tabular}}
\end{table*}

\begin{table*}[ht!]
\centering
\small
\caption{Average A-P (AUC-PR), R-A-P (R-AUC-PR) and V-PR (VUS-PR) accuracy measures for MindTS and all numerical-only unimodal methods. The best results are highlighted in bold, and the second-best results are underlined.}
\label{A-P}
\resizebox{\textwidth}{!}{
\begin{tabular}{c|ccc|ccc|ccc|ccc|ccc|ccc}
\toprule
\textbf{Datasets} & \multicolumn{3}{c|}{\textbf{Weather}} &  \multicolumn{3}{c|}{\textbf{Energy}} &  \multicolumn{3}{c|}{\textbf{Environment}} &  \multicolumn{3}{c|}{\textbf{KR}} &  \multicolumn{3}{c|}{\textbf{EWJ}} &  \multicolumn{3}{c}{\textbf{MDT}} \\
\midrule

\textbf{Metric} & A-P & R-A-P & V-PR & A-P & R-A-P & V-PR & A-P & R-A-P & V-PR & A-P & R-A-P & V-PR & A-P & R-A-P & V-PR & A-P & R-A-P & V-PR\\
\midrule
HBOS & 31.16 & 46.37 & 46.58 & 21.55 & 42.14 & 42.57 & 16.97 & 49.84 & 50.30 & 41.09 & 51.69 & \underline{52.06} & 25.24 & 38.48 & 41.19 & 28.66 & 43.16 & 44.77 \\
LODA & 41.22 & \underline{54.75} & \underline{55.03} & 20.75 & \underline{48.94} & \underline{48.63} & 9.98 & 18.19 & 18.66 & 40.14 & 51.31 & 51.82 & 24.16 & 37.46 & 40.08 & 29.81 & 43.18 & 44.63 \\
IF & 35.44 & 49.65 & 49.66 & 21.17 & 45.19 & 46.03 & 6.18 & 8.28 & 8.94 & 32.21 & 43.07 & 43.31 & 22.86 & 34.74 & 37.81 & 22.41 & 33.63 & 35.33 \\
PCA & 25.02 & 47.47 & 47.13 & 21.69 & 43.89 & 44.30 & 5.67 & 16.05 & 17.87 & 10.18 & 18.99 & 22.13 & 10.99 & 16.49 & 19.37 & 12.29 & 19.53 & 22.93 \\
iTrans & 41.22 & 42.93 & 42.56 & 35.63 & 35.71 & 35.82 & 34.90 & 23.09 & 24.87 & 36.05 & 24.95 & 27.37 & 34.42 & 24.99 & 28.98 & 46.91 & 32.94 & 36.36 \\
DulTF & 25.22 & 28.95 & 29.27 & 22.58 & 22.94 & 23.52 & 5.35 & 5.47 & 6.53 & 21.96 & 17.73 & 17.92 & 42.45 & 31.17 & 33.75 & 39.84 & 31.52 & 33.83 \\
TranAD & \textbf{60.90} & 52.04 & 52.08 & 36.38 & 33.17 & 33.80 & 7.09 & 4.49 & 4.91 & 53.23 & 28.04 & 28.42 & 27.85 & 15.20 & 17.80 & 25.69 & 13.11 & 14.33 \\
Patch & 53.39 & 49.81 & 50.03 & 34.25 & 35.25 & 34.41 & 58.92 & 45.65 & 45.78 & 53.60 & 35.32 & 36.18 & 47.91 & 33.37 & 36.08 & 54.11 & 39.70 & 41.67 \\
A.T. & 16.71 & 18.85 & 19.17 & 14.02 & 19.24 & 19.69 & 14.06 & 16.22 & 18.14 & 7.01 & 6.44 & 7.94 & 8.97 & 9.01 & 10.85 & 15.02 & 13.20 & 15.93 \\
DC & 17.08 & 18.06 & 18.33 & 17.69 & 21.77 & 22.57 & 6.48 & 6.55 & 7.69 & 8.10 & 7.04 & 8.49 & 10.88 & 12.52 & 15.37 & 11.59 & 13.30 & 15.72 \\
TsNet & 47.65 & 50.58 & 50.09 & \underline{42.05} & 38.17 & 38.61 & 64.14 & 50.62 & 50.64 & \underline{67.47} & \textbf{52.83} & 51.60 & 54.99 & 41.84 & 43.15 & \underline{65.57} & 48.60 & 50.54 \\
Modern & 51.98 & 52.67 & 52.13 & 33.16 & 35.64 & 36.60 & 55.34 & 42.50 & 42.26 & 56.93 & 40.69 & 39.95 & 53.36 & \underline{43.86} & \underline{44.75} & 65.48 & \underline{51.72} & \underline{52.18} \\
G4TS & 44.12 & 41.37 & 41.30 & 33.75 & 31.10 & 31.68 & 35.37 & 22.14 & 23.94 & 56.78 & 37.53 & 38.23 & 46.75 & 32.83 & 35.63 & 60.40 & 42.48 & 44.81 \\
CALF & 37.38 & 34.98 & 35.07 & 32.32 & 32.61 & 33.49 & 9.19 & 7.49 & 8.96 & 25.26 & 13.25 & 16.04 & 22.74 & 17.53 & 20.66 & 16.25 & 12.54 & 15.15 \\
UniTS & 49.19 & 44.31 & 44.35 & 27.51 & 30.70 & 31.04 & 64.13 & 50.55 & 50.24 & 55.39 & 40.75 & 43.32 & 50.33 & 36.79 & 39.32 & 53.44 & 36.14 & 37.61 \\
Timer & 48.87 & 43.20 & 43.21 & 38.05 & 28.81 & 29.46 & 64.52 & 51.17 & 51.42 & 66.72 & 51.59 & 51.41 & 44.01 & 30.67 & 33.17 & 55.86 & 37.54 & 38.38 \\
UTime & 54.67 & 52.18 & 51.90 & 37.96 & 32.25 & 32.88 & 62.59 & 48.80 & 48.87 & 63.22 & 46.36 & 46.87 & 43.43 & 30.31 & 32.39 & 56.00 & 37.32 & 38.94 \\
LMixer & 49.71 & 43.40 & 43.47 & 32.85 & 30.59 & 30.35 & 64.49 & 49.95 & 52.94 & 28.19 & 13.25 & 15.13 & 18.81 & 12.36 & 15.21 & 19.86 & 15.30 & 19.10 \\
DADA & 29.80 & 29.86 & 30.00 & 37.81 & 33.47 & 34.18 & \textbf{70.33} & \underline{55.96} & \underline{56.20} & 63.55 & 46.95 & 45.90 & \underline{55.24} & 41.61 & 43.36 & 63.03 & 44.63 & 46.81 \\
\midrule
MindTS & \underline{58.38} & \textbf{57.85} & \textbf{57.48} & \textbf{50.99} & \textbf{50.19} & \textbf{50.36} & \underline{69.46} & \textbf{56.81} & \textbf{56.79} & \textbf{67.52} & \underline{52.64} & \textbf{53.15} & \textbf{61.75} & \textbf{49.31} & \textbf{50.42} & \textbf{76.51} & \textbf{66.2} & \textbf{65.44} \\
\bottomrule
\end{tabular}}
\end{table*}

\begin{table*}[ht!]
\centering
\small
\caption{Average P (Precision), R (Recall) and F1 (F1-score) accuracy measures for MindTS and all numerical-only unimodal methods. The best results are highlighted in bold, and the second-best results are underlined.}
\label{P}
\resizebox{\textwidth}{!}{
\begin{tabular}{c|ccc|ccc|ccc|ccc|ccc|ccc}
\toprule
\textbf{Datasets} & \multicolumn{3}{c|}{\textbf{Weather}} &  \multicolumn{3}{c|}{\textbf{Energy}} &  \multicolumn{3}{c|}{\textbf{Environment}} &  \multicolumn{3}{c|}{\textbf{KR}} &  \multicolumn{3}{c|}{\textbf{EWJ}} &  \multicolumn{3}{c}{\textbf{MDT}} \\
\midrule

\textbf{Metric} & P & R & F1 & P & R & F1 & P & R & F1 & P & R & F1 & P & R & F1 & P & R & F1\\
\midrule
HBOS & 58.61 & 33.89 & 42.94 & 24.14 & 62.50 & 34.83 & \textbf{92.31} & 12.90 & 22.64 & \underline{73.91} & 51.52 & \underline{60.71} & 38.89 & 52.83 & 44.80 & \underline{64.52} & 32.79 & 43.48\\
LODA & \textbf{73.97} & 42.42 & \underline{53.92} & 22.29 & 69.64 & 33.77 & 26.21 & 20.43 & 22.96 & \textbf{76.19} & 48.48 & 59.26 & 36.84 & 65.73 & 43.41 & 62.86 & 36.07 & 45.83 \\
IF & \underline{62.09} & 40.76 & 49.21 & 23.03 & 64.29 & 34.39 & 8.15 & 15.59 & 10.70 & 56.67 & 51.52 & 53.97 & 37.31 & 47.17 & 41.67 & 45.45 & 32.79 & 38.10 \\
PCA & 27.81 & 73.93 & 40.41 & 24.16 & \textbf{96.43} & 35.12 & 5.51 & 47.85 & 9.88 & 15.56 & 42.42 & 22.76 & 13.18 & 32.07 & 18.68 & 13.89 & 40.98 & 20.75 \\
iTrans & 30.48 & 72.51 & 42.92 & 22.67 & 67.86 & 34.21 & 19.67 & 70.96 & 30.81 & 18.84 & 78.79 & 30.41 & 22.44 & 66.04 & 33.49 & 30.00 & 73.33 & 42.65 \\
DulTF & 17.27 & \textbf{99.53} & 29.43 & 17.76 & 46.43 & 30.00 & 5.43 & 53.76 & 9.87 & 5.82 & 81.82 & 10.87 & 36.90 & 58.49 & 45.26 & 31.50 & 65.57 & 42.55 \\

TranAD & 31.65 & \underline{91.94} & 47.08 & 25.24 & 69.64 & 32.70 & 2.29 & 26.88 & 4.28 & 17.48 & 54.55 & 26.47 & 13.02 & 47.17 & 20.41 & 8.65 & 44.26 & 14.48 \\

Patch & 37.50 & 73.22 & 49.60 & 21.96 & \underline{83.93} & 34.81 & 34.36 & 77.96 & 47.70 & 24.49 & 72.73 & 36.64 & 36.36 & 60.38 & 45.39 & 39.05 & 67.21 & 49.40 \\
A.T. & 13.89 & 17.78 & 15.59 & 9.02 & 17.86 & 12.50 & 23.48 & 29.03 & 25.96 & 7.41 & 30.30 & 11.90 & 8.55 & 43.40 & 14.39 & 14.00 & 45.90 & 25.46 \\
DC & 12.65 & 9.95 & 11.14 & 15.38 & 10.71 & 12.63 & 7.07 & 54.84 & 12.53 & 7.46 & 30.30 & 11.98 & 15.63 & 18.87 & 17.09 & 15.04 & 27.87 & 19.54 \\
TsNet & 46.40 & 58.06 & 51.58 & 21.40 & 82.14 & 33.95 & 29.24 & 84.41 & 43.43 & 47.17 & 75.76 & 58.14 & 37.14 & \textbf{73.58} & 49.37 & 43.69 & 73.77 & \underline{54.88} \\
Modern & 46.48 & 62.56 & 53.33 & \underline{29.63} & 71.43 & \underline{41.88} & 35.66 & 74.19 & 48.17 & 33.77 & 78.79 & 47.27 & \underline{44.05} & 69.81 & \underline{54.01} & 40.68 & \underline{78.69} & 53.63 \\
G4TS & 27.80 & 72.27 & 40.16 & 20.70 & \underline{83.93} & 33.22 & 14.46 & 69.89 & 23.96 & 16.98 & \textbf{81.82} & \textbf{74.01} & 43.28 & 54.72 & 48.33 & 37.72 & 70.49 & 49.14 \\
CALF & 25.70 & 69.19 & 37.48 & 26.67 & 50.00 & 34.78 & 7.97 & 45.70 & 13.58 & 23.08 & 45.45 & 30.61 & 22.22 & 56.60 & 31.94 & 18.68 & 27.87 & 22.39 \\
UniTS & 35.88 & 82.46 & 50.00 & 20.20 & 73.21 & 31.66 & 35.96 & 83.33 & 50.24 & 30.23 & 79.79 & 30.23 & 26.95 & 71.70 & 39.18 & 44.19 & 62.30 & 51.70 \\
Timer & 31.79 & 86.02 & 46.42 & 20.53 & 69.64 & 31.71 & 38.13 & \underline{89.78} & \underline{53.53} & 45.00 & \textbf{81.82} & 58.04 & 30.36 & 64.15 & 41.21 & 36.00 & 73.77 & 48.39 \\
UTime & 45.99 & 59.72 & 51.96 & 18.75 & 69.64 & 29.55 & 35.27 & 84.94 & 49.84 & 41.94 & 78.79 & 54.74 & 26.71 & \textbf{73.58} & 39.20 & 45.12 & 60.66 & 51.75 \\
LMixer & 29.14 & 82.94 & 43.13 & 20.95 & 78.57 & 33.08 & 32.59 & 86.55 & 47.35 & 12.80 & 48.49 & 20.25 & 11.64 & 50.94 & 18.95 & 18.82 & 52.46 & 27.71 \\
DADA & 24.03 & 66.35 & 35.29 & 20.10 & 73.21 & 31.54 & \underline{50.00} & 77.96 & 60.92 & 37.50 & 72.72 & 49.48 & 38.14 & 69.81 & 49.33 & 44.83 & 63.93 & 53.70 \\
\midrule
MindTS & 54.77 & 62.56 & \textbf{58.41} & \textbf{45.16} & 75.00 & \textbf{56.38} & 36.21 & \textbf{92.47} & 52.04 & 39.40 & 78.79 & 52.53 & \textbf{46.34} & 71.70 & \textbf{56.30} & \textbf{69.44} & \textbf{81.97} & \textbf{75.19} \\
\bottomrule
\end{tabular}}
\end{table*}

\begin{table*}[ht!]
\centering
\small
\caption{Average R-R (Range-Recall), R-P (Range-Precision) and R-F (Range-F1-score) accuracy measures for MindTS and all numerical-only unimodal methods. The best results are highlighted in bold, and the second-best results are underlined.}
\label{R-R}
\resizebox{\textwidth}{!}{
\begin{tabular}{c|ccc|ccc|ccc|ccc|ccc|ccc}
\toprule
\textbf{Datasets} & \multicolumn{3}{c|}{\textbf{Weather}} &  \multicolumn{3}{c|}{\textbf{Energy}} &  \multicolumn{3}{c|}{\textbf{Environment}} &  \multicolumn{3}{c|}{\textbf{KR}} &  \multicolumn{3}{c|}{\textbf{EWJ}} &  \multicolumn{3}{c}{\textbf{MDT}} \\
\midrule

\textbf{Metric} & R-R & R-P & R-F & R-R & R-P & R-F & R-R & R-P & R-F & R-R & R-P & R-F & R-R & R-P & R-F & R-R & R-P & R-F\\
\midrule
HBOS & 22.51 & 36.88 & 27.96 & 57.14 & 1.51 & 2.94 & 10.50 & \textbf{92.11} & 18.85 & 48.15 & \underline{72.55} & 57.88 & 58.18 & \textbf{53.72} & 55.86 & 34.62 & \textbf{89.91} & 49.99 \\
LODA & 27.78 & \textbf{70.30} & 39.82 & 67.86 & 20.64 & 31.65 & 19.39 & 31.55 & 24.02 & 44.44 & \textbf{77.38} & 56.46 & 58.18 & 47.86 & 52.52 & 35.77 & 86.74 & 50.65 \\
IF & 28.08 & \underline{50.24} & 36.02 & 64.29 & 5.13 & 9.50 & 15.27 & 10.16 & 12.20 & 48.15 & 53.03 & 50.47 & 51.36 & \underline{50.52} & 50.94 & 33.85 & 63.78 & 44.22 \\
PCA & 61.71 & 11.98 & 20.07 & 60.71 & 1.42 & 2.78 & 44.68 & 5.45 & 9.71 & 39.26 & 10.81 & 16.95 & 34.09 & 15.58 & 21.38 & 37.31 & 16.47 & 22.86 \\
DulTF & \textbf{93.99} & 12.08 & 21.41 & \textbf{95.00} & 6.34 & 11.89 & 54.37 & 4.43 & 8.19 & 74.81 & 9.35 & 16.62 & 64.09 & 42.20 & 50.89 & 68.85 & 37.61 & 48.65 \\
iTrans & \underline{87.59} & 23.28 & 36.79 & 73.10 & 19.49 & 30.78 & 70.35 & 19.32 & 30.31 & 81.48 & 15.01 & 25.35 & 74.55 & 20.41 & 32.05 & 74.62 & 27.44 & 40.13 \\
TranAD & 84.84 & 24.02 & 37.44 & 42.86 & 6.25 & 10.90 & 24.63 & 4.45 & 7.54 & 50.37 & 32.66 & 39.62 & 51.36 & 14.64 & 22.78 & 46.54 & 18.94 & 26.92 \\
Patch & 62.83 & 34.29 & 44.37 & 76.67 & 11.28 & 19.67 & 78.77 & 37.10 & 50.44 & 76.30 & 23.26 & 35.65 & 65.00 & 40.67 & 50.03 & 66.73 & 41.78 & 51.39 \\
A.T. & 22.82 & 14.72 & 17.92 & 22.86 & 10.72 & 14.60 & 27.99 & 24.77 & 26.28 & 31.85 & 5.99 & 10.08 & 45.91 & 5.53 & 9.87 & 45.00 & 10.34 & 16.81 \\
DC & 13.57 & 14.97 & 14.24 & 16.71 & 16.96 & 16.84 & 53.05 & 5.53 & 10.09 & 24.44 & 6.56 & 10.34 & 16.18 & 10.87 & 13.00 & 26.54 & 14.58 & 18.82 \\
TsNet & 69.90 & 41.45 & \underline{52.04} & 74.05 & 13.45 & 22.76 & 84.35 & 26.93 & 40.82 & 80.00 & 53.92 & \underline{64.42} & 79.55 & 38.08 & 51.51 & 71.54 & 44.49 & 54.86 \\
Modern & 72.13 & 39.46 & 51.01 & 75.43 & \underline{21.64} & \underline{33.63} & 76.17 & 38.27 & 50.94 & 81.23 & 41.04 & 54.53 & 77.73 & 49.72 & \underline{60.64} & \underline{79.04} & 42.01 & 54.86 \\
G4TS & 76.08 & 21.70 & 33.77 & \underline{78.57} & 17.43 & 16.08 & 63.97 & 23.81 & 34.71 & \underline{85.19} & 22.85 & 36.04 & 68.64 & 38.78 & 49.56 & 72.31 & 39.02 & 50.69 \\
CALF & 61.43 & 21.01 & 31.31 & 53.00 & 19.44 & 28.45 & 46.48 & 6.88 & 11.99 & 51.85 & 14.05 & 22.10 & 61.73 & 21.85 & 32.28 & 36.54 & 16.70 & 22.93 \\
UniTS & 79.39 & 30.15 & 43.70 & 65.00 & 10.50 & 18.09 & 84.11 & 37.00 & 51.39 & \textbf{85.93} & 30.37 & 44.88 & \textbf{80.91} & 27.32 & 40.85 & 66.15 & 38.93 & 49.01 \\
Timer & 79.51 & 25.20 & 38.27 & 60.71 & 13.12 & 21.57 & \underline{88.27} & 39.86 & \underline{54.92} & 83.70 & 55.56 & \textbf{66.78} & 70.91 & 36.96 & 48.59 & 73.08 & 43.00 & 54.14 \\
UTime & 66.97 & 40.38 & 50.38 & 59.76 & 12.61 & 20.83 & 83.31 & 37.00 & 51.25 & 83.70 & 47.81 & 60.86 & \textbf{80.91} & 34.29 & 48.16 & 62.88 & 45.63 & 52.89 \\
LMixer & 70.37 & 22.38 & 33.95 & 77.14 & 12.82 & 21.99 & 85.77 & 30.81 & 45.34 & 48.89 & 11.35 & 18.43 & 51.27 & 10.52 & 17.46 & 52.69 & 15.20 & 23.59 \\
DADA & 65.03 & 20.32 & 30.97 & 75.00 & 21.13 & 32.47 & 79.15 & \underline{48.18} & \textbf{59.90} & 74.81 & 49.54 & 59.61 & 75.45 & 41.99 & 53.95 & 64.81 & 50.50 & \underline{56.77} \\
\midrule

MindTS & 66.32 & 45.71 & \textbf{54.12} & 71.19 & \textbf{34.09} & \textbf{46.10} & \textbf{92.43} & 35.96 & 51.77 & 83.95 & 42.98 & 56.85 & \textbf{80.91} & 49.01 & \textbf{61.04} & \textbf{82.88} & 70.28 & \textbf{76.06} \\
\bottomrule
\end{tabular}}
\end{table*}

\begin{table*}[ht!]
\centering
\small
\caption{Average Aff-P (Affiliated-Precision), Aff-R (Affiliated-Recall) and Aff-F (Affiliated-F1score) accuracy measures for MindTS and all numerical-only unimodal methods. The best results are highlighted in bold, and the second-best results are underlined.}
\label{Aff-P}
\resizebox{\textwidth}{!}{
\begin{tabular}{c|ccc|ccc|ccc|ccc|ccc|ccc}
\toprule
\textbf{Datasets} & \multicolumn{3}{c|}{\textbf{Weather}} &  \multicolumn{3}{c|}{\textbf{Energy}} &  \multicolumn{3}{c|}{\textbf{Environment}} &  \multicolumn{3}{c|}{\textbf{KR}} &  \multicolumn{3}{c|}{\textbf{EWJ}} &  \multicolumn{3}{c}{\textbf{MDT}} \\
\midrule

\textbf{Metric} & Aff-P & Aff-R & Aff-F & Aff-P & Aff-R & Aff-F & Aff-P & Aff-R & Aff-F & Aff-P & Aff-R & Aff-F & Aff-P & Aff-R & Aff-F & Aff-P & Aff-R & Aff-F\\
\midrule
HBOS & 74.47 & 35.09 & 47.70 & 54.62 & 57.14 & 55.85 & \textbf{94.43} & 12.61 & 22.25 & 89.44 & 50.78 & 64.78 & \textbf{80.15} & 64.76 & 71.03 & 91.16 & 36.30 & 52.33 \\
LODA & \textbf{81.60} & 38.78 & 52.55 & 54.15 & 76.60 & 63.45 & 71.07 & 34.49 & 46.45 & \textbf{96.99} & 44.44 & 60.96 & \underline{79.74} & 65.73 & 72.06 & \textbf{91.26} & 39.42 & 55.56 \\
IF & 76.25 & 41.88 & 76.25 & 54.50 & 71.96 & 62.03 & 51.31 & 42.52 & 46.51 & 86.20 & 58.05 & 69.38 & 79.18 & 58.89 & 67.55 & 87.66 & 38.75 & 53.74 \\
PCA & 59.14 & 71.93 & 64.91 & 54.88 & 60.71 & 57.65 & 46.75 & 68.64 & 55.62 & 61.94 & 44.44 & 51.75 & 53.93 & 48.48 & 51.06 & 56.98 & 52.51 & 54.66 \\
DulTF & 55.12 & \textbf{98.43} & 70.66 & 53.22 & \textbf{99.39} & 69.32 & 50.04 & 87.83 & 63.76 & 49.01 & 88.62 & 63.11 & 76.97 & 80.47 & 78.68 & 76.21 & 80.53 & 78.31 \\
iTrans & 64.98 & \underline{96.95} & 77.81 & 57.03 & 93.37 & \underline{70.81} & 62.84 & 91.26 & 74.43 & 69.84 & \underline{92.23} & 79.49 & 66.91 & \textbf{94.27} & 78.27 & 69.46 & \textbf{90.65} & 78.66 \\
TranAD & 64.62 & 92.16 & 75.97 & 51.45 & 48.05 & 49.69 & 46.21 & 90.38 & 61.15 & 75.25 & 71.37 & 73.26 & 56.68 & 88.87 & 69.22 & 51.82 & 83.43 & 63.93 \\
Patch & 68.56 & 88.24 & 77.17 & 53.34 & 89.52 & 66.85 & 76.41 & 86.56 & 81.18 & 74.63 & 85.09 & 79.52 & 74.19 & 77.53 & 75.82 & 75.94 & 83.34 & 79.47 \\
A.T. & 51.98 & 46.73 & 49.22 & 50.88 & 37.89 & 43.44 & 72.95 & 50.59 & 59.75 & 64.90 & 78.34 & 70.99 & 48.69 & 73.89 & 58.70 & 61.41 & 71.50 & 66.07 \\
DC & 46.00 & 40.02 & 42.80 & \underline{66.01} & 36.57 & 47.07 & 53.20 & 74.98 & 62.24 & 61.26 & 62.64 & 61.94 & 62.57 & 39.06 & 48.10 & 50.62 & 44.67 & 47.46 \\
TsNet & 72.72 & 90.35 & 80.58 & 53.07 & 87.26 & 66.00 & 70.05 & \underline{94.35} & 80.41 & 86.06 & 85.51 & 85.79 & 75.06 & 89.92 & \underline{81.82} & 76.93 & 83.50 & 80.08 \\
Modern & 72.80 & 91.45 & \underline{81.06} & 60.89 & 84.47 & 70.76 & 77.12 & 85.44 & 81.07 & 79.60 & 89.87 & 84.42 & 78.25 & 85.33 & 81.64 & 75.60 & 86.79 & 80.81 \\
G4TS & 59.73 & 92.41 & 72.56 & 53.06 & 88.58 & 66.37 & 64.25 & 82.94 & 72.41 & 70.00 & 92.14 & 79.56 & 74.54 & 79.84 & 77.10 & 77.43 & 84.50 & \underline{80.81} \\
CALF & 55.89 & 91.67 & 69.77 & 59.14 & 78.18 & 67.34 & 52.92 & 92.19 & 67.24 & 69.25 & 77.11 & 72.97 & 62.31 & 83.07 & 71.21 & 54.01 & 68.33 & 60.33 \\
UniTS & 64.75 & 92.48 & 76.17 & 53.58 & 78.53 & 63.70 & 76.77 & 90.47 & 83.06 & 76.67 & 88.68 & 82.24 & 69.18 & 88.38 & 77.61 & 75.82 & 75.08 & 75.45 \\
Timer & 63.70 & 92.57 & 75.46 & 52.94 & 69.75 & 60.20 & 78.78 & 92.41 & 85.05 & \underline{91.52} & 87.67 & \underline{89.55} & 72.85 & 84.08 & 78.06 & 78.02 & 79.00 & 78.51 \\
UTime & 69.40 & 85.12 & 76.46 & 52.76 & 75.12 & 61.98 & 76.30 & 87.94 & 81.71 & 88.61 & 88.54 & 88.58 & 68.09 & \underline{91.89} & 78.22 & 78.90 & 73.84 & 76.28 \\
LMixer & 62.36 & 90.03 & 73.68 & 52.52 & 88.26 & 65.85 & 77.91 & 91.98 & \underline{84.36} & 63.24 & 84.57 & 72.36 & 52.91 & 90.79 & 66.86 & 61.40 & 75.30 & 67.65 \\
DADA & 56.23 & 89.32 & 69.01 & 52.34 & 83.61 & 63.38 & \underline{82.81} & 85.46 & 84.11 & 86.72 & 81.86 & 84.22 & 77.90 & 84.92 & 81.26 & 81.81 & 74.51 & 77.99 \\
\midrule

MindTS & \underline{76.88} & 89.37 & \textbf{82.66} & \textbf{71.50} & 77.47 & \textbf{74.37} & 77.38 & \textbf{95.01} & \textbf{85.29} & 85.91 & \textbf{95.24} & \textbf{90.28} & 77.99 & 90.74 & \textbf{83.89} & \underline{90.80} & \underline{87.64} & \textbf{89.19}\\
\bottomrule
\end{tabular}}
\end{table*}

\begin{table*}[ht!]
\centering
\small
\caption{Average A-R (AUC-ROC), R-A-R (R-AUC-ROC) and V-ROC (VUS-ROC) accuracy for MindTS and baselines within the MM-TSFLib. The best results are highlighted in bold, and the second-best results are underlined.}
\label{A-R(MMD)}
\resizebox{\textwidth}{!}{
\begin{tabular}{c|ccc|ccc|ccc|ccc|ccc|ccc}
\toprule
Datasets & \multicolumn{3}{c|}{Weather} &  \multicolumn{3}{c|}{Energy} &  \multicolumn{3}{c|}{Environment} &  \multicolumn{3}{c|}{KR} &  \multicolumn{3}{c|}{EWJ} &  \multicolumn{3}{c}{MDT} \\
\midrule
Metric & A-R & R-A-R & V-ROC & A-R & R-A-R & V-ROC & A-R & R-A-R & V-ROC & A-R & R-A-R & V-ROC & A-R & R-A-R & V-ROC & A-R & R-A-R & V-ROC\\
\midrule

iTrans*  & 73.48 & 71.21 &70.11 & 67.76 & \underline{67.07} &65.62 & 81.33 & 70.60 &73.66 & 84.28 & 75.71 &77.17 & 78.23 & 70.67 &74.11 & 78.11 & 66.36 &69.95 \\
TranAD*  & \textbf{85.51} & 79.33 &78.72 & 67.00 & 56.66 &56.38 & 26.51 & 9.93 &14.29  & 60.76 & 37.23 &41.24 & 60.55 & 44.36 &49.85 & 44.36 & 24.82 &28.88 \\
Patch*   & 82.18 & 80.59 &80.08 & 66.89 & 61.57 &58.47 & 94.17 & 91.14 &90.87 & 82.28 & 72.94 &74.80 & 78.68 & 69.48 &72.21 & 84.55 & 75.65 &77.72 \\
TsNet*   & 81.28 & \underline{83.22} &\underline{82.06} & 71.53 & 61.97 &59.80 & 91.91 & 87.70 &88.14 & 85.92 & 78.32 &78.94 & 82.52 & 74.40 &75.91  & 82.05 & 68.57 &73.57 \\
Modern*  & 53.07 & 81.67 &81.67 & \underline{71.61} & 66.80 &\underline{66.37} & 92.76 & 89.53 &\underline{89.14} & \textbf{93.66} & \textbf{90.20} &\underline{89.22} & \textbf{87.96} & \textbf{83.90} &\underline{83.98} & \underline{88.99} & \underline{81.95} &\underline{82.66} \\
G4TS*    & 78.93 & 75.57 &74.61 & 66.38 & 53.91 &53.52 & 94.53 & 90.08 &90.22 & 86.86 & 79.93 &80.43 & 83.36 & 75.52 &76.93 & 84.05 & 71.67 &73.39 \\
CALF    & 78.16 & 72.46 &71.88 & 66.57 & 61.98 &58.06 & 88.73 & 82.04 &83.03 & 76.93 & 63.26 &64.90 & 70.55 & 56.51 &59.80 & 68.76 & 50.65 &54.04 \\
UniTS*   & 81.38 & 75.67 &75.21 & 63.89 & 52.76 &51.89 & 95.16 & \underline{92.50} &91.98 & 81.29 & 71.69 &74.25 & 80.34 & 71.91 &74.39 & 73.35 & 56.87 &58.82 \\
Timer   & 80.96 & 73.74 &73.26 & 60.65 & 47.01 &46.39 & \underline{95.31} & 92.02 &\underline{92.02} & 84.08 & 74.61 &75.92 & 76.59 & 65.39 &68.19 & 75.54 & 58.66 &68.19 \\
UTime*   & 79.71 & 74.28 &73.71 & 60.32 & 51.81 &49.85 & 90.12 & 83.40 &84.20 & 78.45 & 65.34 &66.96 & 76.07 & 65.99 &67.57 & 64.95 & 45.55 &49.16 \\
LMixer*  & 82.77 & 75.73 &75.29 & 62.94 & 49.93 &49.06 & 95.25 & 92.04 &91.74 & 84.02 & 74.58 &75.21 & 77.63 & 67.12 &69.37 & 76.83 & 60.21 &61.72  \\
DADA*    & 67.03 & 62.52 &61.51 & 63.08 & 57.33 &55.63 & 93.10 & 87.74 &88.02 & 80.22 & 71.03 &72.08 & 78.28 & 67.45 &71.06 & 79.71 & 65.52 &68.06 \\
\midrule
MindTS  & \underline{84.06} & \textbf{83.80} &\textbf{82.64} & \textbf{81.26} & \textbf{75.51} &\textbf{74.44} & \textbf{96.33} & \textbf{94.04} &\textbf{93.78} & \underline{93.51} & \underline{89.60} &\textbf{89.86} & \underline{87.95} & \underline{83.19} &.\textbf{84.12} & \textbf{90.46} & \textbf{83.15} &\textbf{83.02} \\
\bottomrule
\end{tabular}}
\end{table*}

\begin{table*}[ht!]
\centering
\small
\caption{Average ACC (Accuracy) measures for MindTS and baselines within the MM-TSFLib. The best results are highlighted in bold, and the second-best results are underlined.}
\label{ACC(MMD)}
\resizebox{0.65\columnwidth}{!}{
\begin{tabular}{c|c|c|c|c|c|c}
\toprule
\textbf{Datasets} & \textbf{Weather} & \textbf{Energy} & \textbf{Environment} & \textbf{KR} & \textbf{EWJ} & \textbf{MDT} \\
\midrule

\textbf{Metric} & ACC & ACC & ACC & ACC & ACC & ACC \\
\midrule

iTrans*   & 76.09 & 63.69 & 87.31 & 70.62 & 74.44 & 73.44 \\
TranAD*   & 64.67 & \underline{70.15} & 44.73 & 87.95 & 64.47 & 65.38 \\
Patch*    & 74.59 & 44.92 & 90.22 & 84.37 & 85.53 & 81.32 \\
TsNet*    & \underline{82.01} & 66.15 & \underline{93.25} & 94.35 & 84.02 & 76.92 \\
Modern*   & 80.23 & 62.46 & 90.50 & 89.45 & \underline{87.22} & 84.62 \\
G4TS*     & 69.89 & 41.85 & 93.00 & \textbf{95.10} & 85.34 & \underline{91.21} \\
CALF*     & 64.75 & 52.62 & 84.90 & 81.36 & 80.45 & 79.12 \\
UniTS*    & 70.58 & 47.69 & 90.09 & 89.27 & 84.21 & 86.08 \\
Timer*    & 67.50 & 45.54 & 91.43 & 93.60 & 81.02 & 85.16 \\
UTime*    & 68.11 & 47.69 & 76.81 & 87.01 & 69.74 & 73.99 \\
LMixer*   & 68.69 & 52.00 & 90.56 & \underline{94.92} & 82.14 & 88.46 \\
DADA*    & 61.02 & 45.54 & \textbf{94.22} & 92.47 & 86.84 & 87.18 \\
\midrule
MindTS   & \textbf{84.76} & \textbf{80.00} & 90.09 & 91.15 & \textbf{88.91} & \textbf{93.96}  \\
\bottomrule
\end{tabular}}
\end{table*}

\begin{table*}[ht!]
\centering
\small
\caption{Average A-P (AUC-PR), R-A-P (R-AUC-PR) and V-PR (VUS-PR) accuracy measures for MindTS and baselines within the MM-TSFLib. The best results are highlighted in bold, and the second-best results are underlined.}
\label{A-P(MMD)}
\resizebox{\textwidth}{!}{
\begin{tabular}{c|ccc|ccc|ccc|ccc|ccc|ccc}
\toprule
\textbf{Datasets} & \multicolumn{3}{c|}{\textbf{Weather}} &  \multicolumn{3}{c|}{\textbf{Energy}} &  \multicolumn{3}{c|}{\textbf{Environment}} &  \multicolumn{3}{c|}{\textbf{KR}} &  \multicolumn{3}{c|}{\textbf{EWJ}} &  \multicolumn{3}{c}{\textbf{MDT}} \\
\midrule

\textbf{Metric} & A-P & R-A-P & V-PR & A-P & R-A-P & V-PR & A-P & R-A-P & V-PR & A-P & R-A-P & V-PR & A-P & R-A-P & V-PR & A-P & R-A-P & V-PR\\
\midrule
iTrans* & 38.56 & 40.43 & 40.30 & 35.68 & 36.17 & 36.21 & 35.35 & 23.91 & 25.85 & 37.89 & 25.61 & 28.12 & 33.63 & 25.74 & 29.79 & 44.10 & 30.76 & 33.88 \\
TranAD* & \textbf{60.91} & 52.04 & 52.09 & 36.34 & 33.13 & 33.74 &  7.08 &  4.51 &  4.93 & 53.27 & 28.07 & 28.47 & 27.89 & 15.25 & 17.87 & 26.06 & 13.31 & 14.55 \\
Patch*  & 53.69 & 49.95 & 50.17 & 34.66 & 35.51 & 34.66 & 58.65 & 45.40 & 45.52 & 53.60 & 35.57 & 36.37 & 47.92 & 33.45 & 36.22 & 54.52 & 39.92 & 41.85 \\
TsNet*  & 48.29 & 51.00 & 50.53 & \underline{42.47} & \underline{38.47} & \underline{38.88} & 63.82 & 50.39 & 50.39 & \underline{67.58} & 53.03 & 51.73 & 55.04 & 41.95 & 43.28 & 60.61 & 41.55 & 52.30 \\
Modern* & 53.07 & \underline{54.17} & \underline{53.42} & 33.98 & 36.64 & 37.44 & 54.31 & 41.53 & 41.36 & 57.71 & 41.72 & 40.86 & 54.48 & \underline{44.64} & \underline{45.41} & 65.84 & \underline{52.31} & \underline{45.88} \\
G4TS*   & 49.31 & 45.79 & 45.83 & 33.55 & 31.23 & 31.83 & \underline{69.72} & \textbf{56.82} & \underline{56.65} & 72.14 & \textbf{58.99} & \textbf{57.93} & \underline{57.69} & 42.31 & 43.93 & \underline{68.63} & 51.69 & \underline{52.65} \\
CALF*   & 42.86 & 41.39 & 41.43 & 38.48 & 35.10 & 34.11 & 43.76 & 30.42 & 31.72 & 51.19 & 31.22 & 31.52 & 32.08 & 21.30 & 23.62 & 42.75 & 26.64 & 28.70 \\
UniTS*  & 49.56 & 44.54 & 44.58 & 27.56 & 31.03 & 31.34 & 64.10 & 50.30 & 50.06 & 56.86 & 41.93 & 44.27 & 50.62 & 37.54 & 39.99 & 53.74 & 36.28 & 37.78 \\
Timer*  & 49.24 & 43.32 & 43.36 & 37.69 & 28.91 & 29.57 & 64.31 & 50.97 & 51.20 & 66.57 & 51.81 & 51.56 & 44.23 & 30.88 & 33.36 & 55.56 & 37.36 & 38.23 \\
UTime*  & 46.37 & 43.05 & 43.19 & 32.68 & 30.32 & 30.44 & 48.17 & 34.55 & 35.64 & 57.39 & 37.70 & 37.12 & 33.10 & 23.82 & 25.71 & 39.63 & 22.72 & 25.31 \\
LMixer* & 52.74 & 45.90 & 45.94 & 36.74 & 30.38 & 30.91 & 64.84 & 51.14 & 51.22 & \textbf{68.67} & \underline{54.04} & 52.98 & 44.75 & 31.84 & 34.06 & 60.18 & 41.95 & 42.61 \\
DADA*   & 30.22 & 30.28 & 30.42 & 38.30 & 33.77 & 34.38 & \textbf{70.37} & 55.99 & 56.20 & 63.28 & 46.69 & 45.68 & 55.06 & 41.34 & 43.18 & 63.19 & 45.13 & 47.22 \\
\midrule
MindTS & \underline{58.38} & \textbf{57.85} & \textbf{57.48} & \textbf{50.99} & \textbf{50.19} & \textbf{50.36} & 69.46 & \underline{56.81} & \textbf{56.79} & 67.52 & 52.64 & \underline{53.15} & \textbf{61.75} & \textbf{49.31} & \textbf{50.42} & \textbf{76.51} & \textbf{66.20} & \textbf{65.44} \\
\bottomrule
\end{tabular}}
\end{table*}

\begin{table*}[ht!]
\centering
\small
\caption{Average P (Precision), R (Recall) and F1 (F1-score) accuracy measures for MindTS and baselines within the MM-TSFLib. The best results are highlighted in bold, and the second-best results are underlined.}
\label{P(MMD)}
\resizebox{\textwidth}{!}{
\begin{tabular}{c|ccc|ccc|ccc|ccc|ccc|ccc}
\toprule
\textbf{Datasets} & \multicolumn{3}{c|}{\textbf{Weather}} &  \multicolumn{3}{c|}{\textbf{Energy}} &  \multicolumn{3}{c|}{\textbf{Environment}} &  \multicolumn{3}{c|}{\textbf{KR}} &  \multicolumn{3}{c|}{\textbf{EWJ}} &  \multicolumn{3}{c}{\textbf{MDT}} \\
\midrule
\textbf{Metric} & P & R & F1 & P & R & F1 & P & R & F1 & P & R & F1 & P & R & F1 & P & R & F1\\
\midrule
iTrans* & 36.62 & 54.50 & 43.81 & 25.78 & 58.93 & 35.87 & 26.19 & 65.05 & 37.35 & 15.64 & \textbf{84.84} & 26.42 & 22.15 & 62.26 & 32.67 & 26.40 & 77.05 & 39.33 \\
TranAD* & 31.65 & \textbf{91.94} & 47.08 & 27.47 & 44.64 & 34.01 &  2.52 & 22.58 &  4.54 & 26.87 & 54.55 & 36.00 & 13.44 & 47.17 & 20.92 & 12.35 & 34.43 & 18.18 \\
Patch*  & 37.70 & 74.41 & 50.04 & 21.40 & \textbf{82.14} & 33.95 & 34.84 & 78.49 & 48.26 & 24.49 & 72.73 & 36.64 & 36.36 & 60.38 & 45.39 & 33.60 & 68.85 & 45.16 \\
TsNet*  & \underline{47.77} & 55.92 & 51.53 & \underline{29.55} & 69.64 & 41.49 & \underline{44.79} & 69.35 & 54.43 & 53.19 & 75.76 & 62.50 & 35.45 & \underline{73.58} & 47.85 & 29.56 & 77.05 & 42.73 \\
Modern* & 44.68 & 65.64 & \underline{53.17} & 28.57 & 78.57 & \underline{41.90} & 34.87 & 73.12 & 47.22 & 34.67 & 78.79 & 48.15 & \underline{41.76} & 71.70 & \underline{52.78} & 40.50 & \underline{80.33} & 53.85 \\
G4TS*   & 33.67 & 78.44 & 45.12 & 20.44 & \textbf{82.14} & 32.74 & 44.60 & 84.41 & \underline{58.36} & \textbf{57.78} & 78.79 & \textbf{66.67} & 37.62 & 71.70 & 49.35 & \underline{59.15} & 68.85 & \underline{63.64} \\
CALF*   & 30.14 & 80.57 & 43.87 & 23.66 & 78.57 & 36.36 & 25.04 & 80.11 & 38.16 & 20.54 & 69.70 & 31.72 & 27.43 & 58.49 & 37.35 & 28.80 & 59.02 & 38.71 \\
UniTS*  & 34.95 & 83.65 & 49.30 & 21.50 & 76.79 & 33.59 & 35.41 & 85.48 & 50.08 & 34.21 & 78.79 & 47.71 & 35.24 & 69.81 & 46.84 & 41.94 & 63.93 & 50.65 \\
Timer*  & 32.66 & 84.83 & 47.17 & 19.60 & 69.64 & 30.59 & 39.37 & 87.63 & 54.33 & 49.06 & 78.79 & 60.47 & 29.31 & 64.15 & 40.24 & 40.20 & 67.21 & 50.31 \\
UTime*  & 32.64 & 81.28 & 46.57 & 20.00 & 67.86 & 30.89 & 18.97 & \underline{91.40} & 31.42 & 27.50 & 66.67 & 38.94 & 21.88 & \textbf{79.25} & 34.29 & 23.18 & 57.38 & 33.02 \\
LMixer* & 34.64 & \underline{85.78} & 49.35 & 21.59 & 67.86 & 32.76 & 36.88 & 87.63 & 51.91 & \underline{56.82} & 75.76 & \underline{64.95} & 30.91 & 64.15 & 41.72 & 48.81 & 67.21 & 56.55 \\
DADA*   & 25.09 & 64.45 & 36.12 & 19.90 & 71.43 & 31.13 & \textbf{50.17} & 77.42 & \textbf{60.89} & 43.40 & 69.70 & 53.49 & 40.45 & 67.92 & 50.70 & 44.94 & 65.57 & 53.33 \\
\midrule
MindTS & \textbf{54.77} & 62.56 & \textbf{58.41} & \textbf{45.16} & 75.00 & \textbf{56.38} & 36.21 & \textbf{92.47} & 52.04 & 39.40 & \underline{79.09} & 52.53 & \textbf{46.34} & 71.70 & \textbf{56.30} & \textbf{69.44} & \textbf{81.97} & \textbf{75.19} \\
\bottomrule
\end{tabular}}
\end{table*}

\begin{table*}[ht!]
\centering
\small
\caption{Average R-R (Range-Recall), R-P (Range-Precision) and R-F (Range-F1-score) accuracy measures for MindTS and baselines within the MM-TSFLib. The best results are highlighted in bold, and the second-best results are underlined.}
\label{R-R(MMD)}
\resizebox{\textwidth}{!}{
\begin{tabular}{c|ccc|ccc|ccc|ccc|ccc|ccc}
\toprule
\textbf{Datasets} & \multicolumn{3}{c|}{\textbf{Weather}} &  \multicolumn{3}{c|}{\textbf{Energy}} &  \multicolumn{3}{c|}{\textbf{Environment}} &  \multicolumn{3}{c|}{\textbf{KR}} &  \multicolumn{3}{c|}{\textbf{EWJ}} &  \multicolumn{3}{c}{\textbf{MDT}} \\
\midrule
\textbf{Metric} & R-R & R-P & R-F & R-R & R-P & R-F & R-R & R-P & R-F & R-R & R-P & R-F & R-R & R-P & R-F & R-R & R-P & R-F\\
\midrule
iTrans* & 57.06 & 28.34 & 37.87 & 60.48 & 21.28 & 31.48 & 63.78 & 24.31 & 35.20 & \underline{84.20} & 10.59 & 18.81 & 70.00 & 20.27 & 31.43 & 78.46 & 25.60 & 38.60 \\
TranAD* & \textbf{87.59} & 22.86 & 36.26 & 39.29 & 11.04 & 17.24 & 21.13 &  3.54 &  6.06 & 50.37 & 28.75 & 36.61 & 51.36 & 15.69 & 24.04 & 36.53 & 19.41 & 25.35 \\
Patch*  & 76.28 & 26.40 & 39.23 & \underline{73.10} & 13.12 & 22.25 & 79.48 & 38.28 & 51.67 & 76.30 & 23.26 & 35.65 & 65.00 & 39.94 & 49.48 & 67.50 & 35.81 & 46.80 \\
TsNet*  & 68.30 & \underline{44.44} & \underline{53.85} & 57.86 & \underline{28.17} & \underline{37.89} & 71.63 & 44.48 & 54.88 & 80.00 & 58.85 & 67.82 & 79.55 & 38.61 & 51.99 & 75.38 & 26.87 & 39.62 \\
Modern* & 77.18 & 37.38 & 50.37 & \textbf{77.57} & 24.80 & 37.58 & 75.37 & 37.65 & 50.22 & 81.23 & 42.62 & 55.91 & 80.00 & \underline{47.45} & \underline{59.57} & \underline{80.96} & 41.71 & 55.05 \\
G4TS*   & 78.89 & 24.93 & 37.89 & 72.62 & 13.82 & 23.22 & 83.97 & \underline{46.96} & \textbf{60.23} & 83.70 & \textbf{64.52} & \textbf{72.87} & 78.64 & 44.41 & 56.76 & 68.65 & \underline{61.01} & \underline{64.61} \\
CALF*   & 73.30 & 19.52 & 30.83 & 72.14 & 13.79 & 23.16 & 79.95 & 24.08 & 37.02 & 72.59 & 19.19 & 30.35 & 64.09 & 29.55 & 40.45 & 59.81 & 29.60 & 39.60 \\
UniTS*  & \underline{80.79} & 27.05 & 40.53 & 70.00 & 18.24 & 28.94 & 85.96 & 36.04 & 50.78 & \textbf{85.93} & 34.94 & 49.58 & 78.64 & 37.93 & 51.18 & 66.15 & 41.09 & 50.69 \\
Timer*  & 76.94 & 24.87 & 37.59 & 60.71 & 10.29 & 17.60 & 86.19 & 41.23 & 55.78 & 83.70 & 57.29 & 68.02 & 70.91 & 37.62 & 49.16 & 67.50 & 49.55 & 57.15 \\
UTime*  & 74.30 & 19.17 & 30.48 & 63.10 & 16.05 & 25.59 & \underline{90.21} & 17.39 & 29.16 & 71.11 & 28.95 & 41.15 & \textbf{86.36} & 26.74 & 40.84 & 57.88 & 22.11 & 31.99 \\
LMixer* & 78.67 & 26.09 & 39.18 & 57.14 & 14.18 & 22.73 & 85.77 & 37.99 & 52.66 & 82.22 & \underline{62.37} & \underline{70.93} & 70.91 & 36.33 & 48.05 & 66.54 & 53.16 & 59.10 \\
DADA*   & 62.45 & 20.51 & 30.88 & 71.43 & 17.66 & 27.29 & 78.44 & \textbf{48.40} & \underline{59.86} & 74.81 & 49.54 & 59.61 & 74.09 & 45.23 & 56.17 & 65.58 & 49.58 & 56.47 \\
\midrule
MindTS & 66.32 & \textbf{45.71} & \textbf{54.12} & 71.19 & \textbf{34.09} & \textbf{46.10} & \textbf{92.43} & 35.96 & 51.77 & 83.95 & 42.98 & 56.85 & \underline{80.91} & \textbf{49.01} & \textbf{61.04} & \textbf{82.88} & \textbf{70.28} & \textbf{76.06} \\
\bottomrule
\end{tabular}}
\end{table*}

\begin{table*}[ht!]
\centering
\small
\caption{Average Aff-P (Affiliated-Precision), Aff-R (Affiliated-Recall) and Aff-F (Affiliated-F1score) accuracy measures for MindTS and baselines within the MM-TSFLib. The best results are highlighted in bold, and the second-best results are underlined.}
\label{Aff-P(MMD)}
\resizebox{\textwidth}{!}{
\begin{tabular}{c|ccc|ccc|ccc|ccc|ccc|ccc}
\toprule
\textbf{Datasets} & \multicolumn{3}{c|}{\textbf{Weather}} &  \multicolumn{3}{c|}{\textbf{Energy}} &  \multicolumn{3}{c|}{\textbf{Environment}} &  \multicolumn{3}{c|}{\textbf{KR}} &  \multicolumn{3}{c|}{\textbf{EWJ}} &  \multicolumn{3}{c}{\textbf{MDT}} \\
\midrule

\textbf{Metric} & Aff-P & Aff-R & Aff-F & Aff-P & Aff-R & Aff-F & Aff-P & Aff-R & Aff-F & Aff-P & Aff-R & Aff-F & Aff-P & Aff-R & Aff-F & Aff-P & Aff-R & Aff-F\\
\midrule
iTrans* & 65.34 & 89.01 & 75.36 & 61.39 & \underline{88.49} & \underline{72.49} & 68.74 & 85.01 & 76.02 & 66.33 & \textbf{95.80} & 78.39 & 67.35 & \textbf{93.65} & 78.23 & 65.66 & \underline{94.48} & 77.47 \\
TranAD* & 65.06 & \textbf{96.53} & 77.73 & 55.68 & 46.25 & 50.53 & 46.37 & 85.21 & 61.38 & 72.94 & 72.07 & 72.50 & 56.74 & 88.12 & 69.03 & 58.33 & 67.33 & 63.60 \\
Patch*  & 65.00 & \underline{95.59} & 77.05 & 52.79 & \textbf{89.04} & 66.28 & 76.82 & 87.27 & 81.71 & 74.63 & 85.09 & 79.52 & 74.07 & 77.53 & 76.49 & 73.28 & 85.38 & 78.87 \\
TsNet*  & \underline{72.96} & 88.77 & 80.09 & 59.98 & 70.49 & 66.71 & 78.96 & 81.50 & 80.21 & 86.98 & 84.73 & 85.84 & 75.23 & 89.92 & \underline{81.92} & 70.26 & \textbf{94.56} & 80.62 \\
Modern* & 71.93 & 94.02 & \underline{81.50} & \underline{61.88} & 87.67 & 72.13 & 77.34 & 85.83 & 81.36 & 80.39 & 89.87 & 84.87 & 77.77 & 86.27 & 81.88 & 75.68 & 88.72 & 81.68 \\
G4TS*   & 65.88 & 92.12 & 76.82 & 52.88 & 88.23 & 67.38 & \underline{80.78} & 88.46 & 84.44 & \underline{91.78} & 85.05 & 88.29 & 76.80 & 86.52 & 81.37 & \underline{86.29} & 77.87 & \underline{81.86} \\
CALF*   & 60.87 & 94.23 & 73.96 & 56.97 & 85.13 & 68.26 & 70.34 & 92.85 & 80.04 & 67.34 & 83.77 & 75.02 & 67.43 & 83.71 & 74.69 & 69.62 & 70.81 & 70.80 \\
UniTS*  & 64.53 & 92.92 & 76.38 & 55.64 & 79.36 & 65.42 & 76.52 & 91.71 & 83.43 & 78.10 & 88.68 & 83.06 & 73.47 & 83.22 & 78.04 & 78.40 & 75.08 & 76.70 \\
Timer*  & 64.15 & 91.36 & 75.37 & 53.24 & 69.68 & 60.36 & 79.02 & 90.84 & \underline{84.52} & 91.52 & 87.67 & 89.61 & 73.11 & 86.05 & 79.05 & 79.84 & 75.68 & 77.93 \\
UTime*  & 61.34 & 93.03 & 73.93 & 54.42 & 81.87 & 65.38 & 63.38 & \textbf{97.22} & 76.73 & 71.38 & 84.49 & 77.38 & 60.36 & \underline{92.98} & 73.20 & 68.40 & 76.74 & 72.33 \\
LMixer* & 65.68 & 82.77 & 76.30 & 54.97 & 69.68 & 61.46 & 78.10 & 90.01 & 83.76 & \textbf{92.60} & 87.59 & \underline{90.03} & 71.76 & 85.98 & 78.23 & 84.87 & 72.70 & 78.31 \\
DADA*   & 57.53 & 88.49 & 69.73 & 52.95 & 83.46 & 64.80 & \textbf{82.94} & 84.75 & 83.84 & 86.72 & 81.86 & 84.22 & \textbf{80.16} & 82.70 & 81.41 & 81.44 & 74.62 & 77.89 \\
\midrule
MindTS & \textbf{76.88} & 89.37 & \textbf{82.66} & \textbf{71.50} & 77.47 & \textbf{74.37} & 77.38 & \underline{95.01} & \textbf{85.29} & 85.91 & \underline{95.24} & \textbf{90.28} & \underline{77.99} & 90.74 & \textbf{83.89} & \textbf{90.80} & 87.64 & \textbf{89.19} \\
\bottomrule
\end{tabular}}
\end{table*}

\clearpage
\section{Visualization case studies}  
\label{Visualization case}
To enable intuitive performance comparison, we conduct a comparative visualization of anomaly scores between MindTS and GPT4TS (the original model and within the MM-TSFLib), as shown in Figure~\ref{Visualization}. MindTS exhibits the most distinguishable anomaly scores compared to existing methods.

\begin{figure}[htbp]
  \centering
  \includegraphics[width=\linewidth]{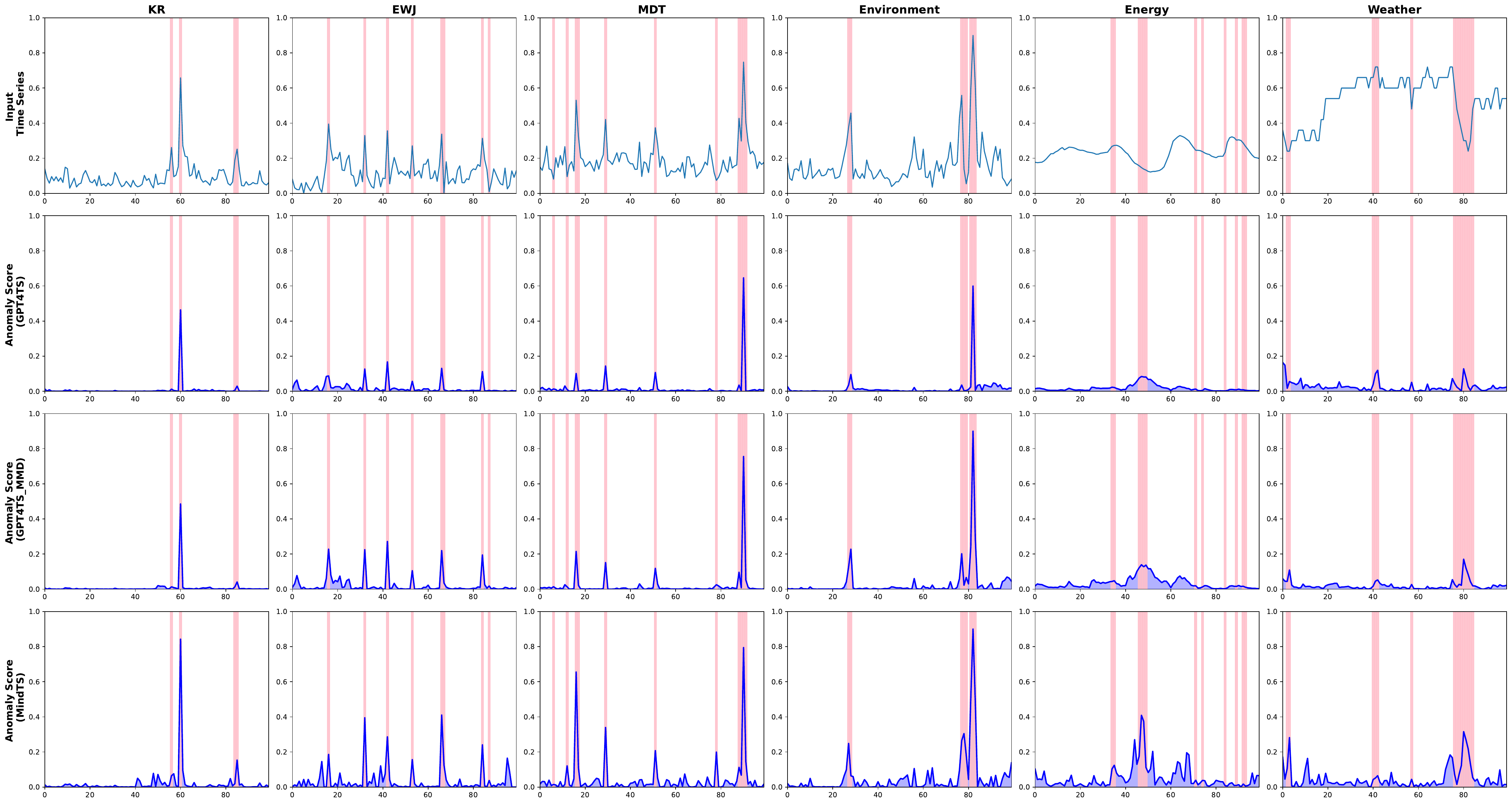}
  \caption{Visualization comparisons of anomaly scores between MindTS and GPT4TS for all datasets.}
  \label{Visualization}
\end{figure}

\section{Additional model analysis}
\label{Additional_model_analysis}

\subsection{Cross-view text fusion analyses}
\label{Cross-view-text}
In this paper, we use the endogenous text $\mathbf{H}^O_{\text{text}}$ as the query and the exogenous text $\mathbf{H}^C_{\text{text}}$ as the key and value to obtain the fused text representation $\mathbf{Z}_{\text{text}}$ to enhance semantic consistency with the time series and extract the most relevant background information. We further conduct additional experiments comparing different attention strategies to demonstrate the effectiveness of this design: (a) MindTS (q/kv reverse), setting $\mathbf{H}^C_{\text{text}}$ as query and $\mathbf{H}^O_{\text{text}}$ as key/value; (b) MindTS (self-attention), using self-attention only; (c) MindTS (two-way), replace the one‑way cross‑attention with a two‑way block where exogenous text also queries endogenous features; and (d) MindTS, setting $\mathbf{H}^O_{\text{text}}$ as query and $\mathbf{H}^C_{\text{text}}$ as key/value. As shown in Table~\ref{Q/KV}, the configuration where $\mathbf{H}^O_{\text{text}}$ is used as the query and $\mathbf{H}^C_{\text{text}}$ as the key/value yields the best performance. In cross-view attention, using $\mathbf{H}^O_{\text{text}}$ as query and $\mathbf{H}^C_{\text{text}}$ as key/value allows the model to better extract supplementary information from exogenous text that is most relevant to each time patch. By contrast, relying solely on self-attention limits the model’s ability to directly learn interactions between different text modalities. Although two-way cross-attention design possesses a certain level of representational capacity, it does not bring significant performance improvements on most datasets.

\begin{table*}[ht!]
\centering
\small
\caption{Evaluation of cross-view text fusion variants.}
\label{Q/KV}
\resizebox{\textwidth}{!}{
\begin{tabular}{c|ccc|ccc|ccc|ccc|ccc|ccc}
\toprule
\textbf{Datasets} & \multicolumn{3}{c|}{\textbf{Weather}} &  \multicolumn{3}{c|}{\textbf{Energy}} &  \multicolumn{3}{c|}{\textbf{Environment}} &  \multicolumn{3}{c|}{\textbf{KR}} &  \multicolumn{3}{c|}{\textbf{EWJ}} &  \multicolumn{3}{c}{\textbf{MDT}} \\
\midrule
\textbf{Metric} & Aff-F & V-PR & V-ROC & Aff-F & V-PR & V-ROC & Aff-F & V-PR & V-ROC & Aff-F & V-PR & V-ROC & Aff-F & V-PR & V-ROC & Aff-F & V-PR & V-ROC  \\
\midrule
MindTS (q/kv reverse) & 82.29 & 57.33 & 82.27 & 73.73 & 48.22 & 73.06 & 83.94 & 55.49 & 92.64 & 87.14 & 50.72 & 88.72 & 82.71 & 49.99 & \textbf{84.34} & 87.95 & 65.07 & 82.30 \\
MindTS (self-attention) & 81.69 & 56.69 & 82.46 & 72.47 & 48.17 & 71.18 & \textbf{85.79} & 56.54 & 93.41 & 87.27 & 52.48 & 90.31 & 82.96 & 46.28 & 83.59 & 83.82 & 57.67 & 77.80 \\
MindTS (two-way) & 79.43 &  57.17&  \textbf{82.70}& 	72.57&  44.64&  70.42& 	84.47&  53.17&  93.17& 	87.39&  \textbf{54.54}&  \textbf{91.77}& 	77.76&  46.92&  78.28& 	81.12&  57.62&  75.23 \\

MindTS & \textbf{82.66} & \textbf{57.48} & 82.64 & \textbf{74.37} & \textbf{50.36} & \textbf{74.44} & 85.29 & \textbf{56.79} & \textbf{93.78} & \textbf{90.28} & 53.15 & 89.86 & \textbf{83.89} & \textbf{50.42} & 84.12 & \textbf{89.19} & \textbf{65.44} & \textbf{83.02} \\
\bottomrule
\end{tabular}}
\end{table*}

\textbf{Comparison with LLM condenser.} To further illustrate the effectiveness of the content condenser, We add comparative experiments among the following four model variants: (a) w/o text, input includes only time series (no text); (b) w/o filtering, input includes time series and text, where the text is used without any redundancy filtering; (c) LLM-based compression, input includes time series and text, where the text is processed by LLM-based compression for redundancy filtering (the content condenser is removed); (d)content condenser, input includes time series and text, where the text is processed by our proposed content condenser for redundancy filtering.

As shown in Table~\ref{tab:redundancy}, variant (d) achieves the best performance across all evaluation metrics. Variant (b) performs the worst, even lower than (a), indicating that unfiltered text introduces redundancy that degrades performance. This confirms the existence of text redundancy. Variant (c) with LLM-based compression to filter the text achieves better results than (a) and (b), demonstrating that compression helps alleviate redundancy to some extent. Most importantly, variant (d) significantly outperforms the LLM-based compression approach, highlighting the effectiveness of our proposed content condenser. Unlike LLMs, our module is explicitly optimized under the multimodal objective to preserve time-aligned semantics and suppress irrelevant textual content, thereby enhancing outlier detection performance. In contrast, LLM-based compression considers text-only semantics, which fails to capture time-aligned semantics.

\begin{table*}[ht!]
\centering
\small
\caption{Ablation on text redundancy filtering strategies. The best results are highlighted in bold.}
\label{tab:redundancy}
\resizebox{\textwidth}{!}{
\begin{tabular}{c|ccc|ccc|ccc|ccc}
\toprule
\textbf{Datasets} & \multicolumn{3}{c|}{(a) w/o text} & \multicolumn{3}{c|}{(b) w/o filtering} & \multicolumn{3}{c|}{(c) LLM-based compression} & \multicolumn{3}{c}{(d) content condenser} \\
\midrule
\textbf{Metric} & Aff-F & V-PR & V-ROC & Aff-F & V-PR & V-ROC & Aff-F & V-PR & V-ROC & Aff-F & V-PR & V-ROC \\
\midrule
\textbf{KR}  & 84.11 & 45.11 & 79.43 & 80.52 & 37.82 & 74.38 & 87.32 & 47.95 & 88.52 & \textbf{90.28} & \textbf{53.15} & \textbf{89.86} \\
\textbf{EWJ} & 81.87 & 45.22 & 79.32 & 78.79 & 38.47 & 80.03 & 80.26 & 42.90 & 78.84 & \textbf{83.89} & \textbf{50.42} & \textbf{84.12} \\
\textbf{MDT} & 84.00 & 58.32 & 81.68 & 81.79 & 51.40 & 75.43 & 84.31 & 58.74 & 81.13 & \textbf{89.19} & \textbf{65.44} & \textbf{83.02} \\
\bottomrule
\end{tabular}}
\end{table*}

\subsection{Multimodal Analysis}
\label{Multimodal_Analysis}
In Table~\ref{unimodel}, we observe that compared to time series unimodal settings, incorporating time-text multimodal settings consistently yields better results. Notably, on some datasets (e.g., Energy and MDT), MindTS outperforms the baselines. As reported in Table 3 of the paper, these datasets are relatively small in size but have high anomaly ratios, making anomalies more densely distributed and easier to detect. For models with strong reconstruction capacity, this setting increases the risk of overfitting, as anomalies may also be reconstructed too well, thereby degrading detection performance. In contrast, simpler methods are less prone to reconstructing anomalies, which sometimes results in competitive outcomes. Nevertheless, across all datasets, MindTS consistently demonstrates superior performance over unimodal settings, effectively integrating multimodal information to enhance anomaly detection.

\begin{table*}[ht!]
\centering
\small
\caption{The results of MindTS and MindTS(unimodel) across all datasets (all results in \%, best results are highlighted in bold).}
\label{unimodel}
\resizebox{\textwidth}{!}{
\begin{tabular}{c|ccc|ccc|ccc|ccc|ccc|ccc}
\toprule
\textbf{Method} & \multicolumn{3}{c|}{\textbf{Weather}} & \multicolumn{3}{c|}{\textbf{Energy}} & \multicolumn{3}{c|}{\textbf{Environment}} & \multicolumn{3}{c|}{\textbf{KR}} & \multicolumn{3}{c|}{\textbf{EWJ}} & \multicolumn{3}{c}{\textbf{MDT}} \\
\midrule
\textbf{Metric} & Aff-F & V-PR & V-ROC & Aff-F & V-PR & V-ROC & Aff-F & V-PR & V-ROC & Aff-F & V-PR & V-ROC & Aff-F & V-PR & V-ROC & Aff-F & V-PR & V-ROC \\
\midrule
MindTS(unimodel)   & 75.96 & 45.69 & 74.19 & 73.14 & 46.66 & 71.36 & 80.16 & 44.28 & 86.43 & 84.11 & 45.11 & 79.43 & 81.87 & 45.22 & 79.32 & 84.00 & 58.32 & 81.68 \\
DADA        & 69.01 & 30.00 & 61.03 & 64.38 & 34.18 & 54.37 & 84.11 & 54.20 & 87.69 & 84.22 & 45.90 & 70.82 & 81.26 & 43.36 & 71.79 & 77.99 & 46.81 & 66.76 \\
ModernTCN   & 81.06 & 52.13 & 81.14 & 70.76 & 36.60 & 65.05 & 81.07 & 42.26 & 89.78 & 84.42 & 39.95 & 88.87 & 81.57 & 44.75 & 83.88 & 80.81 & 52.18 & 82.30 \\
Timer       & 75.46 & 43.21 & 73.22 & 60.20 & 29.46 & 46.03 & 84.19 & 51.42 & 92.10 & 89.55 & 51.41 & 75.99 & 78.06 & 33.17 & 67.72 & 78.51 & 38.38 & 60.28 \\
MindTS      & \textbf{82.66} & \textbf{57.48} & \textbf{82.64} & \textbf{74.37} & \textbf{50.36} & \textbf{74.44} & \textbf{85.29} & \textbf{56.79} & \textbf{93.78} & \textbf{90.28} & \textbf{53.15} & \textbf{89.86} & \textbf{83.89} & \textbf{50.42} & \textbf{84.12} & \textbf{89.19} & \textbf{65.44} & \textbf{83.02} \\
\bottomrule
\end{tabular}}
\end{table*}

\subsection{Inference time}
\label{Inference}
In Table~\ref{runtime}, we compare MindTS with other models across different datasets in terms of inference time and memory cost. Overall, MindTS achieves competitive inference time while maintaining superior detection performance. Regarding memory usage, the additional cost is moderate and remains well within the capacity of modern hardware, making MindTS practical for real-world deployment.

\begin{table*}[ht!]
\centering
\scriptsize  
\caption{Run times and memory costs on different datasets. Lower values represent better performance. 
The notation with $*$ denotes results obtained by extending the baselines using the recent time-series multimodal framework MM-TSFLib.}
\label{runtime}
\resizebox{0.55\textwidth}{!}{  
\begin{tabular}{c|ccc|ccc}
\toprule
\multirow{2}{*}{\textbf{Method}} & \multicolumn{3}{c|}{\textbf{Inference Time (s)}} & \multicolumn{3}{c}{\textbf{Memory Cost (GB)}} \\
\cmidrule{2-7}
 & MDT & KR & EWJ & MDT & KR & EWJ \\
\midrule
MindTS        & 0.2302 & 0.1977 & 0.4130 & 14.69 & 14.62 & 14.41 \\
ModernTCN$^*$ & 0.1582 & 0.1383 & 0.3965 & 13.61 & 13.66 & 13.57 \\
GPT4TS$^*$    & 0.2676 & 0.2425 & 0.4716 & 13.85 & 13.89 & 13.80 \\
LLMMixer$^*$  & 0.2585 & 0.2104 & 0.4619 & 14.13 & 14.08 & 13.97 \\
UniTime$^*$   & 0.2537 & 0.2462 & 0.4760 & 14.20 & 14.15 & 14.06 \\
\bottomrule
\end{tabular}}
\end{table*}

\subsection{Exogenous text quality analysis}
\label{exogenous_text_quality_analysis}
From Table~\ref{text_quality_variations} to Table~\ref{combined_text_quality}, we compare different types of exogenous text quality variations to evaluate the robustness of our model: (a) noisy, by introducing random spelling errors within sentences; (b) irrelevant, by replacing the original sentences with unrelated text from different domains; and (c) incomplete, by removing portions of the text descriptions.

As shown in Table~\ref{text_quality_variations}, under single-type settings, the performance of MindTS only slightly decreases compared to the clean setting. As shown in Table~\ref{noise_strengths}, when the noise strength is within a reasonable range (e.g., 0.2 and 0.4), the content condenser effectively filters out redundant text information, thereby mitigating its impact. As a result, MindTS still maintains robust performance. Under more challenging conditions, such as multiple text types combinations (Table~\ref{combined_text_quality}) or high noise intensity (e.g., 0.8), the performance degradation becomes more noticeable. Nevertheless, the overall results remain within acceptable bounds. Additionally, we clarify that the exogenous texts in our datasets are collected from real-world sources (e.g., news, public reports), which inevitably contain redundant or partially irrelevant content. Nevertheless, MindTS consistently achieves strong results across multiple realistic datasets, demonstrating its robustness and adaptability to real-world text quality variation.

\begin{table*}[ht!]
\centering
\small
\caption{Results under different types of exogenous text quality variations (all results in \%, best results are highlighted in bold).}
\label{text_quality_variations}
\resizebox{\textwidth}{!}{
\begin{tabular}{c|ccc|ccc|ccc|ccc}
\toprule
\textbf{Method} & \multicolumn{3}{c|}{\textbf{MindTS}} & \multicolumn{3}{c|}{\textbf{MindTS (noisy)}} & \multicolumn{3}{c|}{\textbf{MindTS (irrelevant)}} & \multicolumn{3}{c}{\textbf{MindTS (incomplete)}} \\
\midrule
\textbf{Metric} & Aff-F & V-PR & V-ROC & Aff-F & V-PR & V-ROC & Aff-F & V-PR & V-ROC & Aff-F & V-PR & V-ROC \\
\midrule
MDT & \textbf{89.19} & \textbf{65.44} & \textbf{83.02} & 87.45 & 62.12 & 81.02 & 86.54 & 60.10 & 80.17 & 87.85 & 61.95 & 80.59 \\
Energy & \textbf{74.37} & \textbf{50.36} & \textbf{74.44} & 72.53 & 47.42 & 72.16 & 72.21 & 46.52 & 71.29 & 73.13 & 47.62 & 73.50 \\
\bottomrule
\end{tabular}}
\end{table*}

\begin{table*}[ht!]
\centering
\small
\caption{Results of the noisy method under different noise strengths (s).}
\label{noise_strengths}
\resizebox{\textwidth}{!}{
\begin{tabular}{c|ccc|ccc|ccc|ccc|ccc}
\toprule
\textbf{Dataset} & \multicolumn{3}{c|}{\textbf{MindTS}} & \multicolumn{3}{c|}{\textbf{s = 0.2}} & \multicolumn{3}{c|}{\textbf{s = 0.4}} & \multicolumn{3}{c|}{\textbf{s = 0.6}} & \multicolumn{3}{c}{\textbf{s = 0.8}} \\
\midrule
\textbf{Metric} & Aff-F & V-PR & V-ROC & Aff-F & V-PR & V-ROC & Aff-F & V-PR & V-ROC & Aff-F & V-PR & V-ROC & Aff-F & V-PR & V-ROC \\
\midrule
MDT & \textbf{89.19} & \textbf{65.44} & \textbf{83.02} & 87.45 & 62.12 & 81.02 & 86.38 & 60.03 & 79.46 & 83.71 & 58.71 & 78.69 & 79.84 & 55.21 & 76.46 \\
Energy & \textbf{74.37} & \textbf{50.36} & \textbf{74.44} & 72.53 & 49.42 & 73.16 & 71.17 & 48.26 & 70.03 & 69.35 & 47.22 & 69.97 & 66.69 & 45.02 & 66.13 \\
\bottomrule
\end{tabular}}
\end{table*}

\begin{table*}[ht!]
\centering
\small
\caption{Results under combined types of exogenous text quality variations.}
\label{combined_text_quality}
\resizebox{\textwidth}{!}{
\begin{tabular}{c|ccc|ccc|ccc|ccc|ccc}
\toprule
\textbf{Method} 
& \multicolumn{3}{c|}{\textbf{MindTS}} 
& \multicolumn{3}{c|}{\shortstack{\textbf{MindTS}\\\textbf{(noisy + irrelevant)}}}
& \multicolumn{3}{c|}{\shortstack{\textbf{MindTS}\\\textbf{(noisy + incomplete)}}}
& \multicolumn{3}{c|}{\shortstack{\textbf{MindTS}\\\textbf{(irrelevant + incomplete)}}}
& \multicolumn{3}{c}{\shortstack{\textbf{MindTS}\\\textbf{(three types)}}} \\
\midrule
\textbf{Metric} & Aff-F & V-PR & V-ROC & Aff-F & V-PR & V-ROC & Aff-F & V-PR & V-ROC & Aff-F & V-PR & V-ROC & Aff-F & V-PR & V-ROC \\
\midrule
MDT    & \textbf{89.19} & \textbf{65.44} & \textbf{83.02} & 84.91 & 57.29 & 78.23 & 85.28 & 55.60 & 78.34 & 84.81 & 58.34 & 77.68 & 81.64 & 52.17 & 75.28 \\
Energy & \textbf{74.37} & \textbf{50.36} & \textbf{74.44} & 70.33 & 44.71 & 70.18 & 71.89 & 45.86 & 69.96 & 69.74 & 45.26 & 70.51 & 66.05 & 42.08 & 66.88 \\
\bottomrule
\end{tabular}}
\end{table*}

\newpage
\section{Alignment visualization analysis}
\label{F:Alignment}

In this section, we visualize the learned similarity matrix between time series and text representations before and after alignment (see Figure~\ref{similarity}). Before alignment, the similarity distribution appears scattered, indicating weak semantic correspondence between modalities. After alignment, the similarity becomes more concentrated along the diagonal, revealing clear associations between relevant time series and text representations. This demonstrates that the proposed alignment module successfully establishes cross-modal consistency, thereby enhancing the model’s ability to utilize textual information.

\begin{figure}[htbp]
  \centering
  \includegraphics[width=0.6\linewidth]{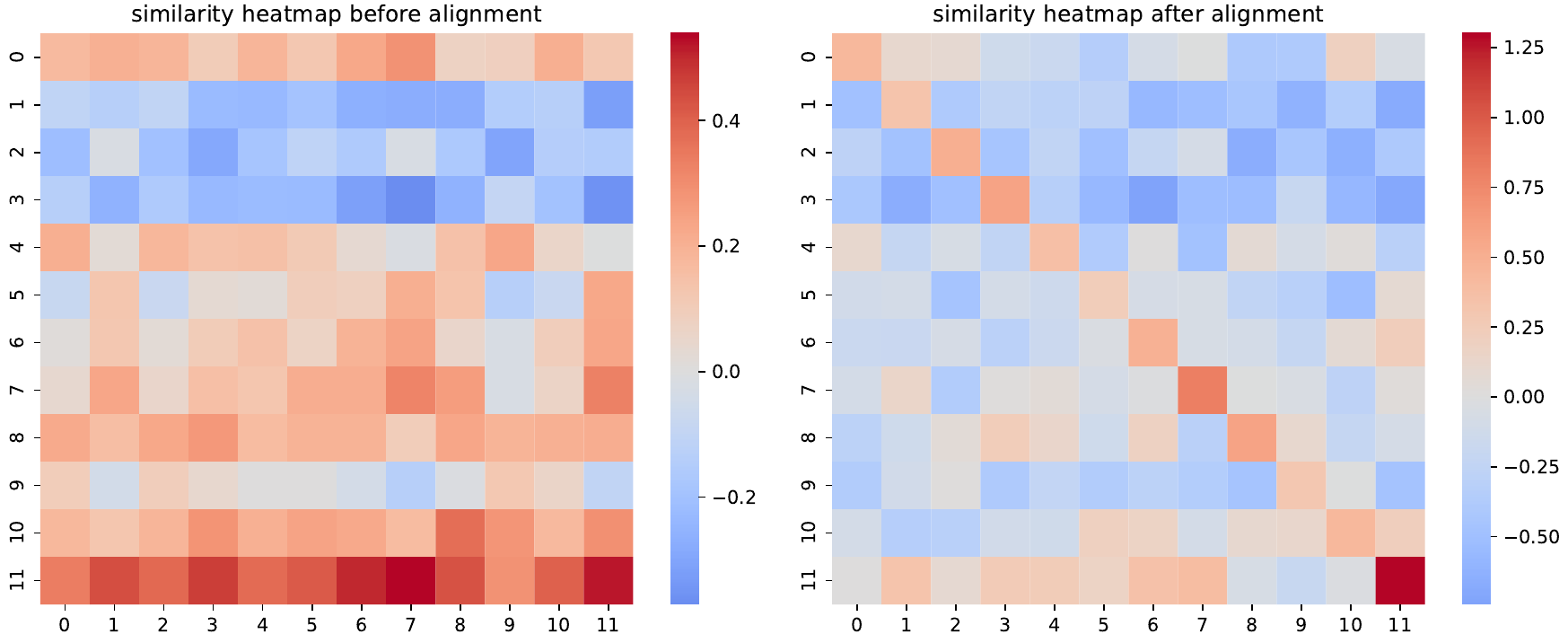}
  \caption{Visualization comparisons between before and after alignment. The x-axis denotes time representations of each patch, while the y-axis represents text representations of each patch.}
  \label{similarity}
\end{figure}

In addition, we present qualitative visualizations that illustrate how alignment improves anomaly detection decisions. Specifically, we compare the anomaly score with and without alignment, as shown in Figure~\ref{without_alignment}. The first row shows the original data with ground-truth anomaly positions, the second row displays the anomaly scores of the model with multimodal alignment, and the third row presents the anomaly scores of the model without multimodal alignment. When alignment is applied, the model exhibits more distinguishable anomaly scores around true abnormal regions. In contrast, the model without alignment fails to effectively increase the gap between the anomaly scores of normal points and anomalies, leading to many false positives.

\begin{figure}[htbp]
  \centering
  \includegraphics[width=0.8\linewidth]{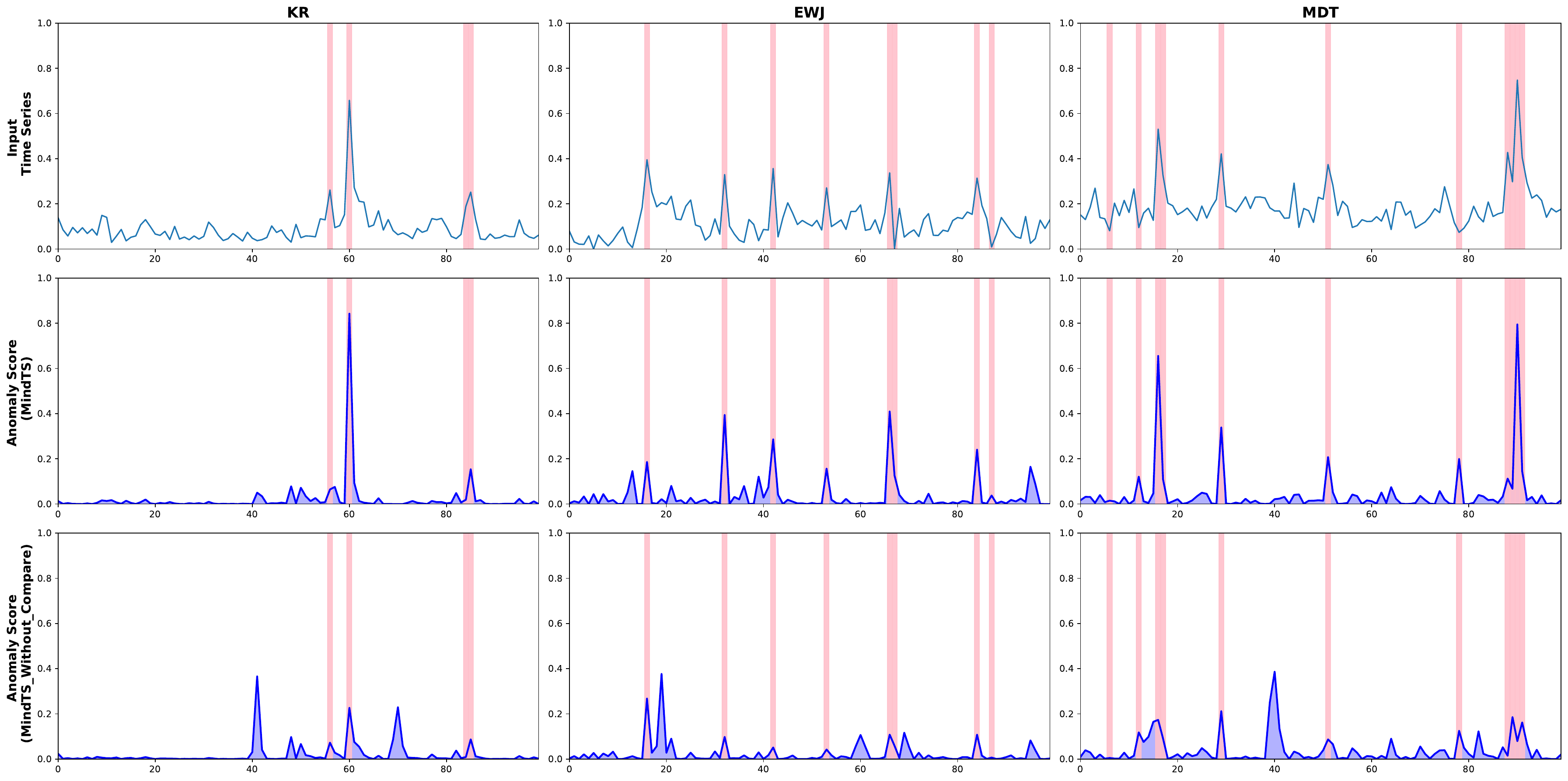}
  \caption{Visualization comparisons of the anomaly score between with and without alignment.}
  \label{without_alignment}
\end{figure}

\section{Prompt design analysis}
\label{G:Prompt}

The endogenous prompt in MindTS is constructed using the template shown in Figure~\ref{ablation_prompt}. Inspired by the analyses in TimeLLM~\cite{jin2023time} and HiTime~\cite{tao2024hierarchical}, we explore how different prompt designs affect model performance as follows:

(1) Template variant (MindTS-T): prompts generated using a different template formulation, as shown in Figure~\ref{ablation_prompt_1};

\begin{wrapfigure}[13]{r}{0.45\textwidth}
  \centering
  \raisebox{0pt}[\height][\depth]{\includegraphics[width=0.45\textwidth]{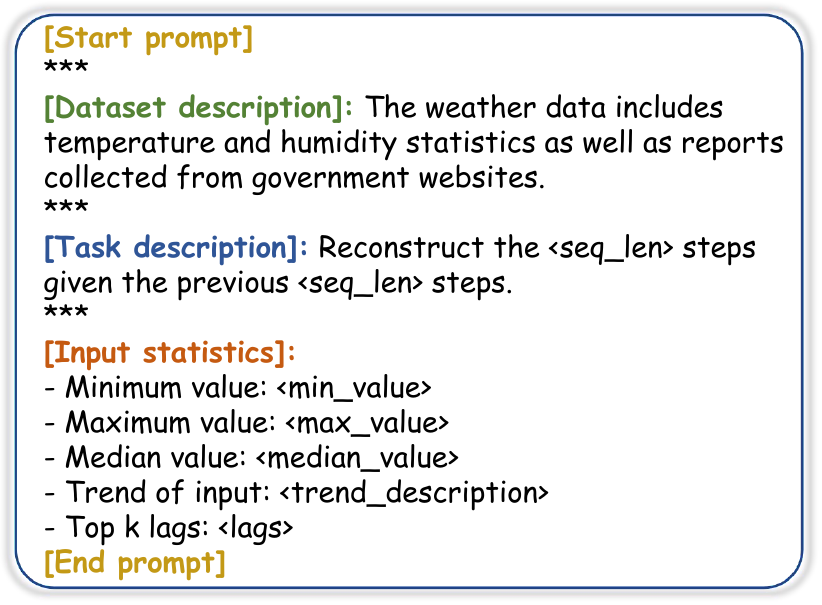}}
  \caption{Prompt example.}
  \label{ablation_prompt}
\end{wrapfigure}

(2) Statistical variant (MindTS-S): removing parts of the statistical descriptors: (a) keeping only dataset description, (b) removing min, max, median values, (c) removing trend information, (d) removing lag information;

(3) Temporal granularity variant (MindTS-TG): changing endogenous text generation from the per-patch level to the per-sample level.

As shown in Table~\ref{Prompt}, altering only the template formulation while keeping the content unchanged has almost no impact on performance. When a few statistical descriptors are removed, the performance decline is minor. However, when most of the statistics are omitted (MindTS-S(a)), performance drops more noticeably, as the generated endogenous text becomes too sparse to convey meaningful information. Changing the temporal granularity of endogenous prompt generation also leads to noticeable performance differences, primarily because coarse-grained endogenous text weakens MindTS’s ability to achieve fine-grained alignment. These results together confirm MindTS's robustness to reasonable variations in prompt design. We clarify that the selected statistical descriptors represent fundamental characteristics of time series. Therefore, the performance improvement does not depend on carefully tuning the prompts but rather arises from the intrinsic capabilities of the proposed model.

\begin{wrapfigure}[11]{r}{0.45\textwidth}
  \centering
  \raisebox{0pt}[\height][\depth]{\includegraphics[width=0.45\textwidth]{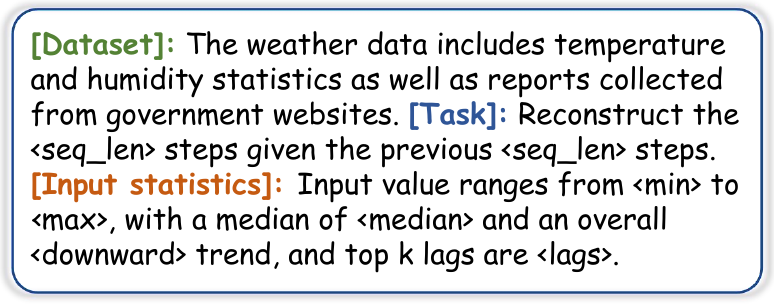}}
  \caption{Different template formulation.}
  \label{ablation_prompt_1}
  \vspace{-4mm}
\end{wrapfigure}

\begin{table*}[ht!]
\centering
\small
\caption{Ablation on endogenous prompt design across different variants.}
\label{Prompt}
\resizebox{\textwidth}{!}{
\begin{tabular}{c|c|ccccc|cc}
\toprule
\textbf{Dataset} & \textbf{Metric} 
& \textbf{MindTS} 
& \textbf{MindTS-S(a)} 
& \textbf{MindTS-S(b)} 
& \textbf{MindTS-S(c)} 
& \textbf{MindTS-S(d)} 
& \textbf{MindTS-T} 
& \textbf{MindTS-TG} \\
\midrule

\multirow{3}{*}{\textbf{MDT}} 
& Aff-F  
& 89.19 & 83.22 & 87.59 & 87.62 & 87.14 & 88.38 & 84.78 \\
& V-PR   
& 65.44 & 57.27 & 64.05 & 64.49 & 64.33 & 64.65 & 58.49 \\
& V-ROC  
& 83.02 & 78.72 & 82.75 & 82.13 & 82.90 & 83.19 & 80.75 \\
\midrule

\multirow{3}{*}{\textbf{Energy}} 
& Aff-F  
& 74.37 & 66.65 & 73.74 & 74.03 & 73.85 & 74.21 & 69.31 \\
& V-PR   
& 50.36 & 44.29 & 48.77 & 49.25 & 49.81 & 49.34 & 45.32 \\
& V-ROC  
& 74.44 & 68.59 & 72.33 & 71.78 & 72.66 & 72.14 & 67.97 \\
\bottomrule
\end{tabular}}
\end{table*}

\section{Time Series Forecasting Experimental Results}
\label{H:Forecasting}

Although MindTS is primarily designed for anomaly detection, its architectural components are inherently extensible. MindTS proposes a fine-grained time-text semantic alignment mechanism consisting of endogenous text generation, cross-view text fusion, and a multimodal alignment strategy to ensure that time series and text semantics are consistently matched. Since accurate alignment is crucial for multimodal tasks involving heterogeneous semantic spaces, the alignment mechanism in MindTS possesses extensibility. Moreover, MindTS incorporates a content condenser to filter redundant textual information before cross-modal interaction. Redundant text is a common challenge in many multimodal applications. Together, these components make MindTS applicable to other multimodal time series applications.

To further evaluate extensibility, MindTS is adapted to time series forecasting and compared with forecasting-oriented baselines~\citep{li2025language, liu2024time}. Beyond anomaly detection, time series forecasting~\citep{tian2024air-dualode, tian2025arrow, mei2025find3, pan2023magicscaler, fengphysics, li2025tsfm, cheng2023weakly, wu2024autocts++, qiu2025dag, liu2025rethinking, wu2025k2vae} is another critical task in temporal data analysis. As shown in Table~\ref{tab: forecasting}, MindTS achieves competitive performance across multiple forecasting datasets, demonstrating that the core components generalize effectively beyond anomaly detection. These results confirm the extensibility of MindTS to other multimodal time series applications.

\begin{table*}[t]
\centering
\caption{Multimodel time series forecasting results with forecasting horizons $F \in \{6, 8, 10, 12\}$.}
\label{tab: forecasting}
\renewcommand{\arraystretch}{1.15}
\resizebox{0.6\textwidth}{!}{%
\begin{tabular}{c c | cc | cc | cc}
\toprule
\multicolumn{2}{c|}{\textbf{Models}} & 
\multicolumn{2}{c}{\textbf{MindTS}} &
\multicolumn{2}{c}{\textbf{Time-MMD}} &
\multicolumn{2}{c}{\textbf{TaTS}} \\
\cline{1-8}
\multicolumn{2}{c|}{\textbf{Metrics}} & mse & mae & mse & mae & mse & mae \\
\midrule

\multirow{5}{*}{\textbf{Agriculture}} 
& 6  & 0.167 & 0.269 & 0.146 & 0.263 & 0.140 & 0.251 \\
& 8  & 0.195 & 0.283 & 0.189 & 0.310 & 0.187 & 0.282 \\
& 10 & 0.228 & 0.316 & 0.254 & 0.320 & 0.244 & 0.320 \\
& 12 & 0.258 & 0.343 & 0.338 & 0.369 & 0.290 & 0.350 \\
& AVG & \textbf{0.212} & 0.303 & 0.232 & 0.316 & 0.215 & \textbf{0.301} \\
\midrule

\multirow{5}{*}{\textbf{Traffic}}
& 6  & 0.157 & 0.225 & 0.162 & 0.242 & 0.174 & 0.239 \\
& 8  & 0.176 & 0.251 & 0.168 & 0.228 & 0.178 & 0.242 \\
& 10 & 0.167 & 0.213 & 0.178 & 0.237 & 0.185 & 0.243 \\
& 12 & 0.181 & 0.237 & 0.188 & 0.246 & 0.189 & 0.242 \\
& AVG & \textbf{0.170} & \textbf{0.232} & 0.174 & 0.239 & 0.179 & 0.238 \\
\midrule

\multirow{5}{*}{\textbf{Economy}}
& 6  & 0.172 & 0.331 & 0.199 & 0.350 & 0.196 & 0.350 \\
& 8  & 0.215 & 0.370 & 0.216 & 0.367 & 0.214 & 0.376 \\
& 10 & 0.215 & 0.363 & 0.224 & 0.373 & 0.223 & 0.367 \\
& 12 & 0.242 & 0.379 & 0.239 & 0.388 & 0.239 & 0.388 \\
& AVG & \textbf{0.211} & \textbf{0.361} & 0.219 & 0.370 & 0.215 & 0.368 \\
\bottomrule
\end{tabular}}
\end{table*}

\section{Analysis of Endogenous Text Generation}
\label{I_Endogenous_Text_Generation}

To assess the actual benefit of the endogenous text generation step, we conducted an ablation study in which the model directly uses $\mathbf{H}_{\text{time}}$ as the query to fuse exogenous text, without endogenous text generation. As shown in Table~\ref{H_{time}}, the model incorporating endogenous text achieves clearly better performance. The endogenous text is derived directly from the time series, capturing temporal characteristics that align closely with local patterns. Consequently, fusing endogenous and exogenous texts enables the model to extract supplementary information from exogenous sources that is most relevant to each time patch.

In contrast, directly using $\mathbf{H}_{\text{time}}$ as the query to fuse exogenous text does not yield performance gains. This is likely because the time-series and text modalities inherently reside in different semantic spaces; interacting them directly without prior alignment results in insufficient information extraction due to modality discrepancies.

\begin{table*}[ht!]
\centering
\small
\caption{Ablation study on $\mathbf{H}_{\text{time}}$ as the query to fuse exogenous text, without endogenous text generation (all results in \%, best results are highlighted in bold).}
\label{H_{time}}
\resizebox{\textwidth}{!}{
\begin{tabular}{c|ccc|ccc|ccc|ccc|ccc|ccc}
\toprule
\textbf{Method} & \multicolumn{3}{c|}{ \textbf{Weather}} & \multicolumn{3}{c|}{ \textbf{Energy}} & \multicolumn{3}{c|}{\textbf{Environment}} & \multicolumn{3}{c|}{\textbf{KR}} & \multicolumn{3}{c|}{\textbf{EWJ}} & \multicolumn{3}{c}{\textbf{MDT}} \\
\midrule
\textbf{Metric} & Aff-F &  V-PR &  V-ROC & Aff-F &  V-PR &  V-ROC & Aff-F&  V-PR & V-ROC & Aff-F&  V-PR &  V-ROC & Aff-F &  V-PR & V-ROC & Aff-F &  V-PR &  V-ROC \\
\midrule
MindTS ($\mathbf{H}_{\text{time}}$ as query) & 79.35 & 56.21 & 80.13 & 72.69 &47.52 & 71.47 &82.67 &52.45 &92.87 &87.93 & 48.27 &89.02 &82.33 &48.98 &81.48 &87.85 &64.33 &81.26\\
MindTS & \textbf{82.66} &  \textbf{57.48} &  \textbf{82.64} & \textbf{74.37} &  \textbf{50.36} &  \textbf{74.44} & \textbf{85.29} &  \textbf{56.79}&  \textbf{93.78} & \textbf{90.28} &  \textbf{53.15} &  \textbf{89.86} & \textbf{83.89} &  \textbf{50.42} &  \textbf{84.12} & \textbf{89.19} &  \textbf{65.44} &  \textbf{83.02} \\
\bottomrule
\end{tabular}}
\end{table*}

\section{Analysis of Multimodal alignment as a standalone objective}
\label{J_standalone_objective}

\begin{wrapfigure}[10]{r}{0.35\textwidth}
  \centering
  \raisebox{0pt}[\height][\depth]{\includegraphics[width=0.35\textwidth]{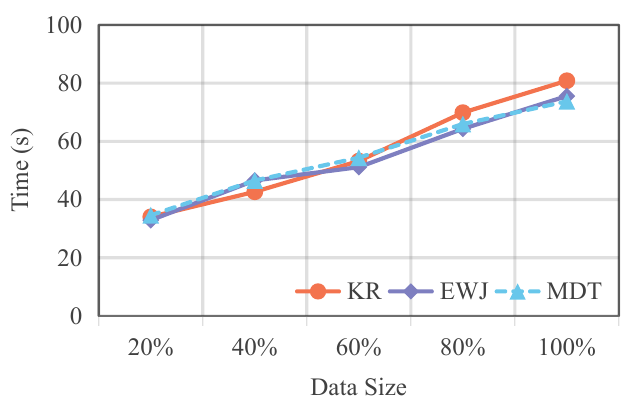}}
  \caption{Scalability comparison under different data sizes.}
  \label{Scalability}
  \vspace{-5mm}
\end{wrapfigure}

In this section, we compare multimodal alignment trained as a standalone objective with the auxiliary alignment setting used in MindTS. As shown in Table~\ref{tab: standalone objective}, training alignment as a standalone objective does not lead to performance improvement. We clarify that although optimizing multimodal alignment alone may strengthen the alignment depth, it neglects the specific role of the learned representations in anomaly detection. In contrast, jointly optimizing alignment with other objectives allows multiple losses to guide and regularize each other, enabling the model to emphasize features that are both semantically aligned and task-relevant.

\begin{table*}[ht!]
\centering
\small
\caption{Multimodal alignment as a standalone objective (all results in \%, best in Bold).}
\label{tab: standalone objective}
\begin{tabular}{c|ccc|ccc}
\toprule
\textbf{Datasets} & \multicolumn{3}{c|}{MindTS (auxiliary)} & \multicolumn{3}{c}{MindTS (standalone)} \\
\midrule
\textbf{Metric} & Aff-F & V-PR & V-ROC & Aff-F & V-PR & V-ROC\\
\midrule
\textbf{MDT} & 89.19 &65.44 & 83.02 & 80.33 &57.18 & 76.35  \\
\textbf{Energy} &74.37 &50.36 & 74.44 &70.28 &44.89 & 69.61  \\
\bottomrule
\end{tabular}
\end{table*}

\section{Scalability Studies}
\label{H_Scalability_Studies}

We would like to clarify that due to the current difficulty in obtaining larger-scale multimodal datasets, we evaluate the scalability of MindTS by varying the proportion of training data (20\%, 40\%, 60\%, 80\%, 100\%) on the available datasets. As shown in Figure~\ref{Scalability}, the running time increases approximately sub-linearly with the data size. As the data volume grows, total training time increases because more iterations are required to process larger datasets. Compared with baselines, MindTS has a certain advantage in training time (see Table~\ref{tab:training_time}). These observations collectively demonstrate that MindTS maintains reasonable computational scalability and remains practical for real-world multimodal time series applications.

\begin{table*}[ht!]
\centering
\small
\caption{Training time (s) comparison with baselines.}
\label{tab:training_time}
\begin{tabular}{c|c|c|c|c|c}
\toprule
\textbf{Datasets} & \textbf{MindTS} & \textbf{GPT4TS*} & \textbf{LLMMixer*} & \textbf{ModernTCN*} & \textbf{UniTime*} \\
\midrule
\textbf{KR}  & 80.83 & 81.17 & 84.35 & 19.72 & 77.98 \\
\textbf{EWJ} & 75.56 & 73.66 & 70.09 & 18.65 & 70.23 \\
\textbf{MDT} & 73.76 & 74.49 & 70.01 & 18.61 & 68.53 \\
\bottomrule
\end{tabular}
\end{table*}

\end{document}